\theoremstyle{plain}
\newtheorem{theorem}{Theorem}[section]
\newtheorem{proposition}[theorem]{Proposition}
\newtheorem{lemma}[theorem]{Lemma}
\newtheorem{corollary}[theorem]{Corollary}
\theoremstyle{definition}
\theoremstyle{remark}
\newcommand{\ie}{i.e.}
\newcommand{\eg}{e.g.}
\newcommand{\method}{Decoupled Hypershperical Energy Loss}
\newcommand{\acronym}{DHEL}
\newcommand*{\addFileDependency}[1]{
  \typeout{(#1)}
  \@addtofilelist{#1}
  \IfFileExists{#1}{}{\typeout{No file #1.}}
}
\newcommand{\Linear}{K^{\text{lin}}}
\newcommand{\linear}{\kappa^{\text{lin}}}
\newcommand{\Gauss}{K^{\text{gauss}}}
\newcommand{\gauss}{\kappa^{\text{gauss}}}
\newcommand{\Riesz}{K^{\text{riesz}}}
\newcommand{\riesz}{\kappa^{\text{riesz}}}
\newcommand{\Logar}{K^{\text{log}}}
\newcommand{\logar}{\kappa^{\text{log}}}
\newcommand{\Lgentotal}{L_{\textnormal{CL-sym}}}
\newcommand{\Lgen}{L_{\textnormal{CL}}}
\newcommand{\Lgeninfoncea}{L_{\textnormal{a}}}
\newcommand{\Lgeninfonceb}{L_{\textnormal{c}}}
\newcommand{\Lgensimclr}{L_{\textnormal{b}}}
\newcommand{\Linfoncea}{L_{\textnormal{InfoNCE}}}
\newcommand{\Lsimclr}{L_{\textnormal{SimCLR}}}
\newcommand{\Ldcl}{L_{\textnormal{DCL}}}
\newcommand{\Lours}{L_{\textnormal{DHEL}}}
\newcommand{\Lgenkertotal}{L_{\textnormal{KCL-sym}}}
\newcommand{\Lgenker}{L_{\textnormal{KCL}}}
\newcommand{\funsimple}{f}
\newcommand{\fun}{f_{\boldsymbol{\theta}}}
\newcommand{\funstar}{f_{\boldsymbol{\theta}^*}}
\newcommand{\ppos}{p_{+}}
\newcommand{\pdata}{p}
\newcommand{\pushfun}{f_{\boldsymbol{\theta\#}}}
\newcommand{\psim}{p_{\text{sim}}}
\newcommand{\qsim}{q_{\text{sim}}}
\DeclareMathOperator{\E}{\mathbb{E}}
\DeclareMathOperator{\argmin}{argmin} 
\DeclareMathOperator{\argmax}{argmax}
\renewcommand{\b}[1]{\mathbf{#1}}
\newcommand{\bU}{\b{U}}
\newcommand{\bV}{\b{V}}
\newcommand{\bX}{\b{X}}
\newcommand{\bY}{\b{Y}}
\newcommand{\bp}{\b{p}}
\newcommand{\bu}{\b{u}}
\newcommand{\bv}{\b{v}}
\newcommand{\bx}{\b{x}}
\newcommand{\by}{\b{y}}
\newcommand{\cD}{\mathcal{D}}
\newcommand{\cS}{\mathcal{S}}
\newcommand{\cT}{\mathcal{T}}
\newcommand{\cX}{\mathcal{X}}
\newcommand{\cZ}{\mathcal{Z}}
\icmltitlerunning{
Bridging Mini-Batch and Asymptotic Analysis in Contrastive Learning: From InfoNCE to Kernel-Based Losses}
\begin{document}

\twocolumn[
\icmltitle{Bridging Mini-Batch and Asymptotic Analysis in Contrastive Learning:\\ From InfoNCE to Kernel-Based Losses}



\icmlsetsymbol{equal}{*}

\begin{icmlauthorlist}
\icmlauthor{Panagiotis Koromilas}{equal,di}
\icmlauthor{Giorgos Bouritsas}{equal,di,arch}
\icmlauthor{Theodoros Giannakopoulos}{ncsrd}
\icmlauthor{Mihalis A. Nicolaou}{cyi}
\icmlauthor{Yannis Panagakis}{di,arch}
\end{icmlauthorlist}

\icmlaffiliation{di}{Department of Informatics and Telecommunications, National and Kapodistrian University of Athens}
\icmlaffiliation{arch}{Archimedes AI/Athena Research Center}
\icmlaffiliation{cyi}{The Cyprus Institute}
\icmlaffiliation{ncsrd}{NCSR "Demokritos"}
\icmlcorrespondingauthor{Panagiotis Koromilas}{pakoromilas@di.uoa.gr}

\icmlkeywords{Self-supervised learning, contrastive learning, hyperspherical energy minimisation, kernel contrastive learning}

\vskip 0.3in
]
\printAffiliationsAndNotice{\icmlEqualContribution}




\begin{abstract}
What do different contrastive learning (CL) losses actually optimize for? Although multiple CL methods have demonstrated remarkable representation learning capabilities, the differences in their inner workings remain largely opaque. In this work, we analyse several CL families and prove that, under certain conditions, they admit the same minimisers when optimizing either their batch-level objectives or their expectations asymptotically. In both cases, an intimate connection with the hyperspherical energy minimisation (HEM) problem resurfaces. Drawing inspiration from this, we introduce a novel CL objective, coined Decoupled Hyperspherical Energy Loss (DHEL). DHEL simplifies the problem by decoupling the target hyperspherical energy from the alignment of positive examples while preserving the same theoretical guarantees. Going one step further, we show the same results hold for another relevant CL family, namely kernel contrastive learning (KCL), with the additional advantage of the expected loss being independent of batch size, thus identifying the minimisers in the non-asymptotic regime. Empirical results demonstrate improved downstream performance and robustness across combinations of different batch sizes and hyperparameters and reduced dimensionality collapse, on several computer vision datasets.\\
Code: \href{https://github.com/pakoromilas/DHEL-KCL.git}{github.com/pakoromilas/DHEL-KCL.git}
\end{abstract}

\section{Introduction}

\looseness-1 Contrastive learning has revolutionised self-supervised learning of representations from unlabelled data.
Nonetheless, optimising contrastive losses exhibits significant challenges in practice. These include the need for \textit{large batches of negative samples} leading to memory issues \cite{He0WXG20, TianKI20, chen2020simple}, \textit{high sensitivity to the temperature hyperparameter} affecting model performance \cite{wang2021understanding, zhang2021does}, the propensity for \textit{dimensionality collapse} in learned representations \cite{hua2021feature, JingVLT22}, and a reliance on  \textit{sophisticated hard-negative sampling strategies} \cite{RobinsonCSJ21}.

Although there are approaches to understand and address some of the challenges above in isolation, theoretical analyses of CL often use loss functions and assumptions that diverge from those effective in practice (e.g., SimCLR) or depend on conditions often unrealistic in real-world settings, \eg\ infinite batch sizes, conditional independence, or simplified network architectures \cite{SaunshiPAKK19, wang2020understanding, JingVLT22, balestriero2022contrastive, ji2023power}.

This work poses a first step towards bridging the gap between different variants of the classical InfoNCE loss \cite{oord2018representation}. That is, we examine their optimal solutions within two regimes: \textit{the finite regime} concerning losses evaluated on a sampled mini-batch, and \textit{the asymptotic regime of their expectation} (i.e. in the limit of infinite batch size). In the finite case, under a batch size condition, we show multiple InfoNCE variants share the same unique optimal solution attained when (i) positive pairs align perfectly and (ii) representations form a regular simplex inscribed in the sphere, with all pairwise distances equal. Additionally, we show that they have the same asymptotic behaviour, and in turn, the same minimisers: those identified in \cite{wang2020understanding}, i.e. (i) perfect alignment and (ii) uniform distribution on the unit sphere. Interestingly, in both cases, outcome (ii) coincides with the notion of  \textit{minimal hyperspherical energy}.

However, despite commonalities in optima, many variants exhibit notable performance discrepancies. This suggests that optima are difficult to attain in practice. To facilitate optimisation, we introduce a new variant coined as \method\ that fully decouples the two terms reflecting desired properties to optimise. Specifically, we propose simply replacing the classical InfoNCE denominator—see Table \ref{tab:cl_variants}, left—with a denominator involving only negative samples, eliminating dependence on positive counterparts per Eq. \eqref{eq:dhel_loss}. \textit{The resulting alignment and uniformity terms are independent and can in principle be optimised separately}, contrary to existing variants where it is unclear if possible since the terms are coupled.

Moving one step further in the direction of Hyperspherical Energy Minimisation (HEM) in CL, we examine the optima of another family of CL losses, \ie\ Kernel Contrastive Learning (KCL). Kernels first appeared in the CL literature in \cite{li2021self}, where the authors 
introduce a new CL objective based on 
kernel dependence maximisation and
establish a connection with InfoNCE minimisation. In this work, we investigate a general family similar to the one of \cite{li2021self}, and discover that under certain conditions, \textit{mini-batch KCL loss, as well as its expectation, have the same optima as all the analysed InfoNCE variants}. Importantly, KCL enjoys several interesting properties: (1) the expected loss is \textit{independent of the number of negative samples}, 
and (2) \textit{we can identify its minima non-asymptotically}.

We conducted empirical tests on \acronym\ and KCL using different kernel functions meeting necessary conditions. Results show both methods (i) \textit{maintain superior performance across various and small batch sizes}, (ii) \textit{are robust to temperature hyperparameter changes}, and (iii) \textit{utilise more dimensions effectively, addressing th}e dimensionality collapse issue.
 
\looseness-1Our contributions can be summarized as follows:
\begin{itemize}[noitemsep, topsep=0pt]
    \item We prove that different general CL loss families share the same unique optimal solution 
    in the single mini-batch regime when the batch size is no larger than the ambient dimension + 1, as well as in the asymptotic expected case.
    \item We introduce a novel CL loss family that decouples positive from negative samples in the uniformity term, preserves the desired properties and achieves considerable empirical improvements across various metrics.
    \item We establish a connection between Kernel Contrastive Learning and Hyperspherical Energy Minimisation, highlight its theoretical advantages and empirically validate that KCL can be used in place of InfoNCE variants. 
\end{itemize}

\section{Related Work}

\noindent{\textbf{Contrastive Learning.} }Contrastive learning was formally introduced by \cite{chopra2005learning} and was later generalised to the (N+1) tuple loss \cite{sohn2016improved} before the popular InfoNCE loss was introduced in contrastive predictive coding \cite{oord2018representation}. InfoNCE combined with a range of engineering tricks (sampled augmentations, large batch sizes, etc) is the workhorse of modern CL methods \cite{chen2020simple, dwibedi2021little, yeh2022decoupled}. 

However, a range of limitations have been identified.
Downstream performance is sensitive to the temperature hyperparameter, necessitating extensive tuning \cite{wang2021understanding, zhang2021does}. Empirical evidence shows that performance improves with an increased number of negative samples, leading to the requirement for large batch sizes and the incorporation of hard-negative sampling \cite{chen2020simple, tian2020contrastive, he2020momentum, RobinsonCSJ21}. Additionally, there is a tendency for learned representations to use only a fraction of dimensions, not fully exploiting the capacity of the representation space \cite{hua2021feature, JingVLT22}. 

\noindent\textbf{Kernels in CL.} Kernels have been used in a CL for different purposes, including incorporating prior knowledge, conditional sampling of positives and analysing
the induced representation space kernels when optimising SLL objectives 
\cite{pmlr-v202-dufumier23a, kiani2022joint, TsaiLM00MS22, Johnson2023Contrastive, waida2023towards}. Most relevant to our work is the loss of \cite{li2021self} (a regularised version of the Hilbert-Schmidt Independence Criterion - HSIC \cite{gretton2005measuring} - which reducess to a two-term KCL loss under certain conditions) which motivated the theoretical study of the CL generalisation error on downstream tasks through the lens of kernels \cite{waida2023towards}. 

\looseness-1\noindent{\textbf{Optima of CL Objectives}.} It is well known that the CL objective is asymptotically minimised for encoders that produce perfectly aligned and uniformly distributed representations \cite{wang2020understanding}. This is in line with continuous HEM which is also achieved by the uniform distribution \cite{liu2022generalizing}.  \citet{sreenivasan2023minibatch}  showed that in the mini-batch regime, the optimal solution of InfoNCE is achieved when positive representations are perfectly aligned and negatives are placed on
a regular simplex (equivalent to an equiangular tight frame - ETF \cite{benedetto2003finite}), which connects the solution to discrete HEM and is a special case of our Theorem \ref{thrm:mb_main_theorem}. \citet{graf2021dissecting} show that the Supervised CL loss is also minimised when each class embeddings collapse to the vertices of an ETF. Projections, a concept that is included in the contrastive learning pipeline, is also shown to help better minimise the energy \cite{lin2020regularizing}.

\looseness-1\noindent{\textbf{Neural Collapse \& Hyperspherical Energy Minimisation.}}
Neural Collapse, where intra-class embeddings have zero variability and class means align with classifier weights in a simplex ETF during overtraining, was first identified in \cite{papyan2020prevalence} and explored under various training conditions (e.g., MSE, Cross Entropy, data imbalance) in \cite{han2021neural, thrampoulidis2022imbalance, zhu2021geometric, LU2022224, zhou2022optimization}. \citet{LiuYWS23} generalised the notion of Neural Collapse by showing that the class means converge to the uniform distribution in the asymptotic case, thus further enhancing the connection between optimization of supervised deep learning methods and the energy minimisation problem. Moreover, energy minimisation has been effective in NN regularisation, promoting neuron diversity on a hypersphere to avoid correlated neurons \cite{liu2018learning, liu2021learning, lin2020regularizing}.

The above are strikingly similar to the minima of mini-batch CL objectives with the unifying umbrella being the minimisation of hyperspherical energy, a problem deeply studied (see \cite{borodachov2019discrete}).
Notably, solutions have been identified for specific scenarios, in the discrete context, such as when the number of points $M$ is less than or equal to the ambient dimension $d$+1, and for $M = 2d$, as well as continuous \cite{liu2022generalizing}. We will discuss these cases several times throughout the paper.

\section{Preliminaries and Notation}\label{sec:prelims}

\paragraph{Contrastive Learning setup.}

 \textit{Self-supervised contrastive learning (SSCL)} is a paradigm aiming to learn data representations without having access to labels, but based solely on prior knowledge about similarities between inputs, or more strictly speaking, about \textit{downstream task invariances}. 
 
 Formally, let $\cX$ be a (measurable) space, \ie\ the \textit{input space} where our data reside and another (measurable) space $\cZ$, the \textit{embedding space}.
Let $f_{\boldsymbol{\theta}}:\cX \to \cZ$ be an encoder (\eg\ a neural network) parametrised by a set of parameters ${\boldsymbol{\theta}} \in {\Theta}$ mapping datapoints to representations. In our setup $\cZ = \mathbb{S}^{d-1} = \{|\bu| \in \mathbb{R}^d \mid \|\bu\| = 1\}$ the unit sphere. We will be using the symbols $\bx, \by$ for input datapoints and $\bu, \bv$ for representations. 

Additionally, denote the (unknown) underlying data distribution with $\pdata$ (on $\cX$). Further, consider a distribution of \textit{positive pairs} with $\ppos$ (on $\cX \times \cX$ and marginals equal to $\pdata$), which incorporates all the data symmetries, \ie\ its support are all pairs of data that are considered equivalent w.r.t. downstream tasks. We will denote the pushforward measures induced by $f$ with $f_\#\pdata$ and (with slight abuse of notation) $f_\#\ppos$ where $f$ is applied element-wise to $\bx$ and $\by$.

Denote with $\bX = [\bx_1; \dots; \bx_M] \in \cX^M $ a collection of $M$ input datapoints and with $\bU = \left[\bu_1; \dots; \bu_M\right] \in \mathbb{R}^{d \times M}$, 
a collection of $M$ representations. We will be also using the following shorthand $f_{\boldsymbol{\theta}}(\bX) = [f_{\boldsymbol{\theta}}(\bx_1); \dots; f_{\boldsymbol{\theta}}(\bx_M)] = \bU$. Also, when we sample $(\bX, \bY)\sim \ppos^M$ we will occasionally write $\hat{\bX}\sim \ppos^M$ instead, with $\hat{\bx}_i = \bx_i$ iff $i \in \{1,\dots,M\}$ and $\hat{\bx}_i = \by_{i-M}$ iff $i \in \{M+1, 2M\}$ (smilarly for $(\bU, \bV)\sim f_\#\ppos$ and $\hat{\bU}$). 
In SSCL, the encoder is trained by optimising an objective that encourages the representations of positive pairs to be close in $\cZ$ and those of negatives to be further. 
In practice, this is performed by iteratively obtaining a sample $(\bX, \bY)$ of $M$ positives from $\ppos$ and computing a \textit{mini-batch loss} denoted with $\Lgen(f_{\boldsymbol{\theta}}(\bX), f_{\boldsymbol{\theta}}(\bY))$, the gradients of which are used to update the parameters $\boldsymbol{\theta}$. This common process can be perceived as aiming to optimise an \textit{expected loss} $\underset{(\bX, \bY)\overset{\text{i.i.d}}{\sim} p^M_{\text{pos}}}{\E}\left[\Lgen(f_{\boldsymbol{\theta}}(\bX), f_{\boldsymbol{\theta}}(\bY))\right]$ with gradient descent by estimating the gradients with Monte Carlo (in this case a single sample is used). For reasons that will become clear later, our interest will revolve around these two viewpoints of the loss, along with a third one, \ie\ the \textit{asymptotic expected loss} $\underset{M \to \infty}{\lim}\underset{(\bX, \bY)\overset{\text{i.i.d}}{\sim} p^M_{\text{pos}}}{\E}\left[\Lgen(f_{\boldsymbol{\theta}}(\bX), f_{\boldsymbol{\theta}}(\bY))\right]$.

\section{Reconciling Contrastive Loss Variants} 

\noindent\textbf{Mini-batch optimisation.} We will start our investigation of the minima of different CL variants from mini-batch losses, whose gradients are used to update the parameters of the encoder. 
We initially focus on the following two formulas of single sample (mini-batch) contrastive losses:

\begin{equation}\label{eq:general_losses}
    \begin{split}
      \Lgeninfoncea(\bU, \bV; \phi, \psi) &= \frac{1}{M} \sum_{i=1}^M \psi \left(\sum_{j=1, j \neq i}^M \phi\left(\left(\bv_j-\bv_i\right)^{\top} \bu_i\right)\right),\\
     \Lgensimclr(\bU, \bV; \phi, \psi) &= \frac{1}{M} \sum_{i=1}^M \psi \left(\sum_{j=1, \atop j \neq i, i+M}^{2M} \phi\left(\left(\hat{\bu}_j-\bv_j\right)^{\top} \bu_i\right)\right), 
    \end{split}
\end{equation}
 where $\psi, \phi : \mathbb{R} \to \mathbb{R}$. These generalise many practical variants such as the original InfoNCE \cite{gutmann2010noise, oord2018representation, wu2018unsupervised, sohn2016improved, chen2020simple} ($\Lgeninfoncea$), SimCLR (or NT-Xent loss) \cite{chen2020simple} and DCL \cite{yeh2022decoupled} ($\Lgensimclr$). The two variants differ in the datapoints that are used to compute the denominator that normalises the similarity between the pair of positive datapoints ($\bu_i, \bv_i$); the former considers half of the datapoints in the batch, while the latter all of them, except $\bu_i$ itself. Its positive counterpart $\bv_i$ may or may not be considered. The exact formulas for each particular method can be found in Table \ref{tab:cl_variants} and Appendix \ref{sec:mb_minima}, Eq. \eqref{eq:special_losses}.

Frequently, a symmetric version of the losses in Eq. \eqref{eq:general_losses} is used, defined as 
${\Lgentotal(\bU, \bV) = \frac{1}{2} \left(\Lgen(\bU, \bV) + \Lgen(\bV, \bU) \right)}$, where we omitted $\phi$ and $\psi$ for brevity. In a very recent work \citet{sreenivasan2023minibatch} studied the optima of InfoNCE for the $M\leq d+1$ case. Here, we generalise their results for all the losses of Eq. \eqref{eq:general_losses} - proof in Appendix \ref{sec:mb_minima}. Formally:

\begin{theorem}\label{thrm:mb_main_theorem}
Consider the following optimisation problem:
\begin{equation}\label{eq: mb_optimisation}
    \underset{\bU, \bV \in (\mathbb{S}^{d-1})^M}{\argmin} \Lgentotal(\bU, \bV),
\end{equation}
where $\bU, \bV$ are tuples of $M$ vectors on the unit $d-1$-sphere and $\Lgentotal$ is the symmetric version of any of the loss functions $\Lgeninfoncea(\cdot, \cdot; \phi, \psi),
\Lgensimclr(\cdot, \cdot; \phi, \psi)$
as defined in  Eq. \eqref{eq:general_losses}. Further, suppose the following conditions: (1) $\phi: \mathbb{R} \to \mathbb{R}$ is \textbf{increasing \& convex}, (2) $\psi: \mathbb{R} \to \mathbb{R}$ is \textbf{increasing \& $\tilde{\psi}(x; \alpha) = \psi\left(\alpha \phi\left(x\right)\right)$ is convex for $\alpha>0$} and (3) ${1<M \leq d + 1}$. Then, the  problem of Eq. \eqref{eq: mb_optimisation} obtains its optimal value when $(\bU, \bV) = (\bU^*, \bV^*)$ with:
\begin{equation}\label{eq:mb_minima}
    \bU^* = \bV^* \ \text{ and } \ \bU^*:
    \  \text{regular $M-1$ simplex}.
\end{equation}
Additionally, (4) if $\psi, \phi$ are \textbf{strictly increasing} and $\tilde{\psi}$ is  \textbf{strictly convex} then all the $(\bU^*, \bV^*)$ that satisfy Eq. \eqref{eq:mb_minima} are the \textbf{unique} optima.
\end{theorem}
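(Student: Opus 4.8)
The plan is to establish a single configuration-independent lower bound on $\Lgentotal$ and then show that a regular $(M-1)$-simplex attains it — and, under the strict hypotheses, is the only configuration that does. Since $\Lgentotal=\tfrac12\big(\Lgen(\bU,\bV)+\Lgen(\bV,\bU)\big)$ and the two summands are interchanged by swapping $\bU\leftrightarrow\bV$, it suffices to lower-bound one term $\Lgen(\bU,\bV)$: the bound and its equality conditions are invariant under the swap, so the symmetrisation merely makes the two terms' equality sets coincide. Fix an anchor $i$, let $x_{ij}=(\bv_j-\bv_i)^\top\bu_i$ range over the admissible $j$ (all $j\neq i$ for $\Lgeninfoncea$; the analogous affine-in-Gram quantities over the $2M$-point index set minus $\{i,i+M\}$ for $\Lgensimclr$), write $n$ for their number and $\bar x_i$ for their mean. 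By convexity of $\phi$ and monotonicity of $\psi$, Jensen gives $\psi\!\big(\sum_j\phi(x_{ij})\big)\ge\psi\!\big(n\,\phi(\bar x_i)\big)=\tilde\psi(\bar x_i;n)$; as $M>1$, $n$ is a fixed positive integer, so $\tilde\psi(\cdot\,;n)$ is convex and a second Jensen step over $i$ yields $\Lgen(\bU,\bV)\ge\tilde\psi\!\big(\tfrac1M\sum_i\bar x_i;\,n\big)$.

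The crux is then to minimise the affine functional $\tfrac1M\sum_i\bar x_i$ over all configurations on $\mathbb{S}^{d-1}$. Expanding the double sum, it decomposes into an alignment part $A=\sum_i\bu_i^\top\bv_i$ and centroid terms in $\|\sum_i\bu_i\|^2$ and $\|\sum_i\bv_i\|^2$; rewriting the relevant combination as $\tfrac14\big(\|\sum_i(\bu_i+\bv_i)\|^2-\|\sum_i(\bu_i-\bv_i)\|^2-M\sum_i\|\bu_i+\bv_i\|^2+M\sum_i\|\bu_i-\bv_i\|^2\big)$ and using (i) $\|\sum_i(\bu_i+\bv_i)\|^2\ge0$, (ii) the power-mean inequality $\|\sum_i(\bu_i-\bv_i)\|^2\le M\sum_i\|\bu_i-\bv_i\|^2$, and (iii) $\bu_i^\top\bv_i\le1$, gives $\tfrac1M\sum_i\bar x_i\ge-\tfrac M{M-1}$ (and the analogous constant, with $M-1$ replaced by $n$, for $\Lgensimclr$). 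Since $\tilde\psi(\cdot\,;n)$ is increasing, this lower-bounds $\Lgentotal$ by $\tilde\psi(-\tfrac M{M-1};n)$. A centred regular $(M-1)$-simplex with $\bU^*=\bV^*$ and all off-diagonal inner products $-\tfrac1{M-1}$ — which exists in $\mathbb{S}^{d-1}$ precisely because $M\le d+1$, i.e. condition (3) — has $x_{ij}\equiv-\tfrac M{M-1}$ and turns every inequality above into an equality, hence attains the bound; this proves the first part. (Taking $\phi(t)=e^{t/\tau}$, $\psi(s)=\log(1+s)$ recovers InfoNCE and the $M\le d+1$ case of \citet{sreenivasan2023minibatch}.)

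For the uniqueness claim (4) the plan is to retrace equality under strict monotonicity/convexity. Strict convexity of $\phi$ forces, for each $i$, all $x_{ij}$ equal; strict convexity of $\tilde\psi$ forces the means $\bar x_i$ equal across $i$; equality in (i)--(iii) forces $\sum_i\bu_i=\sum_i\bv_i=\b 0$ and $\bu_i=\bv_i$ for all $i$. Substituting $\bv_i=\bu_i$, ``all $x_{ij}$ equal / all $\bar x_i$ equal'' gives $\bu_i^\top\bu_j=c$ for one constant $c$ and all $i\neq j$, while $0=\|\sum_i\bu_i\|^2=M+M(M-1)c$ forces $c=-\tfrac1{M-1}$; thus the Gram matrix is that of a centred regular $(M-1)$-simplex, and any two realisations differ by an element of $O(d)$ — exactly \eqref{eq:mb_minima}.

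The step I expect to be the main obstacle is the second paragraph: showing that the affine functional $\tfrac1M\sum_i\bar x_i$ is bounded below \emph{tightly} at the simplex, and — more delicately — that the equality conditions harvested from the chain of Jensen and Cauchy--Schwarz steps are mutually consistent and jointly pin the configuration to the simplex, rather than to some other extremal frame (a lower-rank or non-centred configuration saturating one inequality but not the rest). A subsidiary nuisance is the index bookkeeping for $\Lgensimclr$, where each of the $2M$ augmented points is an anchor whose admissible set excludes itself and its positive partner, so one must check that the same affine reduction still goes through and that the extremal configuration — each simplex vertex now carrying the two coincident points of a positive pair — still saturates all the inequalities.
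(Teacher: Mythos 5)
Your treatment of $\Lgeninfoncea$ is correct, and it diverges from the paper in one genuinely interesting place: after the same two Jensen steps (convexity of $\phi$ inside, convexity of $\tilde\psi(\cdot;n)$ plus monotonicity outside), the paper bounds the resulting bilinear quantity $M\sum_i\bv_i^\top\bu_i-\bv^\top\bu$ by rewriting it as $\bv_{\text{stack}}^\top\big((M\mathbb{I}_M-\mathbf{1}\mathbf{1}^\top)\otimes\mathbb{I}_d\big)\bu_{\text{stack}}$ and invoking the eigendecomposition/spectral norm of that Kronecker product (following Lu--Steinerberger and Sreenivasan et al.), whereas you use the elementary identity $\bv^\top\bu-M\sum_i\bv_i^\top\bu_i=\tfrac14\big(\|\bu+\bv\|^2-\|\bu-\bv\|^2-M\sum_i\|\bu_i+\bv_i\|^2+M\sum_i\|\bu_i-\bv_i\|^2\big)$ (with $\bu=\sum_i\bu_i$, $\bv=\sum_i\bv_i$), which I checked is exact, and three elementary inequalities whose equality cases ($\bu_i=\bv_i$, zero centroid) reproduce precisely the paper's eigenvector and Cauchy--Schwarz equality conditions. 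Your equality bookkeeping (strict convexity of $\phi$ forcing row-constant Gram entries, then the centroid condition forcing $c=-1/(M-1)$) matches the paper; note that, like the paper's own proof, you invoke strict convexity of $\phi$ even though condition (4) as stated only asks for strict convexity of $\tilde\psi$, so this wrinkle is shared rather than introduced by you.

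The genuine gap is the $\Lgensimclr$ branch, which you only sketch and where your parenthetical quantitative claim is wrong. After the Jensen steps the SimCLR per-anchor average is $\frac{\bv^\top\bu_i-(2M-1)\bv_i^\top\bu_i+\bu^\top\bu_i-1}{2(M-1)}$, and at the aligned regular simplex this equals $-\tfrac{M}{M-1}$, not $-\tfrac{M}{2M-2}$: the correct minimum of the averaged argument is again $-\tfrac{M}{M-1}$ (only the multiplier $\alpha=2M-2$ in $\tilde\psi$ changes), so the bound you announce, "the analogous constant with $M-1$ replaced by $n=2M-2$", is strictly larger than the value the simplex attains and is therefore false as a lower bound. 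Moreover your three inequalities (i)--(iii) are tailored to the InfoNCE-a functional and do not transfer verbatim, because the SimCLR functional carries an extra $\|\sum_i\bu_i\|^2$ term and the coefficient $2M-1$ in front of $\sum_i\bv_i^\top\bu_i$; you need a different decomposition. The paper handles this by diagonalising $(2M-1)\mathbb{I}_M-\mathbf{1}\mathbf{1}^\top$ (eigenvalues $2M-1$ and $M-1$) and combining Cauchy--Schwarz with the triangle inequality on $\bu_{\text{stack}}+\bv_{\text{stack}}$; an elementary repair in your spirit is to bound the symmetrised quantity directly, e.g.\ $2(2M-1)\sum_i\bu_i^\top\bv_i+2M-\|\bu+\bv\|^2\le 4M^2$ using only $\bu_i^\top\bv_i\le1$ and $\|\bu+\bv\|^2\ge0$, whose equality cases are again $\bu_i=\bv_i$ and zero centroid. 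As written, though, the SimCLR half of the theorem is not established by your proposal.
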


\begin{corollary}  The mini-batch CL loss functions $\Linfoncea$, 
 $\Lsimclr$, $\Ldcl$ have the \textbf{same unique minima } on the unit sphere when $1<M\leq d+1$, \ie\ all the optimal solutions of Eq. \eqref{eq: mb_optimisation} will satisfy the properties of Eq. \eqref{eq:mb_minima}.
\end{corollary}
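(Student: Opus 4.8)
\emph{Proof proposal.} Granting Theorem~\ref{thrm:mb_main_theorem}, the Corollary is an instantiation: each of $\Linfoncea$, $\Lsimclr$, $\Ldcl$ is a special case of Eq.~\eqref{eq:general_losses} with $\phi(t)=e^{t/\tau}$ — $\Linfoncea$ being an $\Lgeninfoncea$, and $\Lsimclr,\Ldcl$ being instances of $\Lgensimclr$ — with $\psi(s)=\log(1+s)$ for $\Linfoncea,\Lsimclr$ (the positive term survives in the denominator, contributing the $+1$) and $\psi(s)=\log s$ for $\Ldcl$ (decoupled denominator). One then checks the hypotheses: $\phi$ is strictly increasing and strictly convex, $\psi$ is strictly increasing, and $\tilde\psi(x;\alpha)=\psi(\alpha e^{x/\tau})$ is convex — strictly so for $\Linfoncea,\Lsimclr$, while for $\Ldcl$ it is affine and the uniqueness part leans on strict convexity of $\phi$ instead. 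Thus all three satisfy (1)--(4), Theorem~\ref{thrm:mb_main_theorem} applies verbatim, and the common unique minimisers are those of Eq.~\eqref{eq:mb_minima}. The real content is the Theorem, which I would prove as follows.

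\emph{Step 1 --- Jensen reduction to a Gram-level scalar.} Take $\Lgeninfoncea$ and write $a_{ij}=(\bv_j-\bv_i)^\top\bu_i$. For each anchor $i$, convexity of $\phi$ gives $\sum_{j\neq i}\phi(a_{ij})\ge(M-1)\phi(\bar a_i)$ with $\bar a_i=\tfrac1{M-1}\sum_{j\neq i}a_{ij}$; applying the increasing $\psi$ and then Jensen for the convex $\tilde\psi(\cdot;M-1)$ across $i$ yields
\[
\Lgeninfoncea(\bU,\bV)\ \ge\ \tfrac1M\textstyle\sum_i\tilde\psi(\bar a_i;M-1)\ \ge\ \tilde\psi(\bar a;M-1),\qquad \bar a=\tfrac1M\textstyle\sum_i\bar a_i .
\]
Since $\bar a$ is symmetric in $(\bU,\bV)$, averaging with the $\bU\!\leftrightarrow\!\bV$ bound gives $\Lgentotal(\bU,\bV)\ge\tilde\psi(\bar a;M-1)$, and collapsing the double sum gives $\bar a=\tfrac1{M-1}\big(M\,\bar\bu^\top\bar\bv-\sum_i\bu_i^\top\bv_i\big)$ with $\bar\bu=\tfrac1M\sum_i\bu_i$. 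The $\Lgensimclr$ version runs the same way with $2M-2$ negatives per anchor; after symmetrisation its extra cross terms reassemble into a $\|\bar\bu+\bar\bv\|^2$ contribution, and the same template applies.

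\emph{Step 2 --- minimising the scalar bound and matching it with the simplex.} As $\psi,\phi$ are increasing, $\tilde\psi(\cdot;M-1)$ is increasing, so it suffices to minimise $\bar a$. Using the polarisation identity on $\bar\bu^\top\bar\bv$ together with $1-\bu_i^\top\bv_i=\tfrac12\|\bu_i-\bv_i\|^2$ gives the sum-of-squares estimate
\[
M\,\bar\bu^\top\bar\bv-\textstyle\sum_i\bu_i^\top\bv_i+M\ \ge\ \tfrac1M\big\|\textstyle\sum_i\tfrac{\bu_i+\bv_i}{2}\big\|^2+\tfrac14\textstyle\sum_i\|\bu_i-\bv_i\|^2\ \ge\ 0 ,
\]
so $\bar a\ge-\tfrac{M}{M-1}$, with equality exactly when $\bU=\bV$ and $\sum_i\bu_i=\b{0}$. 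When $M\le d+1$ a regular $(M-1)$-simplex $\bU^*\subset\mathbb{S}^{d-1}$ exists with $\bu_i^{*\top}\bu_j^*=-\tfrac1{M-1}$ $(i\neq j)$ and $\sum_i\bu_i^*=\b{0}$; plugging $(\bU^*,\bU^*)$ into $\Lgentotal$, each inner sum is $(M-1)\phi(-\tfrac{M}{M-1})$ and the loss equals $\tilde\psi(-\tfrac{M}{M-1};M-1)$ --- precisely the lower bound. Hence $(\bU^*,\bU^*)$ is a global minimiser, establishing Eq.~\eqref{eq:mb_minima}.

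\emph{Step 3 --- uniqueness, and the expected sticking point.} Under the strict hypotheses I would trace back the equality cases. Attaining the minimum forces equality in the Step-2 estimate, hence $\bU=\bV$ and $\sum_i\bu_i=\b{0}$; and equality in the inner Jensen for every anchor, i.e.\ $\bu_i^\top\bu_j$ constant over $j\neq i$ --- by symmetry of the Gram matrix this upgrades to a single constant $c$ on all off-diagonals, and $0=\|\sum_i\bu_i\|^2=M+M(M-1)c$ forces $c=-\tfrac1{M-1}$, i.e.\ the regular simplex. I expect the delicate part to be exactly this bootstrap: arranging that all the inequalities are tight \emph{simultaneously} only at the simplex (and not merely at centroid-zero configurations, which for $M=d+1\ge 4$ form a strictly larger set), tracking which of $\phi$, $\tilde\psi$ must be \emph{strictly} convex at each equality step, and noting that $M\le d+1$ enters only through feasibility of the simplex --- the lower bound itself is dimension-free. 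Running the $2M$-index bookkeeping for $\Lgensimclr$ in parallel is the one additional chore.
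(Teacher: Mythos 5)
Your proof of the Corollary proper is the same as the paper's: identify $\Linfoncea$, $\Lsimclr$, $\Ldcl$ as instances of $\Lgeninfoncea$ and $\Lgensimclr$ with $\phi(x)=e^{x/\tau}$ and $\psi=\log(1+\cdot)$ or $\log(\cdot)$, check monotonicity/convexity, and invoke Theorem~\ref{thrm:mb_main_theorem}; on one point you are in fact more careful than the paper, since for $\Ldcl$ the map $\tilde\psi$ is affine (not strictly convex), so condition (4) as literally stated fails, and, as you observe, uniqueness really rests on strict convexity of $\phi$ --- which is exactly what the paper's proof of the theorem uses at its equality step (a), while its corollary proof simply asserts that (1)--(4) all hold. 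Where you genuinely diverge is in your appended sketch of the theorem: in place of the paper's reduction to $\bv_{\text{stack}}^{\top}\left(\left(M\mathbb{I}_M-\mathbf{1}_M\mathbf{1}_M^{\top}\right)\otimes\mathbb{I}_d\right)\bu_{\text{stack}}$ and its spectral-norm/Cauchy--Schwarz bound borrowed from \cite{sreenivasan2023minibatch,LU2022224}, you bound the averaged inner argument $\bar a$ by the elementary estimate $M\bar\bu^\top\bar\bv-\sum_i\bu_i^\top\bv_i+M\ \ge\ \tfrac1M\bigl\|\sum_i\tfrac{\bu_i+\bv_i}{2}\bigr\|^2+\tfrac14\sum_i\|\bu_i-\bv_i\|^2\ \ge\ 0$, which I verified (it reduces to $\|\sum_i(\bu_i-\bv_i)\|^2\le M\sum_i\|\bu_i-\bv_i\|^2$) and which yields exactly the paper's equality conditions $\bU=\bV$, $\sum_i\bu_i=\b{0}$; your Gram-matrix bootstrap to $\bu_i^\top\bu_j=-\tfrac1{M-1}$ then matches Eq.~\eqref{eq:ETF}. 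Your route buys a dimension-free, eigenvalue-free argument that makes clear $M\le d+1$ enters only through existence of the simplex, while the paper's Kronecker/spectral formulation transfers more mechanically to the $\Lgensimclr$ case (via $(2M-1)\mathbb{I}_M-\mathbf{1}_M\mathbf{1}_M^{\top}$ and a triangle inequality); your corresponding claim that the symmetrised cross terms reassemble into a $\|\bar\bu+\bar\bv\|^2$ contribution is correct but left as a sketch, as is the tightness bookkeeping you flag in Step 3 --- neither gap affects the Corollary, whose proof stands.
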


Despite the multitude of variations of InfoNCE that have been proposed, Theorem \ref{thrm:mb_main_theorem} asserts that having the same optimal solution is conditioned solely on the monotonicity and convexity of the functions $\phi$ and $\tilde{\psi}$. This result might be counterintuitive given that $\Lgeninfoncea$ and $\Lgensimclr$ allow for different couplings of the representations $\bu_i$ and $\bv_i$. Additionally, it provides a first step towards clarifying the CL landscape and gives a general strategy for designing losses without compromising the optimality of the above solutions.

The discovered minima are themselves typically considered desirable in the representation learning literature \cite{papyan2020prevalence, kothapalli2023neural, wang2020understanding}
since the L.H.S. of Eq. \eqref{eq:mb_minima} implies \textit{perfect alignment}, \ie\ pairs of equivalent points according to $\ppos$ are mapped to the same representation, and the R.H.S. implies \textit{perfect uniformity}, \ie\ maximum spreading of the points in the unit sphere, a property that usually simplifies the downstream function to-be-learned. Finally, it is well known \cite{borodachov2019discrete, liu2022generalizing} that the regular $M-1$ simplex is a (unique minimiser) of the \textit{Hypershperical Energy} for a wide variety of kernels, which
illustrates the connection between mini-batch CL, neural collapse and hyperspherical energy minimisation. Note that such a connection has been previously pinpointed by \citet{wang2020understanding}, but only for the asymptotic behaviour of the expected mini-batch CL loss, as we discuss below.

\begin{table*}
\caption{Comparison of InfoNCE variants.}
\centering
\label{tab:cl_variants}
\resizebox{\textwidth}{!}{%
\begin{tabular}{|ll|ccc|}
\hline
\multicolumn{2}{|l|}{Loss name} &
  \multicolumn{1}{l}{InfoNCE} &
  SimCLR &
  DCL \\ \hline
 MB &  &
  $\frac{1}{M} \sum\limits_{i=1}^M \log \left(1 + {\sum\limits_{j=1 \atop j \neq i }^{M} e^{(\bv_j- \bv_i)^{\top} \bu_i/\tau}}\right)$ &
  $\frac{1}{M} \sum\limits_{i=1}^M \log \left(1 + \sum\limits_{j=1 \atop j\neq i, M+i}^{2M} e^{(\hat{\bu}_j - \bv_i)^{\top} \bu_i/\tau}\right)$ &
  $\frac{1}{M} \sum\limits_{i=1}^M \log \left(\sum\limits_{j=1 \atop j\neq i, M+i}^{2M} e^{(\hat{\bu}_j - \bv_i)^{\top} \bu_i/\tau}\right)$ \\ \hline
\multicolumn{1}{|l|}{EMB} &
  $\underset{(\bu, \bv)\sim \funsimple_\#\ppos}{\E}\left[-\bv^\top \bu\right] +$ &
  $\underset{(\bu, \bv)\sim \funsimple_\#\ppos \atop \bV^{\prime}\overset{\text{i.i.d}}{\sim} \funsimple_\#\pdata^{M-1}}{\E}\left[\log \left(e^{\bv^{\top} \bu} + \sum_{j=1}^{M-1} e^{\bu^{\top} \bv^{\prime}_j}\right)\right]$ &
  $\underset{(\bu, \bv) \sim \funsimple_\#\ppos \atop \hat{\bU}\overset{\text{i.i.d}}{\sim} \funsimple_\#\ppos^{M-1}}{\E}\left[\log \left(e^{\bv^{\top} \bu} + \sum\limits_{j=1}^{2M-2} e^{\hat{\bu}_j^{\top} \bu}\right)\right]$ &
  $\underset{\bu \sim \funsimple_\#\pdata \atop \hat{\bU}\overset{\text{i.i.d}}{\sim} \funsimple_\#\ppos^{M-1}}{\E}\left[\log \left( \sum\limits_{j=1}^{2M-2} e^{\hat{\bu}_j^{\top} \bu}\right)\right]$ \\
  \hline
\multicolumn{1}{|l|}{\multirow{2}{*}{Asymptotic}} &
  \multicolumn{1}{|l|}{Normalising constant} &
  $-\log(M-1)$ &
  $-\log(2M-2)$ &
    $-\log(2M-2)$ \\ \cline{2-5} 
\multicolumn{1}{|l|}{} &
  \multicolumn{1}{|l|}{Limit} &
  \multicolumn{3}{c|}{$\underset{(\bu,\bv)\sim \funsimple_\#\ppos}{\E}\left[\bu^\top \bv\right] + \underset{\bu \sim \funsimple_\#\pdata}{\E}\left[\log \underset{\bu' \sim \funsimple_\#\pdata}{\E}\left[e^{\bu^\top \bu'}\right]\right]$} \\ \hline
\multicolumn{2}{|l|}{argmin MB ($M\leq d+1$)} &
  \multicolumn{3}{c|}{$M-1$ regular simplex} \\ \hline
\multicolumn{2}{|l|}{argmin EMB / Asymptotic} &
  \multicolumn{3}{c|}{Unknown / $U(\mathbb{S}^{d-1})$}
  \\ \hline
\end{tabular}
}
\end{table*}

\begin{table*}
\centering
\caption{Comparison of \acronym\ and KCL variants.}
\label{tab:our_variants}
\resizebox{\textwidth}{!}{
\begin{tabular}{|ll|cc|}
\hline
\multicolumn{2}{|c|}{Loss name} &
  \multicolumn{1}{c}{\acronym} &
  KCL 
  \\ \hline
\multicolumn{2}{|l|}{MB} &
    $\frac{1}{M} \sum\limits_{i=1}^M \log \left( {\sum\limits_{j=1 \atop j \neq i }^{M} e^{(\bu_j- \bv_i)^{\top} \bu_i}}\right)$
  & $ 
- \frac{1}{M} \sum\limits_{i=1}^{M} K_A\left(\bu_i, \bv_i\right)
+ 
\frac{\gamma}{M(M-1)} 
\sum\limits_{i, j=1 \atop j\neq i}^{M} K_U\left(\bu_i, \bu_j\right)$
  \\ \hline
\multicolumn{2}{|l|}{EMB} &
  $\underset{(\bu, \bv)\sim \funsimple_\#\ppos}{\E}\left[-\bv^\top \bu\right] +
    \underset{\bu \sim \funsimple_\#\pdata \atop \bU^{\prime}\overset{\text{i.i.d}}{\sim} \funsimple_\#\pdata^{M-1}}{\E}\left[\log \left(\sum_{j=1}^{M-1} e^{\bu^{\top} \bu^{\prime}_j}\right)\right]$ &
$\underset{(\bu, \bv)\sim \funsimple_\#\ppos}{\E}\left[-K_A\left(\bu, \bv\right)\right] + \gamma
    \underset{\bu \sim \funsimple_\#\pdata \atop \bu^{\prime}{\sim} \funsimple_\#\pdata}{\E}\left[K_U(\bu, \bu^{\prime})\right]$
  \\ \hline
{\multirow{2}{*}{Asymptotic}} &
  \multicolumn{1}{|l|}{Normalising constant} &
  $-\log(M-1)$ &
 $0$ 
  \\ \cline{2-4} 
\multicolumn{1}{|c|}{} &
  \multicolumn{1}{|l|}{Limit} &
$\underset{(\bu, \bv)\sim \funsimple_\#\ppos}{\E}\left[-\bv^\top \bu\right]+ \underset{\bu \sim \funsimple_\#\pdata}{\E}\left[\log \underset{\bu^\prime \sim \funsimple_\#\pdata}{\E}\left[e^{\bu^\top \bu'}\right]\right]$ &
$\underset{(\bu, \bv)\sim \funsimple_\#\ppos}{\E}\left[-K_A\left(\bu, \bv\right)\right] +
    \gamma \underset{\bu \sim \funsimple_\#\pdata \atop \bu^{\prime}{\sim} \funsimple_\#\pdata}{\E}\left[K_U(\bu, \bu^{\prime})\right]$
  \\ \hline
\multicolumn{2}{|l|}{argmin MB ($M\leq d+1$)} &
  \multicolumn{2}{c|}{$M-1$ regular simplex} \\ \hline
\multicolumn{2}{|l|}{argmin EMB / Asymptotic} &
  \multicolumn{1}{c}{Unknown / $U(\mathbb{S}^{d-1})$}
  &
  \multicolumn{1}{c}{$U(\mathbb{S}^{d-1})$ / $U(\mathbb{S}^{d-1})$}
  \\ \hline
\end{tabular}%
}
\end{table*}

\noindent{\textbf{Asymptotic behaviour of the expectation.}}  Theorem \ref{thrm:mb_main_theorem} gives us a qualitative understanding of the similarities among CL variants and of the direction of the gradient at each training iteration, but it does not reveal the bigger picture, \ie\ the objective that we are actually trying to optimise. For example, an obvious limitation of the optimum of Eq. \eqref{eq:mb_minima} is that we can have at most two perfectly aligned points since adding one extra would compromise uniformity. 

To understand the true objective, observe that the gradient of the mini-batch loss is an \textit{unbiased estimate} of the gradient of the \textit{expected loss} ${\E}\left[\Lgen(f_{\boldsymbol{\theta}}(\bX), f_{\boldsymbol{\theta}}(\bY))\right]$ (due to linearity of expectation and gradient) using a single sample. In other words, the expected loss is the \textit{true loss} that we are optimising using gradient estimates. 
It is, therefore, more appropriate to analyse the optima of the latter. However, as we  see in Table \ref{tab:cl_variants}, the expected loss for three common CL losses: \textit{InfoNCE, SimCLR and DCL}, depends on the batch size $M$ even after normalising with an appropriate (missing from the original objective) constant (proof in Lemma \ref{lemma:exp_cl_losses} in the Appendix using simple derivations). Thus, we resort to examining the asymptotic behaviour similarly to \cite{wang2020understanding}. Using similar arguments, it is straightforward to see that the asymptotic behaviour of the above variants is the same (proof in Appendix \ref{sec:expectation_proofs}). Formally:
\begin{proposition}\label{prop:asymptotic}
The expectations of the following batch-level contrastive loss functions: $\Linfoncea(\cdot, \cdot)$, 
 $\Lsimclr(\cdot, \cdot)$, $\Ldcl(\cdot, \cdot)$ have the \textbf{same asymptotic behaviour} when subtracting appropriate normalising constants ($\log(M-1)$ for the first and $\log(2M-2)$ for two latter), \ie\ when $M \to \infty$ they converge to the asymptotic formula of InfoNCE \cite{wang2020understanding}:
\begin{equation}\label{eq:asymptotic}
\underset{(\bu, \bv)\sim f_\#\ppos}{\E}
\left[-\bv^\top \bu/\tau\right] + \underset{\bu{\sim} f_\#\pdata}{\E}\left[\log \underset{ \bu^{\prime}{\sim} f_\#\pdata}{\E}\left[e^{\bu^\top \bu^{\prime}/\tau}\right]\right].
\end{equation}
\end{proposition}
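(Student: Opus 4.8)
The plan is to start from the closed forms of the three expected mini-batch losses, which are exactly the ``EMB'' entries of Table~\ref{tab:cl_variants} (established in Lemma~\ref{lemma:exp_cl_losses}). Each of them splits as the common \emph{alignment} term $\E_{(\bu,\bv)\sim f_\#\ppos}\!\left[-\bv^\top\bu/\tau\right]$, which carries no dependence on $M$, plus an $M$-dependent \emph{uniformity} term; hence it suffices to analyse the latter. First I would absorb the advertised normalising constant into the logarithm, turning each uniformity term into an expectation of $\log$ of an \emph{average}. Concretely, for DCL, subtracting $\log(2M-2)$ yields
\begin{equation*}
\E_{\bu\sim f_\#\pdata,\ \hat{\bU}\sim f_\#\ppos^{M-1}}\Bigl[\log\Bigl(\tfrac{1}{2M-2}\sum\nolimits_{j=1}^{2M-2} e^{\hat{\bu}_j^\top\bu/\tau}\Bigr)\Bigr],
\end{equation*}
while for InfoNCE and SimCLR, subtracting $\log(M-1)$ and $\log(2M-2)$ respectively gives the same kind of expression with one extra additive term, $\tfrac{1}{M-1}e^{\bv^\top\bu/\tau}$ (resp.\ $\tfrac{1}{2M-2}e^{\bv^\top\bu/\tau}$), living inside the logarithm, and with $M-1$ i.i.d.\ negatives $\bv'_j\sim f_\#\pdata$ in the InfoNCE case.

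Next I would identify the pointwise limit of the inner quantity for a fixed anchor $\bu$. For DCL and SimCLR the average over $2M-2$ negatives is really an average of $M-1$ i.i.d.\ \emph{blocks}, $\tfrac{1}{M-1}\sum_{k=1}^{M-1}\tfrac12\bigl(e^{(\bu'_k)^\top\bu/\tau}+e^{(\bv'_k)^\top\bu/\tau}\bigr)$ with $(\bu'_k,\bv'_k)\sim f_\#\ppos$; since both marginals of $\ppos$ equal $\pdata$, each block has mean $\E_{\bu'\sim f_\#\pdata}[e^{\bu^\top\bu'/\tau}]$, so the strong law of large numbers gives almost-sure convergence of the average to this mean. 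For InfoNCE the negatives are directly i.i.d.\ from $f_\#\pdata$, so the same conclusion holds, and the isolated $\tfrac{1}{M-1}e^{\bv^\top\bu/\tau}$ term tends to $0$ uniformly because its numerator is bounded by $e^{1/\tau}$. Thus, for (almost) every $\bu$, the quantity inside the logarithm converges a.s.\ to $\E_{\bu'\sim f_\#\pdata}[e^{\bu^\top\bu'/\tau}]$.

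Finally I would pass the limit through both expectations, which is the only delicate point. Because all representations lie on the compact sphere $\mathbb{S}^{d-1}$, every exponential is confined to $[e^{-1/\tau},e^{1/\tau}]$, hence so is each of the averages above; in particular these averages stay bounded away from $0$, so their logarithms are bounded in absolute value by $1/\tau$. Conditioning on $\bu$ and invoking bounded (dominated) convergence over the negatives then shows the inner conditional expectation converges to $\log\E_{\bu'\sim f_\#\pdata}[e^{\bu^\top\bu'/\tau}]$ for every $\bu$; since these conditional expectations are themselves bounded by $1/\tau$, a second application of bounded convergence over $\bu$, whose law is $f_\#\pdata$ in all three settings, gives that the uniformity term tends to $\E_{\bu\sim f_\#\pdata}\bigl[\log\E_{\bu'\sim f_\#\pdata}[e^{\bu^\top\bu'/\tau}]\bigr]$. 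Adding back the shared alignment term reproduces Eq.~\eqref{eq:asymptotic} for InfoNCE, SimCLR and DCL alike, which is the assertion.

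I expect the main obstacle to be exactly this interchange of limit and expectation: it requires the block-wise i.i.d.\ decomposition so that the law of large numbers applies despite positive pairs being dependent, together with the uniform boundedness supplied by the sphere constraint to license dominated convergence; the rest — rewriting each loss, cancelling the alignment part, matching normalising constants — is routine bookkeeping already carried out in Lemma~\ref{lemma:exp_cl_losses}. A slightly slicker alternative for the last step is to note that $x\mapsto\log x$ is continuous and bounded on the compact range of the averages and apply the continuous mapping theorem together with uniform integrability, but the bounded-convergence route is the most elementary.
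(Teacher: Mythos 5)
Your proposal is correct and follows essentially the same route as the paper's proof: start from the expected-loss forms of Lemma~\ref{lemma:exp_cl_losses}, note that the $\tfrac{1}{M-1}e^{\bv^\top\bu/\tau}$ (resp.\ $\tfrac{1}{2M-2}$) positive term vanishes by boundedness, invoke the strong law of large numbers for the averaged negatives, and then swap limit and expectation via the continuous mapping theorem plus dominated (bounded) convergence on the compact sphere. Your explicit decomposition of the $2M-2$ SimCLR/DCL negatives into $M-1$ i.i.d.\ pair-blocks is just a slightly more spelled-out justification of the SLLN step that the paper asserts directly, so no substantive difference.
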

Therefore, the conclusions of Theorem 1 in \cite{wang2020understanding} hold for all three variants; the first term is minimised if there exists $f$ such that all positive pairs are perfectly aligned and the second is minimised if there exists $f$ such that $f_\#\pdata$ is the uniform distribution on the sphere $U(\mathbb{S}^{d-1})$.

\section{\method}
\paragraph{Expected loss: What happens when the batch size is finite?} Closely examining the equations in Table \ref{tab:cl_variants}, we can see that the true objectives are a sum of a common alignment term and a uniformity one that varies. As previously discussed we aim to achieve perfect alignment and perfect uniformity. \textit{The latter does not depend on $\ppos$ (supposing that $\ppos$ is such that perfectly optimising the alignment term does not prohibit the ability to achieve perfect uniformity)}. However, this does not straightforwardly seem to be the case for InfoNCE, SimCLR and DCL, since in all cases we observe a dependence of the uniformity term on $\ppos$ that vanishes only asymptotically. This imposes a coupling between the two terms that can potentially hinder optimisation.

\paragraph{Decoupling uniformity from alignment} Motivated by this observation, we make a simple modification on InfoNCE and propose a new CL objective that allows for an expected uniformity term that is only dependent on $\pdata$:
\begin{equation}\label{eq:dhel_loss}
     \Lours(\bU, \bV)  =\frac{1}{M} \sum_{i=1}^M-\log \left(\frac{e^{{\bu_i^{\top} \bv_i}{/\tau}}}{\sum_{j=1 \atop i \neq j}^M e^{\bu_i^{\top} \bu_j{/\tau}}}\right),
\end{equation}
which is a special case of the generalised \method:

\begin{equation}\label{eq:general_dhel_loss}
      \Lgeninfonceb(\bU, \bV)= \frac{1}{M} \sum_{i=1}^M \psi \left(\sum_{j=1, \atop j \neq i}^M \phi\left(\left(\bu_j-\bv_i\right)^{\top} \bu_i\right)\right)
\end{equation}

\noindent\textbf{Key advantage of DHEL.} The dependence of our uniformity only on $\pdata$ can be also understood intuitively: DHEL is based on the observation that for perfect uniformity, it suffices to contrast a datapoint $\bx_i$ against a \textit{single} positive view of a negative $\bx_j$. Adding more views, as in \cite{chen2020simple, yeh2022decoupled}, does not only seem unnecessary but might also have undesired repercussions since such a uniformity term would aim to uniformly distribute \textit{all} points on the sphere, ignoring that half of them are positives of the other half.\footnote{A formal analysis of the uniformity of the symmetric SimCLR loss, using a lower bound obtained with Jensen's inequality, reveals the hyperspherical energy of a linear kernel, which is minimised when \textit{all} points are uniformly distributed when $1<M\leq d+1$.} Therefore, even though in theory the minima do not seem to be affected, previous InfoNCE variants have two \textit{competing terms}, an issue that we overcome with DHEL.

\looseness-1\paragraph{Theoretical properties of \acronym.} First off, we also analysed \acronym\ w.r.t. its optima in the mini-batch and the asymptotic expectation case. The following theorem shows that under the same conditions, Theorem \ref{thrm:mb_main_theorem} and Proposition \ref{prop:asymptotic} continue to hold (proofs in Appendix \ref{sec:mb_minima},  \ref{sec:expectation_proofs}):
\begin{theorem}\label{thrm:mb_dhel_main_theorem}
Consider the optimisation problem of Eq. \eqref{eq: mb_optimisation} with $\Lgentotal(\cdot, \cdot)$ being the symmetric version of the loss function $\Lgeninfonceb$. Further, suppose that conditions (1) and (2) of Theorem \ref{thrm:mb_main_theorem} hold, \eg\ as for our loss \acronym. Then, when $1<M\leq d+1$, the  mini-batch CL optimisation of Eq. \eqref{eq: mb_optimisation} obtains its optimal value $(\bU^*, \bV^*)$ as in Eq. \eqref{eq:mb_minima}. Moreover, the expectation of
 $\Lours(\cdot, \cdot)$ asymptotically converges to Eq. \eqref{eq:asymptotic} when subtracting a normalising constant equal to $\log(M-1)$. Therefore, the asymptotic expectation of \acronym\ is minimised by any encoder $f$ that is perfectly aligned and distributes representations uniformly on the sphere, \ie\ $f_\#\pdata = U(\mathbb{S}^{d-1})$, if such an encoder exists.
\end{theorem}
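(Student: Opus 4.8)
The plan is to prove the two claims separately, reusing the tools behind Theorem~\ref{thrm:mb_main_theorem} and Proposition~\ref{prop:asymptotic}; the point is that DHEL is in fact \emph{easier} to analyse than $\Lgeninfoncea,\Lgensimclr$, because the positive score decouples from the Gram matrix of the negatives.

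For the mini-batch part, write $(\bu_j-\bv_i)^\top\bu_i = \bu_j^\top\bu_i - \bv_i^\top\bu_i$ and use $\bv_i^\top\bu_i \le 1 = \bu_i^\top\bu_i$ (Cauchy--Schwarz on the sphere). Monotonicity of $\phi$ then of $\psi$ gives $\Lgeninfonceb(\bU,\bV) \ge \frac{1}{M}\sum_{i=1}^M \psi\!\left(\sum_{j\neq i}\phi(\bu_j^\top\bu_i-1)\right)$, with equality when $\bv_i=\bu_i$ for all $i$; doing the same for $\Lgeninfonceb(\bV,\bU)$ and averaging yields $\Lgentotal(\bU,\bV)\ge g(\bU)+g(\bV)$, where $g(\bW):=\frac{1}{2M}\sum_i\psi\!\left(\sum_{j\neq i}\phi(\bw_j^\top\bw_i-1)\right)$ now depends on a \emph{single} point configuration. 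It then remains to minimise $g$ over $(\mathbb{S}^{d-1})^M$ by a three-step Jensen/norm argument: (i) convexity of $\phi$ gives $\sum_{j\neq i}\phi(\bw_j^\top\bw_i-1)\ge(M-1)\phi(\bar{c}_i-1)$ with $\bar{c}_i:=\frac{1}{M-1}\sum_{j\neq i}\bw_i^\top\bw_j$, hence $\psi(\cdot)\ge\tilde{\psi}(\bar{c}_i-1;\,M-1)$ since $\psi$ is increasing; (ii) convexity of $\tilde{\psi}(\cdot;\,M-1)$ (condition (2), valid since $M-1>0$) and Jensen over $i$ give $\frac{1}{M}\sum_i\tilde{\psi}(\bar{c}_i-1;\,M-1)\ge\tilde{\psi}\!\left(\frac{1}{M}\sum_i\bar{c}_i-1;\,M-1\right)$; (iii) $\frac{1}{M}\sum_i\bar{c}_i=\frac{1}{M(M-1)}\left(\|\sum_i\bw_i\|^2-M\right)\ge-\frac{1}{M-1}$, so monotonicity of $\tilde{\psi}$ in its first argument gives $g(\bW)\ge\frac{1}{2}\tilde{\psi}\!\left(-\frac{M}{M-1};\,M-1\right)$. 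A regular $(M-1)$-simplex exists in $\mathbb{R}^d$ exactly when $M\le d+1$, has $\|\sum_i\bw_i\|=0$ and $\bw_i^\top\bw_j=-\frac{1}{M-1}$ for $i\neq j$, so it turns all of (i)--(iii) into equalities; combined with $\bv_i=\bu_i$ this shows $(\bU^*,\bV^*)$ with $\bU^*=\bV^*$ a regular $(M-1)$-simplex attains the optimal value $\tilde{\psi}\!\left(-\frac{M}{M-1};\,M-1\right)$, i.e. Eq.~\eqref{eq:mb_minima}. Under the strict versions of (1)--(2), the equality cases of (i)--(iii) force these to be the \emph{unique} minimisers, matching condition (4) of Theorem~\ref{thrm:mb_main_theorem}.

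For the asymptotic part, expand $\Lours(\bU,\bV)=\frac{1}{M}\sum_i\left(-\bu_i^\top\bv_i/\tau+\log\sum_{j\neq i}e^{\bu_i^\top\bu_j/\tau}\right)$ and take the expectation over $(\bX,\bY)\sim\ppos^M$. By exchangeability of the $M$ i.i.d.\ positive pairs it equals $\E_{(\bu,\bv)\sim f_\#\ppos}\!\left[-\bu^\top\bv/\tau\right]+\E\!\left[\log\sum_{j=2}^M e^{\bu_1^\top\bu_j/\tau}\right]$, where, conditionally on $\bu_1\sim f_\#\pdata$, the vectors $\bu_2,\dots,\bu_M$ are i.i.d.\ $\sim f_\#\pdata$ and independent of $\bu_1$ (they come from distinct pairs; same bookkeeping as Lemma~\ref{lemma:exp_cl_losses}). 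Subtracting $\log(M-1)$ rewrites the second term as $\E_{\bu_1}\E_{\bu_2,\dots,\bu_M}\!\left[\log\frac{1}{M-1}\sum_{j=2}^M e^{\bu_1^\top\bu_j/\tau}\right]$; as $M\to\infty$ the strong law of large numbers gives $\frac{1}{M-1}\sum_{j=2}^M e^{\bu_1^\top\bu_j/\tau}\to\E_{\bu'\sim f_\#\pdata}\!\left[e^{\bu_1^\top\bu'/\tau}\right]$ almost surely, and since inner products on $\mathbb{S}^{d-1}$ lie in $[-1,1]$ the integrand stays in $[-1/\tau,1/\tau]$, so dominated convergence lets us pass the limit inside and recover exactly Eq.~\eqref{eq:asymptotic}. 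Invoking the reasoning of Proposition~\ref{prop:asymptotic}/\cite{wang2020understanding}, the first term is minimised by any perfectly aligned $f$ and the second by any $f$ with $f_\#\pdata=U(\mathbb{S}^{d-1})$, so any encoder realising both minimises the asymptotic expected DHEL loss.

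The main obstacle is the ordering and tightness bookkeeping in the mini-batch part: each of the three Jensen/norm steps must be justified by exactly one of conditions (1)--(2), and all of them — together with the Cauchy--Schwarz bound $\bv_i^\top\bu_i\le1$ — must be simultaneously tight at the same configuration; the only thing one must check (and it holds) is that forcing $\bv_i=\bu_i$ is compatible with $\bU$ being a regular simplex. The asymptotic part is a direct transcription of the law-of-large-numbers argument of \cite{wang2020understanding} once the denominator is normalised by $\log(M-1)$, the only subtlety being that dropping the $j=i$ self-term leaves exactly $M-1$ samples that remain i.i.d.
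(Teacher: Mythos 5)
Your proof is correct, and both of your claimed results (attainment at aligned regular-simplex configurations for $1<M\le d+1$, and convergence of the normalised expectation to Eq.~\eqref{eq:asymptotic}) match the paper's. The asymptotic half is essentially identical to the paper's argument (Lemma~\ref{lemma:exp_cl_losses} plus the SLLN/continuous-mapping/dominated-convergence chain of Proposition~\ref{prop:asymptotic}, with the same bookkeeping that the $M-1$ negatives are i.i.d.\ draws from $f_\#\pdata$ independent of the anchor). The mini-batch half uses the same toolkit but a genuinely different organisation: the paper's Part~III for $\Lgeninfonceb$ keeps the positive term coupled through the two Jensen steps and only decouples at the end, reducing the inner problem to $\argmax\,(M-1)\sum_i\bv_i^\top\bu_i-\|\sum_i\bu_i\|^2$, whose two pieces are then maximised independently (Cauchy--Schwarz giving $\bv_i=\bu_i$, and $\sum_i\bu_i=\mathbf{0}$). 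You instead decouple at the outset via $\bv_i^\top\bu_i\le 1$ (using monotonicity of $\phi,\psi$), bound the symmetric loss by $g(\bU)+g(\bV)$ with $g$ depending on a single configuration, and only then run the Jensen chain on $g$; this is structurally closer to the paper's KCL proof (Theorem~\ref{thrm:mb_kernels_theorem_proof}), where alignment and uniformity terms are minimised separately and compatibility of the two minimisers is checked at the end. What your route buys is a cleaner separation that makes explicit why DHEL's optimum analysis is simpler than InfoNCE/SimCLR (no coupling of $\bv_i$ into the uniformity bound, hence no need for the Kronecker/spectral-norm machinery of Parts~I--II), and your equality bookkeeping — simplex existence iff $M\le d+1$, tightness of steps (i)--(iii) at $\sum_i\bw_i=\mathbf{0}$ and equal inner products $-\tfrac{1}{M-1}$, compatibility with $\bv_i=\bu_i$ — reproduces exactly the paper's conditions, including the uniqueness statement under the strict versions of the hypotheses (which the DHEL theorem itself does not even require).
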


\section{Minima of Kernel Contrastive Learning}
As discussed, for all CL variants considered, the uniform distribution on the unit sphere is known to be a minimiser of the true loss only asymptotically. Motivated by this, we seek an alternative loss whose expectation will admit the same minimiser in the non-asymptotic regime. To achieve this, we first observe that the logarithm makes the characterisation of the optima difficult.\footnote{Using Jensen's inequality we can attempt to minimise a lower bound as in Theorem \ref{thrm:mb_main_theorem}, but equality can hold only for DCL and \acronym\ and only if for all $\bu$ and any $M$ negatives $\bu_j'$, the inner products $\bu^\top \bu'_j$ are equal  $\forall j$. If the minimiser of the bound is the uniform distribution this can only happen for $d=2$ \cite{cho2009inner}.} Removing the logarithm and dividing by an appropriate normalisation constant indeed provides us with a batch-level loss whose expectation is independent of the batch size:

\begin{equation}
    \underset{(\bu, \bv)\sim \funsimple_\#\ppos}{\E}\left[-\bv^\top \bu/\tau\right] +
    \underset{\bu \sim \funsimple_\#\pdata \atop \bu^{\prime}{\sim} \funsimple_\#\pdata}{\E}\left[e^{\bu^{\top} \bu^{\prime}_j}/\tau\right].
\end{equation}
But are the desired minima preserved by this objective? To answer this question, we observe that the second term is equivalent to minimising the energy potential of the \textit{gaussian kernel}. This is a well-studied problem \cite{borodachov2019discrete}, discussed also in \cite{wang2020understanding} and is known that the minimiser is once again the uniform distribution on the sphere. Drawing inspiration from this, we examine a more general case, that of \textit{Kernel Contrastive Learning} \cite{li2021self,waida2023towards}; the mini-batch objective is as follows:
\begin{equation}\label{eq:kernels_mb_loss}
\Lgenker(\bU, \bV) = - \frac{\sum\limits_{i=1}^{M} K_A\left(\bu_i, \bv_i\right)}{M} 
+ 
\gamma \frac{\sum\limits_{i, j=1 \atop j\neq i}^{M} K_U\left(\bu_i, \bu_j\right)}{M(M-1)}, 
\end{equation}
where both kernels are of the form ${K(\bx, \by) = \kappa(\|\bx - \by\|^2)}$, with $\kappa: (0,4] \to \mathbb{R}$ and the limit $\underset{x \to 0^+}{\lim}\kappa(x)$ exists and is bounded, and $\gamma>0$ is a weighting coefficient. Using known results for the hyperspherical energy minimisation problem, in the following theorem we provide the conditions that guarantee the preservation of the already discussed minima:

\begin{theorem}\label{thrm:mb_kernels_theorem}
Consider the optimisation problem of Eq. \eqref{eq: mb_optimisation} with $\Lgentotal(\cdot, \cdot)$ being the symmetric version of the loss function $\Lgenker$. Further, suppose the following conditions: (1) the function $k_A$ corresponding to kernel $K_A$ is \textbf{decreasing}, (2) $k_U$, the function corresponding to $K_U$ is \textbf{decreasing and convex} and (3) $1<M \leq d + 1$. Then, the  problem of Eq. \eqref{eq: mb_optimisation} obtains its optimal value when $(\bU, \bV) = (\bU^*, \bV^*)$ as in Eq. \eqref{eq:mb_minima}. Additionally, (4) if $\kappa_A$ is \textbf{strictly decreasing} and $\kappa_U$ is  \textbf{strictly decreasing and strictly convex} then all the $(\bU^*, \bV^*)$ that satisfy Eq. \eqref{eq:mb_minima} are the \textbf{unique} optima. 
\end{theorem}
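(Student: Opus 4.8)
The plan is to mirror the structure of the proof of Theorem~\ref{thrm:mb_main_theorem}: decompose the symmetric loss, lower bound the alignment and uniformity contributions separately by configuration-independent constants, and then exhibit a single configuration that attains all of them simultaneously. Since $K_A(\bu,\bv)=\kappa_A(\|\bu-\bv\|^2)$ is symmetric in its arguments, symmetrisation leaves the alignment sum untouched and replaces the uniformity sum by the average of the $K_U$-energies of $\bU$ and of $\bV$:
\[
\Lgenkertotal(\bU,\bV) = -\frac{1}{M}\sum_{i=1}^M \kappa_A\!\big(\|\bu_i-\bv_i\|^2\big) + \frac{\gamma}{2M(M-1)}\Big(\sum_{i\neq j}\kappa_U\!\big(\|\bu_i-\bu_j\|^2\big) + \sum_{i\neq j}\kappa_U\!\big(\|\bv_i-\bv_j\|^2\big)\Big).
\]

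First I would handle the alignment term. Extending $\kappa_A$ to $0$ by its (existing, bounded) limit, monotonicity of $\kappa_A$ on $(0,4]$ gives $\kappa_A(\|\bu_i-\bv_i\|^2)\le \lim_{x\to 0^+}\kappa_A(x)$ for every $i$, hence $-\frac1M\sum_i\kappa_A(\|\bu_i-\bv_i\|^2)\ge -\lim_{x\to0^+}\kappa_A(x)$, with equality forced (when $\kappa_A$ is strictly decreasing) exactly when $\bu_i=\bv_i$ for all $i$. For the uniformity term I would argue as follows for any $\bW=[\bw_1;\dots;\bw_M]\in(\mathbb{S}^{d-1})^M$: writing $t_{ij}=\|\bw_i-\bw_j\|^2=2-2\bw_i^\top\bw_j$ and $N=M(M-1)$, a short computation gives $\frac1N\sum_{i\neq j}t_{ij}=\frac{2M}{M-1}-\frac{2}{N}\|\sum_i\bw_i\|^2\le\frac{2M}{M-1}$, with equality iff $\sum_i\bw_i=\mathbf{0}$; note $\frac{2M}{M-1}\in(2,4]$ for $M>1$, so it lies in the domain of $\kappa_U$. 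Applying Jensen's inequality (convexity of $\kappa_U$) and then monotonicity of $\kappa_U$ yields $\frac1N\sum_{i\neq j}\kappa_U(t_{ij})\ge \kappa_U\!\big(\tfrac1N\sum_{i\neq j}t_{ij}\big)\ge \kappa_U\!\big(\tfrac{2M}{M-1}\big)$, a constant. Under the strict hypotheses of~(4), Jensen equality forces all $t_{ij}$ equal and equality in the second step forces $\sum_i\bw_i=\mathbf{0}$, i.e. $\bW$ is a regular $M-1$ simplex. (Alternatively one may just cite the known characterisation of minimisers of hyperspherical energy for convex decreasing kernels when $M\le d+1$, e.g.~\cite{borodachov2019discrete,liu2022generalizing}.)

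Finally I would assemble the bounds: the sum of the three constants is a configuration-independent lower bound on $\Lgenkertotal$. Since $M\le d+1$, a regular $(M-1)$-simplex exists on $\mathbb{S}^{d-1}$ (its centred vertices span an $(M-1)$-dimensional subspace), and taking $\bU^*=\bV^*$ equal to such a simplex makes the alignment term equal $-\lim_{x\to0^+}\kappa_A(x)$ and each uniformity term equal $\kappa_U(2M/(M-1))$, so the lower bound is attained and $(\bU^*,\bV^*)$ is a global minimiser — giving the first claim together with Eq.~\eqref{eq:mb_minima}. For uniqueness under~(4), any minimiser must make all three bounds tight at once, so the alignment-equality condition forces $\bU=\bV$ and the uniformity-equality condition forces $\bU$ (hence $\bV$) to be a regular simplex, which is exactly Eq.~\eqref{eq:mb_minima}. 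I do not expect a genuine obstacle here; the only delicate points are the boundary extension of $\kappa_A$ at $0$ and the observation that, without strictness, Jensen equality does not pin down the configuration — which is precisely why~(4) requires strict monotonicity and strict convexity.
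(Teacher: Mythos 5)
Your proof is correct, and it follows the same overall architecture as the paper's: split the symmetric loss into the alignment term and the two $K_U$-energies, bound each by a configuration-independent constant, and observe that a twin regular $(M-1)$-simplex (which exists precisely because $M\leq d+1$) attains all bounds simultaneously, with strict monotonicity/convexity pinning down uniqueness. The genuine difference is in how the uniformity term is handled. The paper treats it as a discrete hyperspherical-energy problem and simply invokes known results (Theorem 1 of Liu et al.\ for convex decreasing kernels when $1<M\leq d+1$, and the corresponding uniqueness statement), whereas you give a self-contained elementary argument: the identity $\frac{1}{M(M-1)}\sum_{i\neq j}\|\bw_i-\bw_j\|^2=\frac{2M}{M-1}-\frac{2}{M(M-1)}\bigl\|\sum_i\bw_i\bigr\|^2$, followed by Jensen and monotonicity of $\kappa_U$, with the equality analysis (all pairwise distances equal and zero barycentre) recovering exactly the regular-simplex characterisation. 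Your route buys transparency — it makes explicit why $M\leq d+1$ is needed and avoids any external citation — while the paper's route buys brevity and, more importantly, extends for free to the $M=2d$ case treated in the appendix (cross-polytope minimisers under complete monotonicity), which your Jensen argument cannot reach. One small point to patch: the extension by the boundary limit, which you state only for $\kappa_A$, is also needed for $\kappa_U$, both to make the loss well-defined when two distinct indices carry identical representations and to apply Jensen on $[0,4]$; the extension remains decreasing and (strictly) convex, so nothing breaks, but it should be said.
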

\looseness-1In appendix \ref{sec:mb_kcl_proof} we extend the above theorem for the case $M=2d$, where using known results from the HEM literature, we show that a minimiser of $\Lgenkertotal$ is the \textit{cross-polytope}.
Moreover, the following proposition states that the expectation is always independent of the batch size, thus we do not have to resort to asymptotic analyses and provides the necessary conditions for the minimiser to be the uniform distribution on the sphere.
\begin{proposition}\label{prop:kernel_expected}
\looseness-1The expectation of the batch-level kernel contrastive loss functions $\Lgenker(\cdot, \cdot)$ is \textbf{independent of the size of the batch}. Therefore, the batch-level loss is an \textbf{unbiased estimator} of the (asymptotic) expected loss:
\begin{equation}\label{eq:kernel_expected_loss}
\underset{(\bu, \bv) \sim f_\#\ppos}{\E} \left[-K_A\left(\bu, \bv\right)\right] + \gamma\underset{\bu,  \sim f_\#\pdata \atop \bu' \sim f_\#\pdata}{\E} \left[K_U\left(\bu, \bu'\right)\right].
\end{equation}
If (1) $\kappa_A$ is (strictly) decreasing and if (2) $\exists \, \boldsymbol{\theta}^*$ such that $\mathbb{P}_{(\bx, \by)\sim\ppos}\left[\fun(\bx) = \fun(\by)\right] = 1$, then the set of $\boldsymbol{\theta}^*$ for which (2) holds are (unique) minimisers of the first term of Eq. \eqref{eq:kernel_expected_loss}. Additionally, if (3) -$\kappa^{\prime}_U$ (first derivative) is \textbf{strictly completely monotone} in $(0,4]$, (4) the expectation defined in the l.h.s. of Eq. \eqref{eq:kernel_expected_loss} is finite and (5) $\exists \, \boldsymbol{\theta}^*$ such that the pushforward measure $f_\#\pdata = U(\mathbb{S}^{d-1})$, then $\boldsymbol{\theta}^*$ is a unique minimiser of the second term of Eq. \eqref{eq:kernel_expected_loss}. Finally, if (6) $\exists \, \boldsymbol{\theta}^*$ such that conditions (2) and (3) can be satisfied simultaneously, then $\boldsymbol{\theta}^*$ is a unique minimiser of Eq. \eqref{eq:kernel_expected_loss}.
\end{proposition}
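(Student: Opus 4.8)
The plan is to split the proof of Proposition~\ref{prop:kernel_expected} into its two natural halves --- first the batch-size independence (giving the unbiased-estimator claim and the expected loss formula~\eqref{eq:kernel_expected_loss}), and then the characterisation of the minimisers of each of the two terms separately, concluding with the joint minimiser.

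For the first part, I would apply linearity of expectation to~\eqref{eq:kernels_mb_loss} and the symmetrisation. Since $(\bx_i,\by_i)$ are i.i.d.\ draws from $\ppos$ and each marginal is $\pdata$, every summand $-K_A(\bu_i,\bv_i)$ has the same expectation $\E_{(\bu,\bv)\sim f_\#\ppos}[-K_A(\bu,\bv)]$, so the alignment term contributes exactly that, independently of $M$. For the uniformity term, the key observation is that for $i\neq j$ the pair $(\bx_i,\bx_j)$ consists of \emph{independent} samples, each with marginal $\pdata$ (they are the first coordinates of two independent positive pairs), hence $\E[K_U(\bu_i,\bu_j)]=\E_{\bu,\bu'\sim f_\#\pdata}[K_U(\bu,\bu')]$ for every such ordered pair. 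The normalisation by $M(M-1)$ cancels the count of ordered pairs, so the whole term equals $\gamma\,\E_{\bu,\bu'}[K_U(\bu,\bu')]$, again independent of $M$. Putting the two together gives~\eqref{eq:kernel_expected_loss} and the unbiasedness claim (needing the boundedness of $\lim_{x\to 0^+}\kappa(x)$ to ensure finiteness at the diagonal, and condition~(4) for the general finiteness). This part is routine; the only care needed is the bookkeeping of which pairs are independent versus coupled.

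For the minimiser characterisation I would treat the two terms one at a time, since they involve $\ppos$ and $\pdata$ respectively and are lower-bounded independently. For the alignment term: $-K_A(\bu,\bv)=-\kappa_A(\|\bu-\bv\|^2)$ and $\|\bu-\bv\|^2\in[0,4]$, so if $\kappa_A$ is decreasing then $-K_A$ is minimised pointwise exactly when $\|\bu-\bv\|^2$ takes its smallest possible value $0$, i.e.\ $\bu=\bv$; hence $\E_{(\bu,\bv)\sim f_\#\ppos}[-K_A(\bu,\bv)]\geq -\kappa_A(0^+)$ with equality iff $f_\theta(\bx)=f_\theta(\by)$ $\ppos$-a.s., which is precisely the $\boldsymbol{\theta}^*$ of condition~(2); strict monotonicity upgrades this to uniqueness of the a.s.\ alignment condition. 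For the uniformity term I would invoke the classical result from potential theory on the sphere (as cited, \cite{borodachov2019discrete}): if $-\kappa_U'$ is strictly completely monotone on $(0,4]$ then $K_U$ is (up to constants) a \emph{strictly positive definite} kernel on $\mathbb{S}^{d-1}$, and the energy functional $\mu\mapsto\iint K_U\,d\mu\,d\mu$ over probability measures $\mu$ on the sphere is minimised uniquely by the uniform measure $U(\mathbb{S}^{d-1})$ (this is the continuous analogue of the HEM result, and is exactly the fact already used in the paper's discussion of the gaussian kernel). Applying this with $\mu=f_\#\pdata$ and using condition~(5) that some $\boldsymbol{\theta}^*$ realises $f_\#\pdata=U(\mathbb{S}^{d-1})$ gives that such $\boldsymbol{\theta}^*$ uniquely minimises the second term (uniqueness in the space of pushforward measures). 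Finally, for the joint statement: since the total expected loss is a sum of the two terms and each is bounded below by its own minimum, any $\boldsymbol{\theta}^*$ attaining both simultaneously --- which condition~(6) posits exists --- attains the sum of the two lower bounds and is therefore a global minimiser; combining the two uniqueness statements (a.s.\ alignment plus $f_\#\pdata=U(\mathbb{S}^{d-1})$) characterises the minimiser set.

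The main obstacle is the uniformity term: invoking the strict-complete-monotonicity criterion correctly requires being precise about what "strictly positive definite kernel on $\mathbb{S}^{d-1}$" buys us. One must connect the hypothesis on $-\kappa_U'$ to Schoenberg-type characterisations of positive definite functions on spheres, then argue that strict positive definiteness forces the energy functional to have the uniform measure as its \emph{unique} minimiser among all Borel probability measures (not merely among pushforwards of a given data distribution) --- the variational argument here is the Frostman/energy-minimisation argument: the difference of energies $\iint K_U\,d(\mu-U)\,d(\mu-U)\geq 0$ with equality iff $\mu=U$. The boundedness/finiteness hypotheses~(4) and the behaviour of $\kappa$ at $0$ are exactly what make these integrals well defined, so I would be careful to state where each is used. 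Everything else --- the alignment term and the additivity of the two lower bounds --- is elementary once this piece is in place.
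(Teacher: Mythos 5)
Your proposal is correct and follows essentially the same route as the paper's proof: unbiasedness via linearity of expectation and the i.i.d./marginal structure of the batch, a pointwise lower bound $-\kappa_A(0)$ for the alignment term with uniqueness from strict monotonicity, and the Borodachov et al.\ energy-minimisation theorem (Theorem 6.2.1, restated as Lemma 2 in Wang \& Isola) for the uniformity term, with the joint claim following by additivity of the two lower bounds. The extra detail you give on the strict-positive-definiteness/Frostman argument is just an unpacking of that cited theorem, which the paper invokes as a black box.
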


\looseness-1\noindent\textbf{Remark.}  In \cite{li2021self} it is shown that in certain cases (\ie\ for a discrete distibution) a two-term kernel contrastive loss as the one in Eq. \eqref{eq:kernel_expected_loss} arises as proportional to a kernel dependence measure they aim to maximise (HSIC). 
However, in their paper a different loss is used in practice; first, they use a biased estimator different from Eq. \eqref{eq:kernels_mb_loss} and second they add a regulariser. Additionally, they identify a connection only with they asymptotic version of InfoNCE and they do not study the minima of HSIC as we do in Theorem \ref{thrm:mb_kernels_theorem} and Proposition \ref{prop:kernel_expected}.

\begin{figure*}[t]
    \centering
   \begin{minipage}{0.24\textwidth}
    \resizebox{\textwidth}{!}{\includegraphics{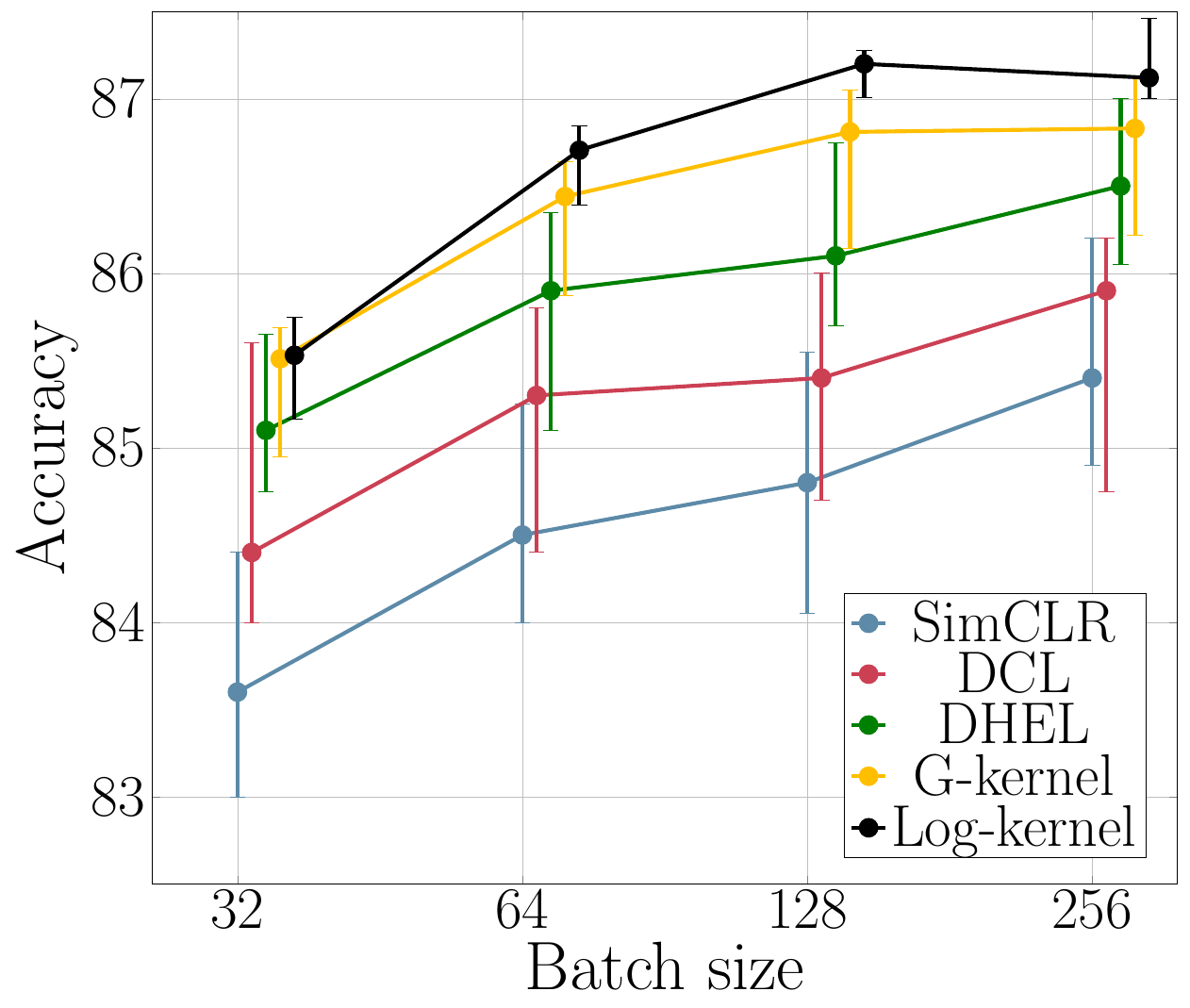}}
    \caption*{(a) CIFAR10}
    \end{minipage}
    \begin{minipage}{0.24\textwidth}
    \resizebox{\textwidth}{!}{\includegraphics{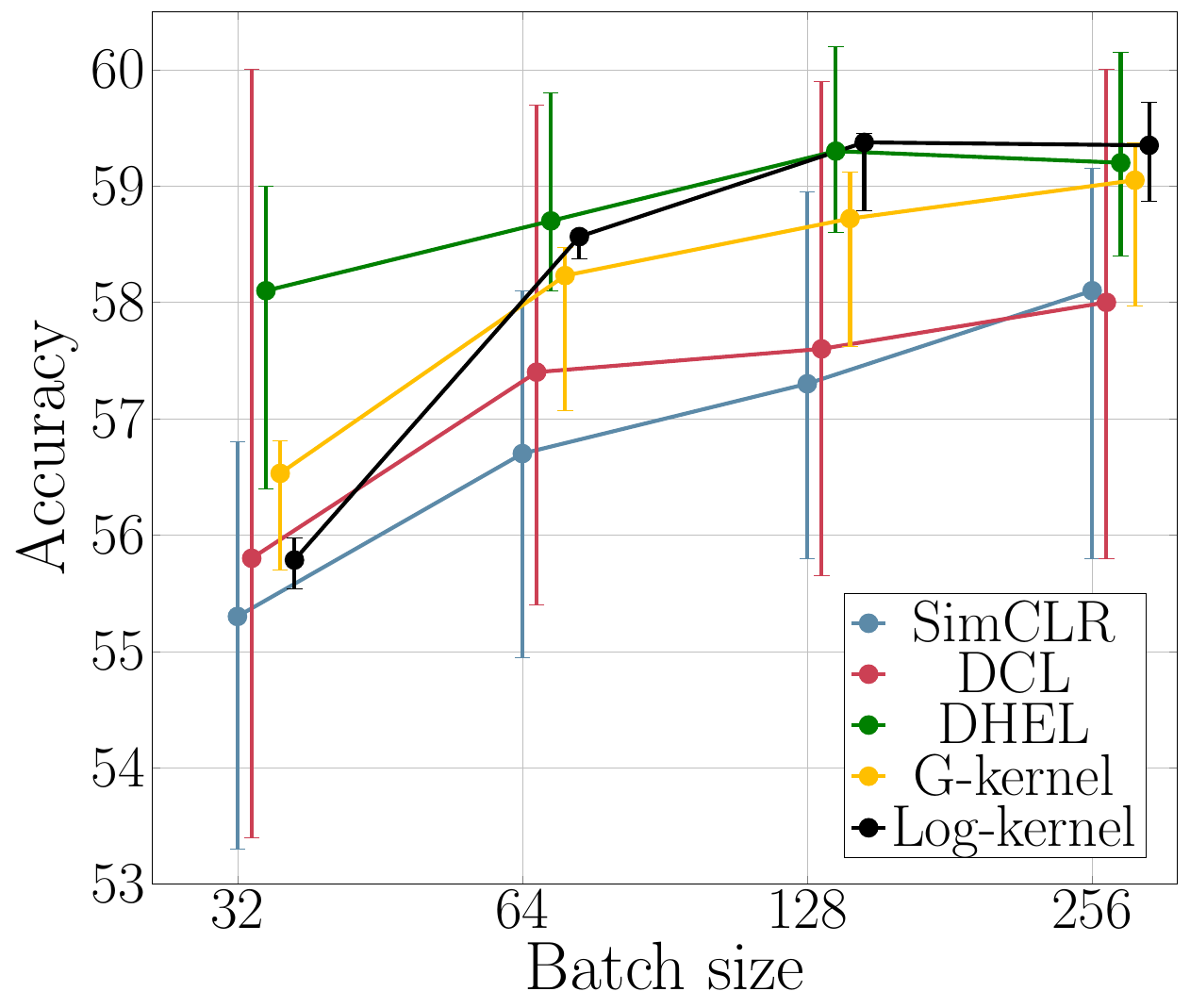}}
    \caption*{(b) CIFAR100}
    \end{minipage}
    \begin{minipage}{0.24\textwidth}
    \resizebox{\textwidth}{!}{\includegraphics{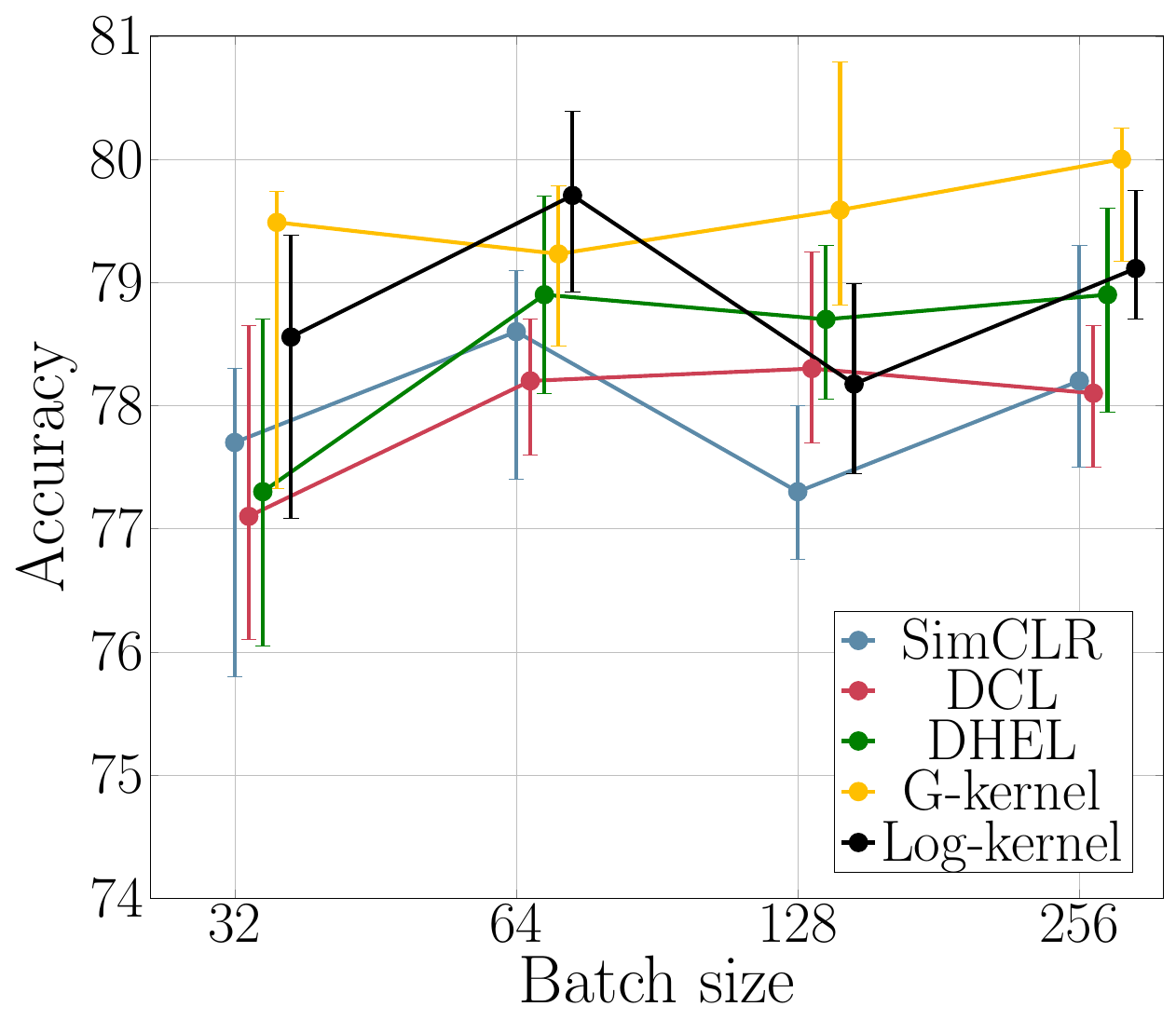}}
    \caption*{(c) STL10}
    \end{minipage}
    \begin{minipage}{0.24\textwidth}
    \resizebox{\textwidth}{!}{\includegraphics{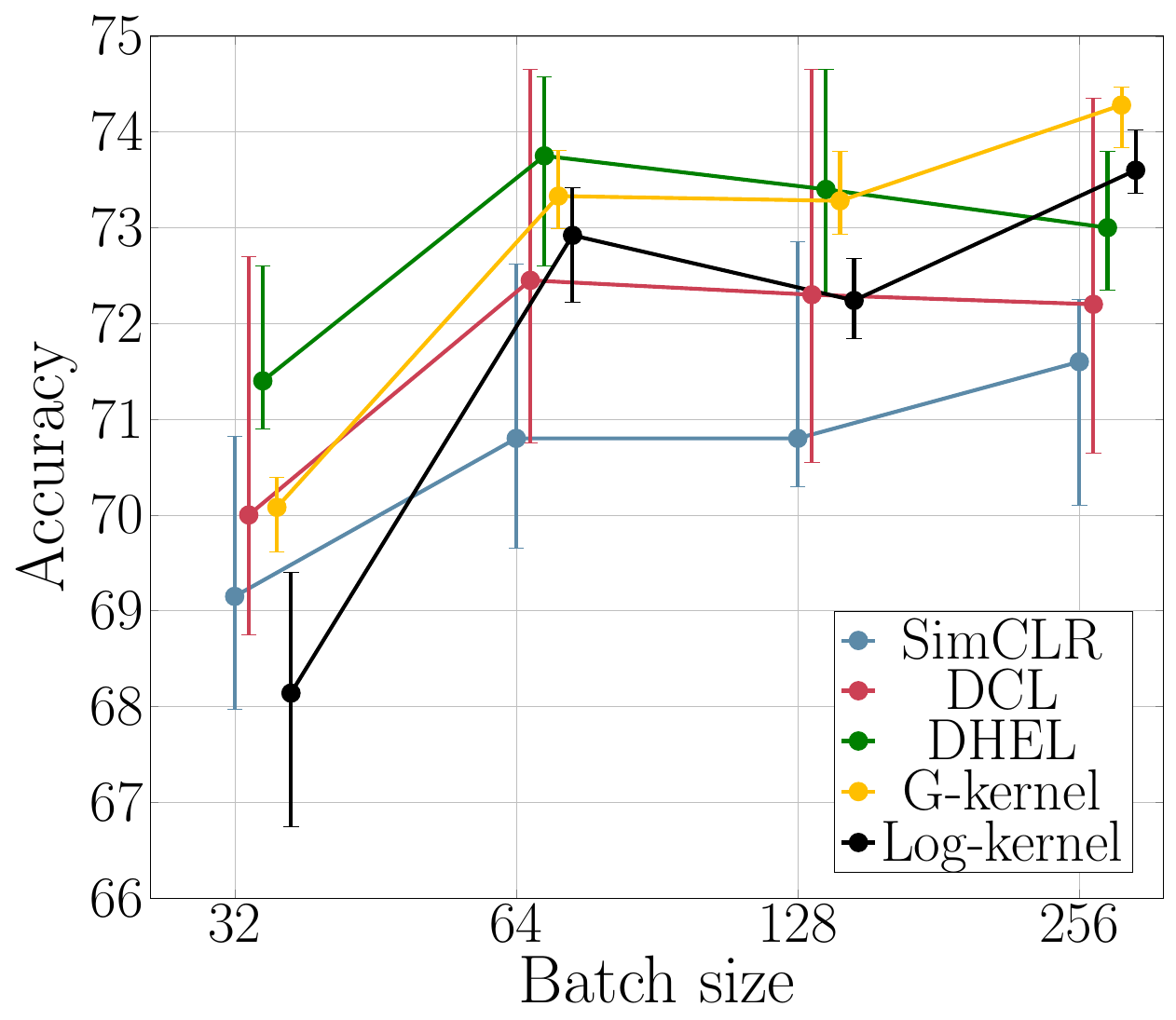}} 
    \caption*{(d) ImageNet-100}
    \end{minipage}
    \caption{Median performance for different batch sizes. Errors against each methods hyperparameters are calculated using the 25\% and 75\% quantiles. DHEL and KCL showcase improved both performance and robust against hyperparameters.}\label{fig:performance}
\end{figure*}

\section{Experimental Evaluation}
In this section, we empirically verify our theoretical results. Our methods are compared to two popular techniques in the literature: (i) \textbf{SimCLR} \cite{chen2020simple} the most used implementation of contrastive pretaining that also demonstrates consistency in terms of performance and (ii) \textbf{DCL} \cite{yeh2022decoupled} the only method in the literature that demonstrates robust performance for various and small batch sizes. We implement (iii) \textbf{DHEL} and (iv) \textbf{two KCL losses for the Gaussian and Logarithmic kernel} (see Appendix \ref{sec:addtional_prelims}).

Following common practices \cite{wang2021unsupervised, yeh2022decoupled, zhang2022dual, wang2020understanding}, we conduct experiments on four popular image classification datasets, namely \textit{CIFAR10, CIFAR100, STL-10, and ImageNet-100}. To illustrate robustness, we validate the performance for a range of each method's hyperparameters and different batch sizes. In addition, to understand the quality of the learned representations, we demonstrate the behaviour of several desired properties. 

We choose ResNet50 as the encoder architecture for the ImageNet-100 dataset and ResNet18 for the other datasets. We train our models for 200 epochs on four batch sizes (32, 64, 128, 256) and optimise them using SGD. Downstream performance is measured using the classical linear evaluation benchmarking technique: we train a linear layer on the learned representations for 200 epochs. Following \cite{wang2021unsupervised}, 11 temperatures (regarding the methods that use temperature as a hyperparameter) are tested, while for kernel methods, along with their hyperparameter, we additionally run experiments for different weighting coefficients $\gamma$. Further details on learning rates, schedulers, augmentations etc. are provided in Appendix \ref{sec:impl_details}.

\begin{figure*}
    \centering
   \begin{minipage}{0.24\textwidth}
    \resizebox{\textwidth}{!}{\includegraphics{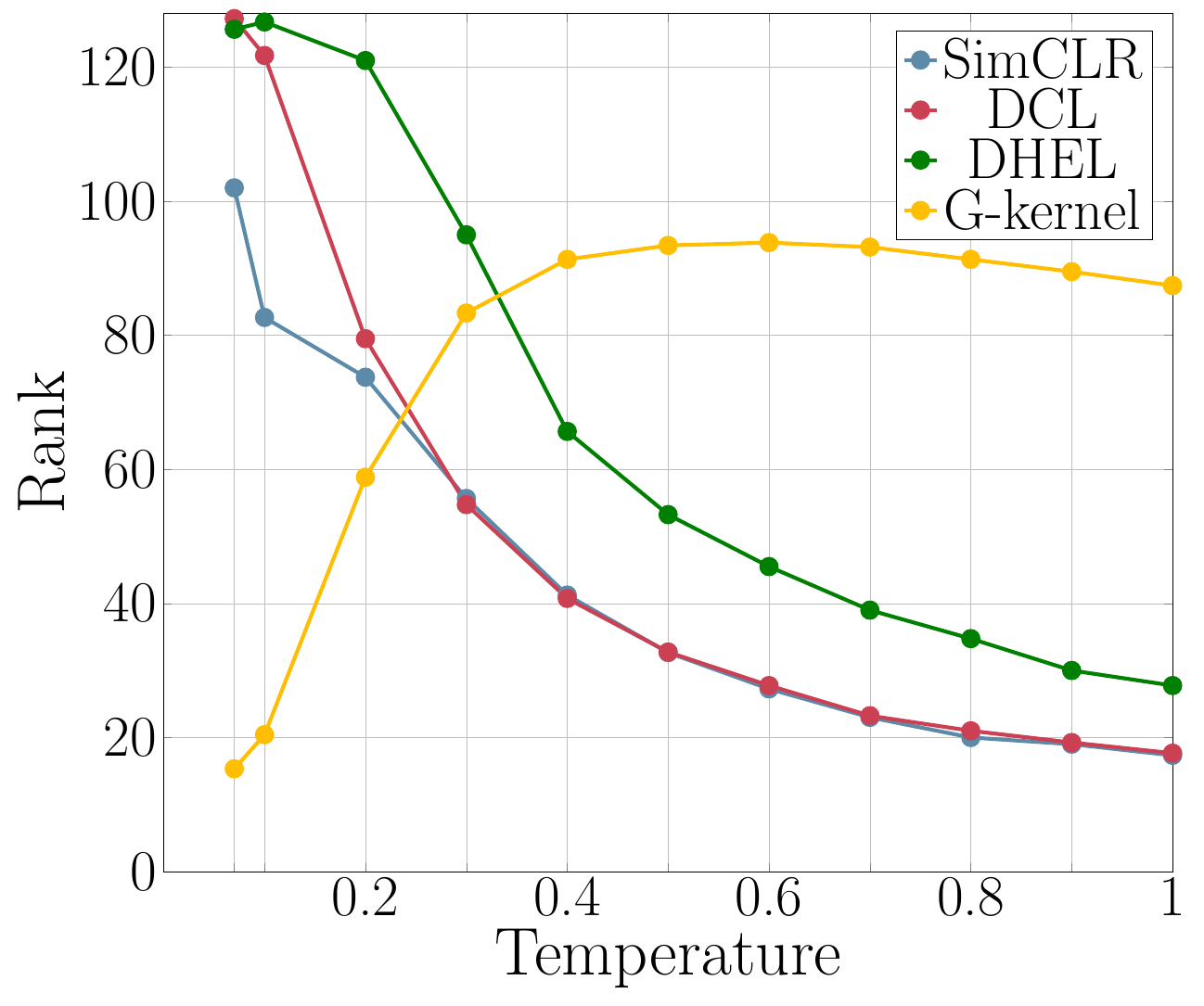}}
    \end{minipage}
    \begin{minipage}{0.24\textwidth}
    \resizebox{\textwidth}{!}{\includegraphics{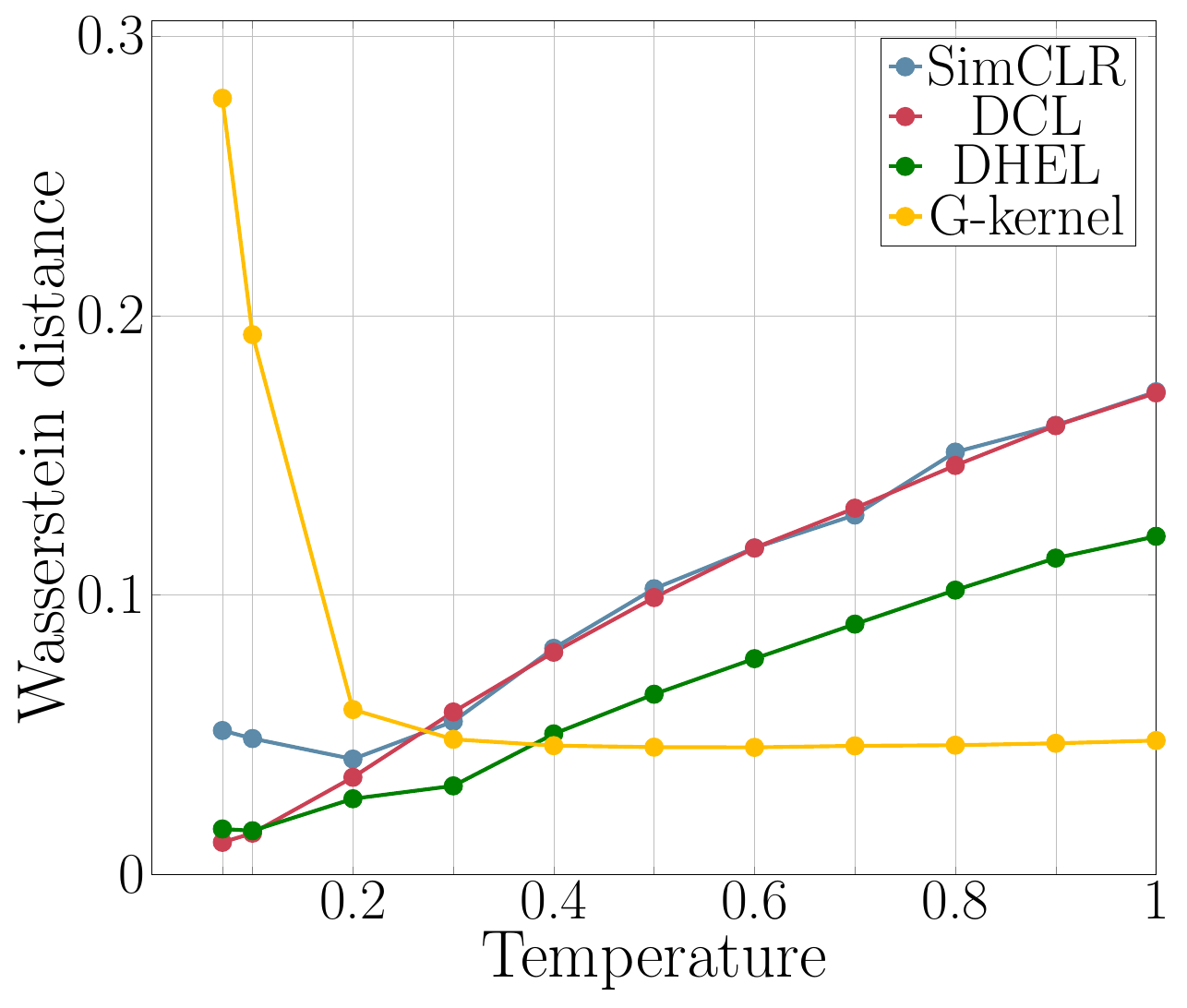}}
    \end{minipage}
    \begin{minipage}{0.24\textwidth}
    \resizebox{\textwidth}{!}{\includegraphics{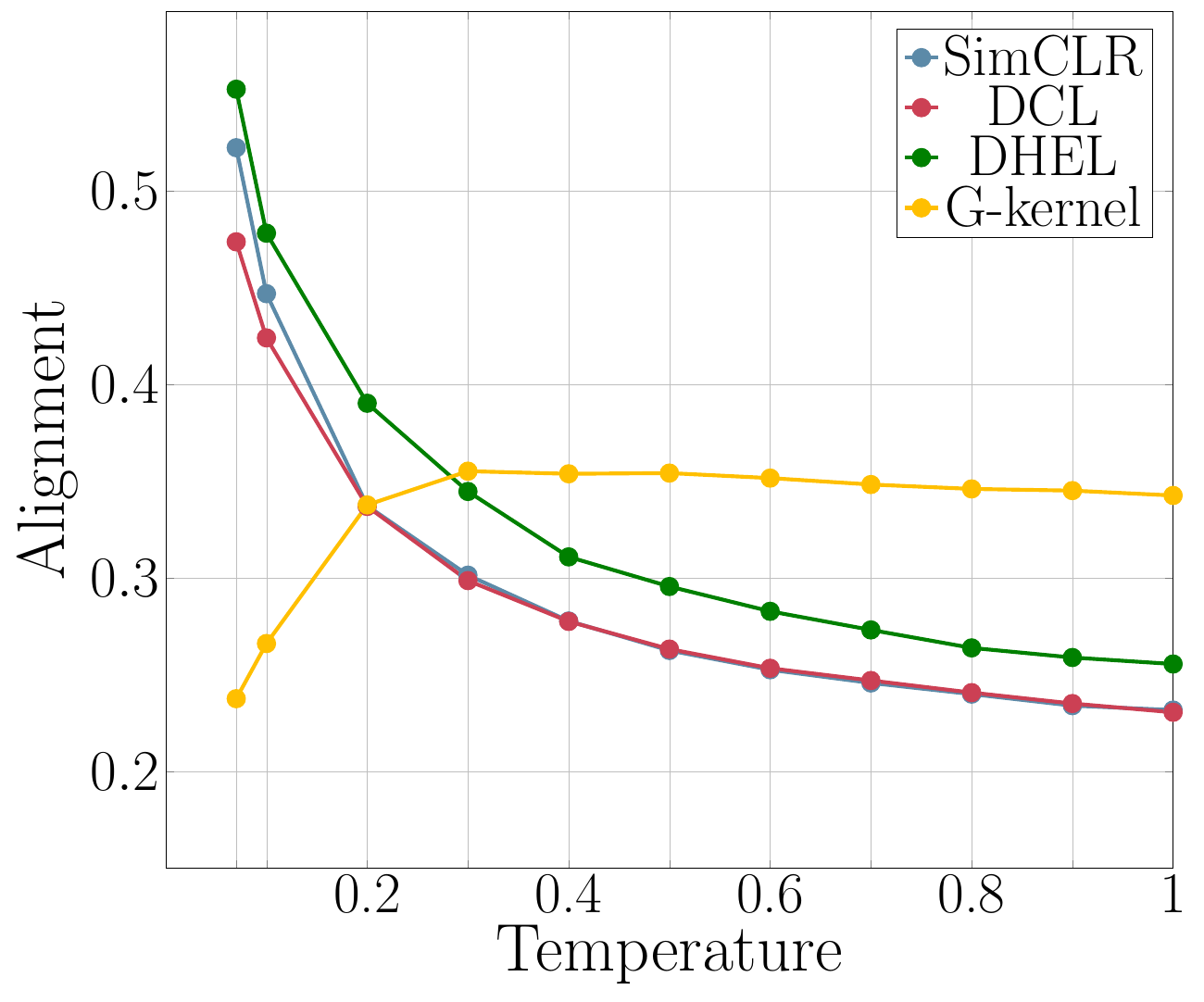}}
    \end{minipage}
    \begin{minipage}{0.24\textwidth}
    \resizebox{\textwidth}{!}{\includegraphics{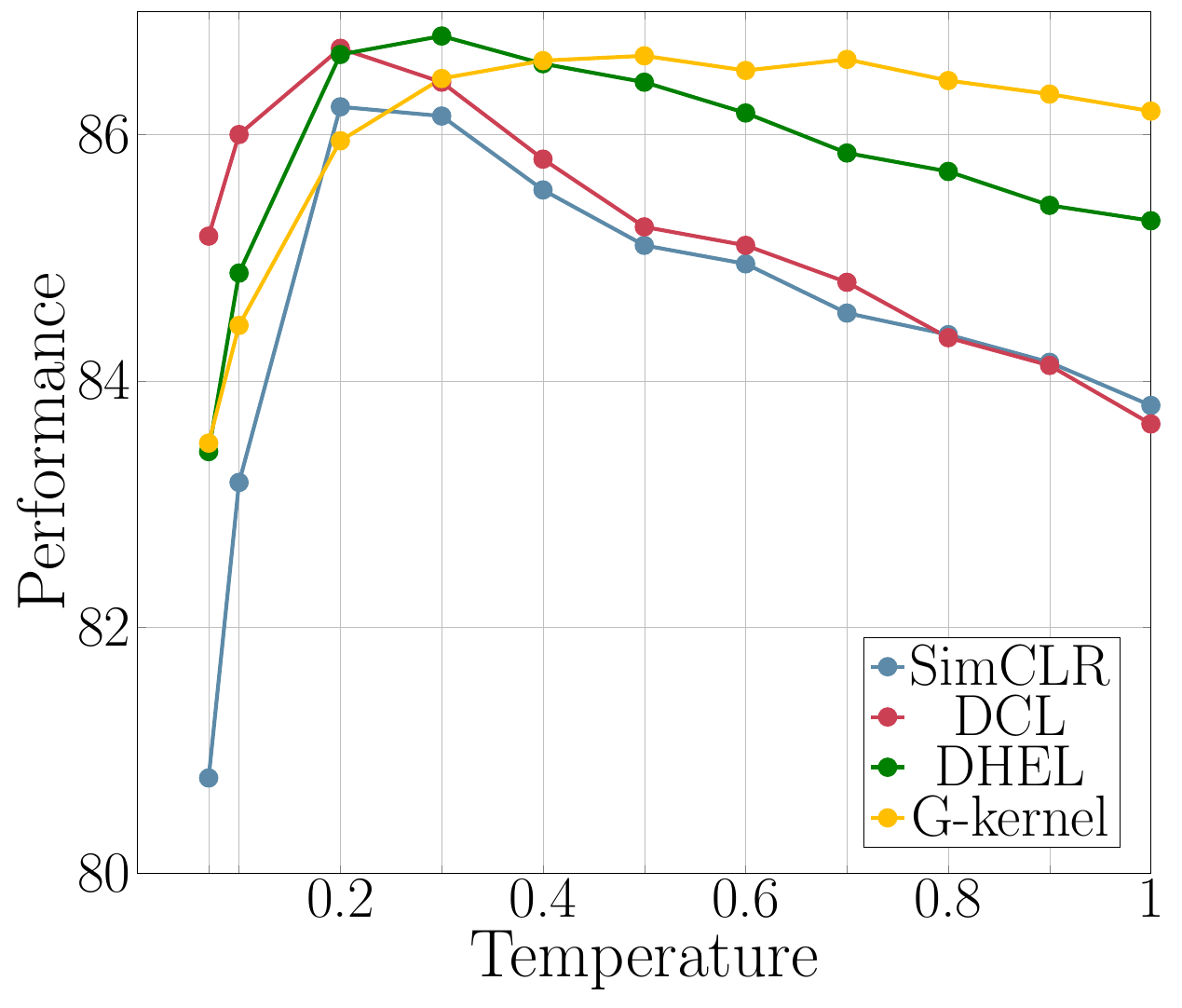}} 
    \end{minipage}
    \begin{minipage}{0.24\textwidth}
    \resizebox{\textwidth}{!}{\includegraphics{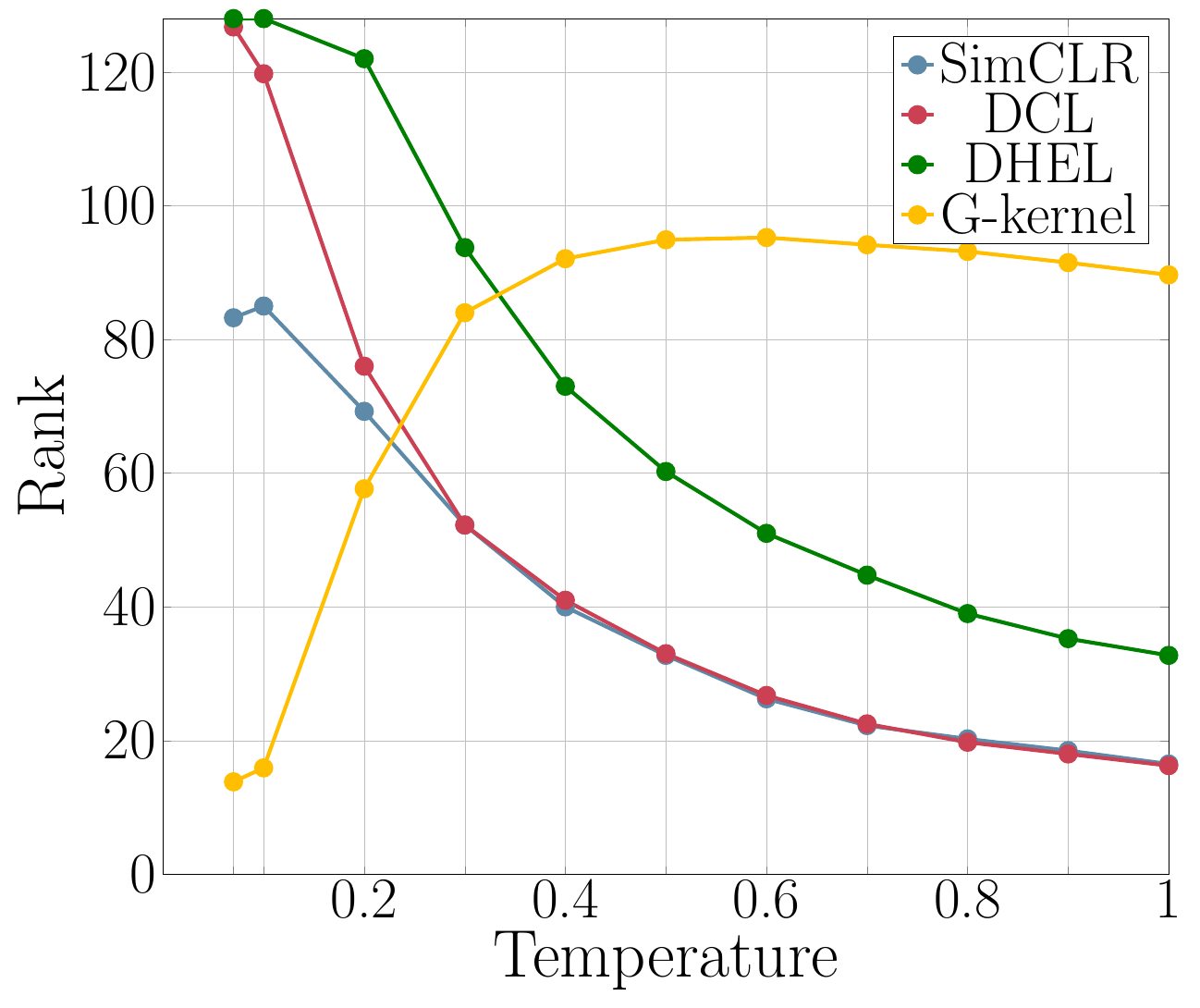}}
    \caption*{(a) Rank}
    \end{minipage}
    \begin{minipage}{0.24\textwidth}
    \resizebox{\textwidth}{!}{\includegraphics{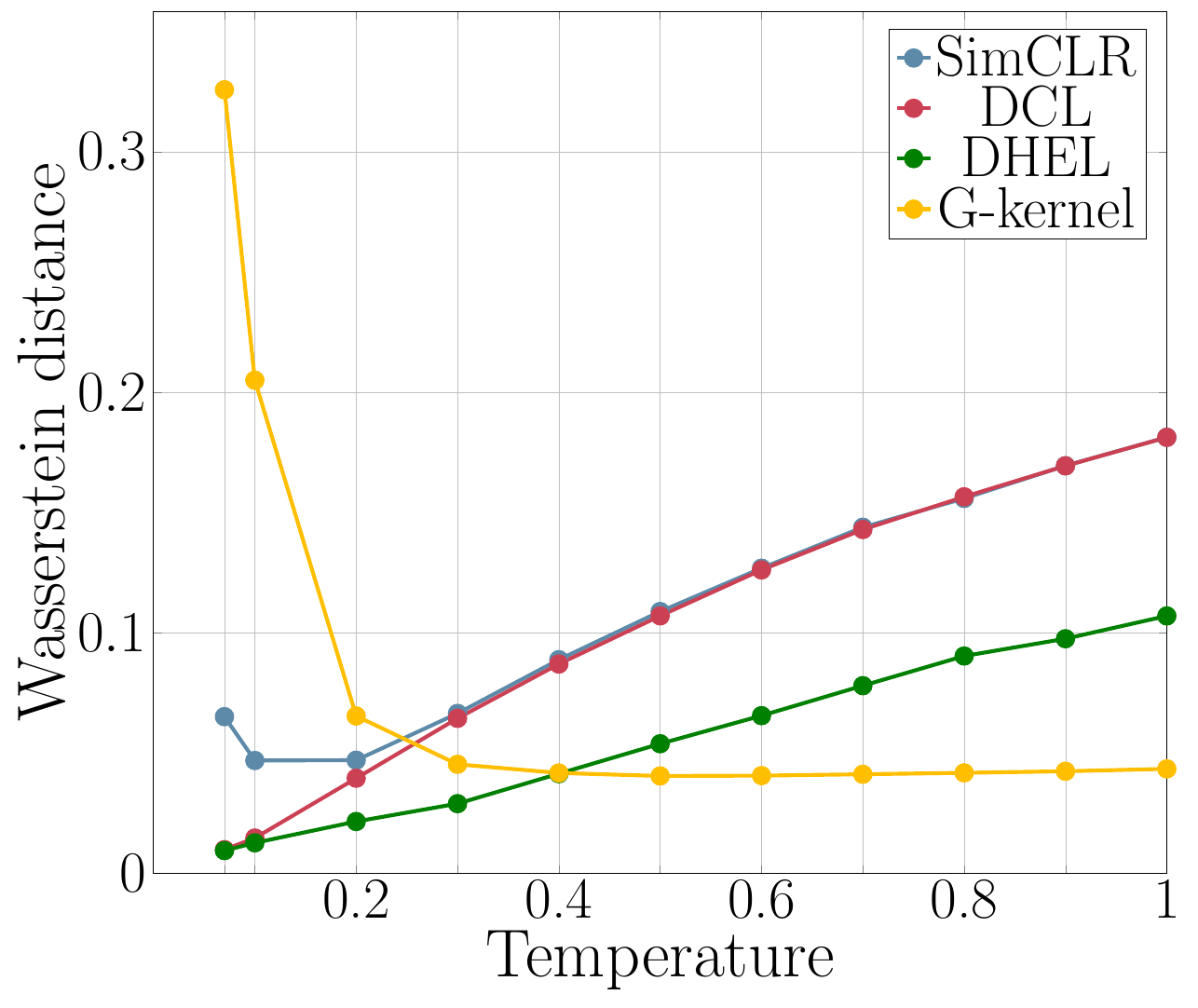}}
    \caption*{(b) Wasserstein distance}
    \end{minipage}
    \begin{minipage}{0.24\textwidth}
    \resizebox{\textwidth}{!}{\includegraphics{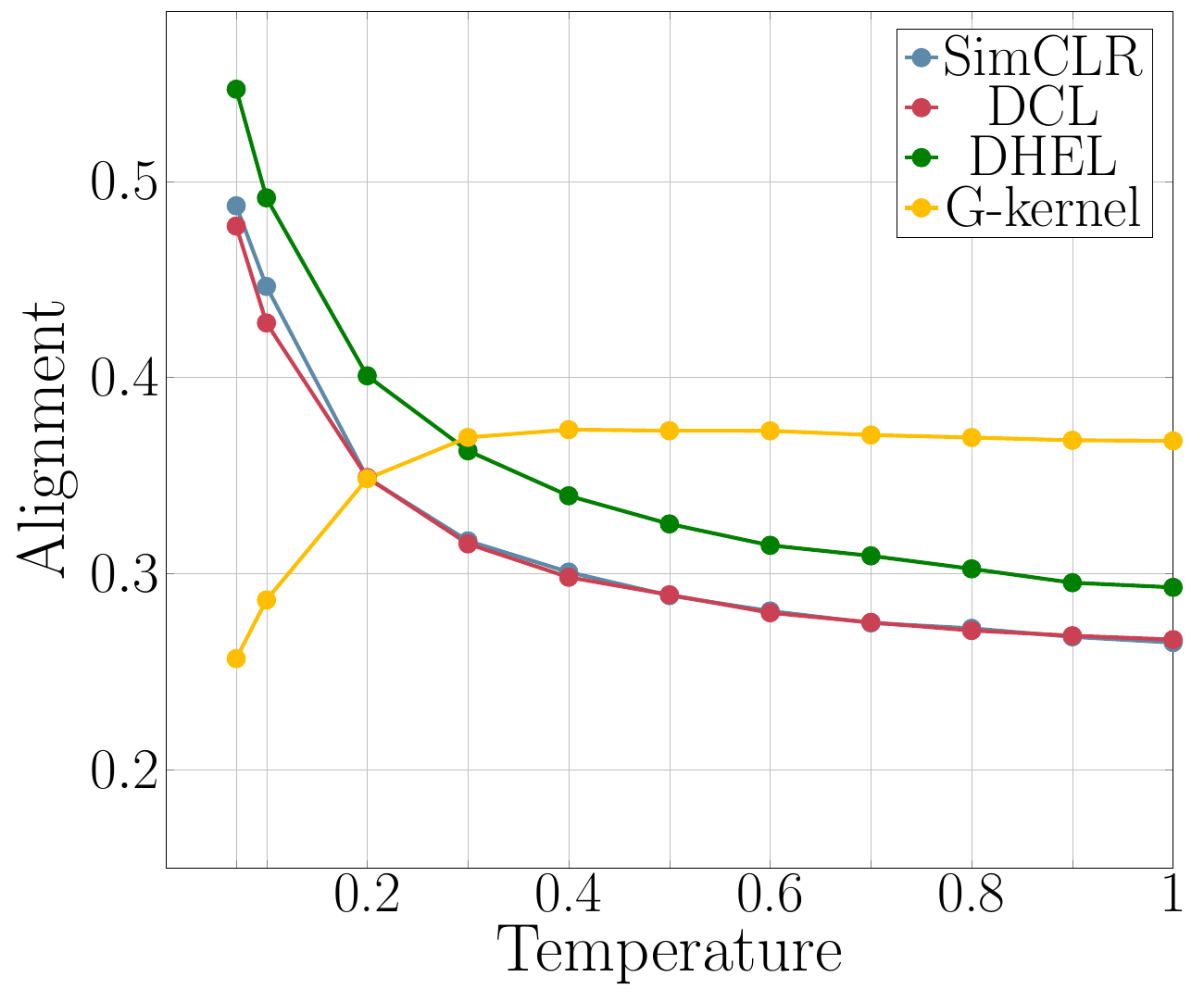}}
    \caption*{(c) Alignment}
    \end{minipage}
    \begin{minipage}{0.24\textwidth}
    \resizebox{\textwidth}{!}{\includegraphics{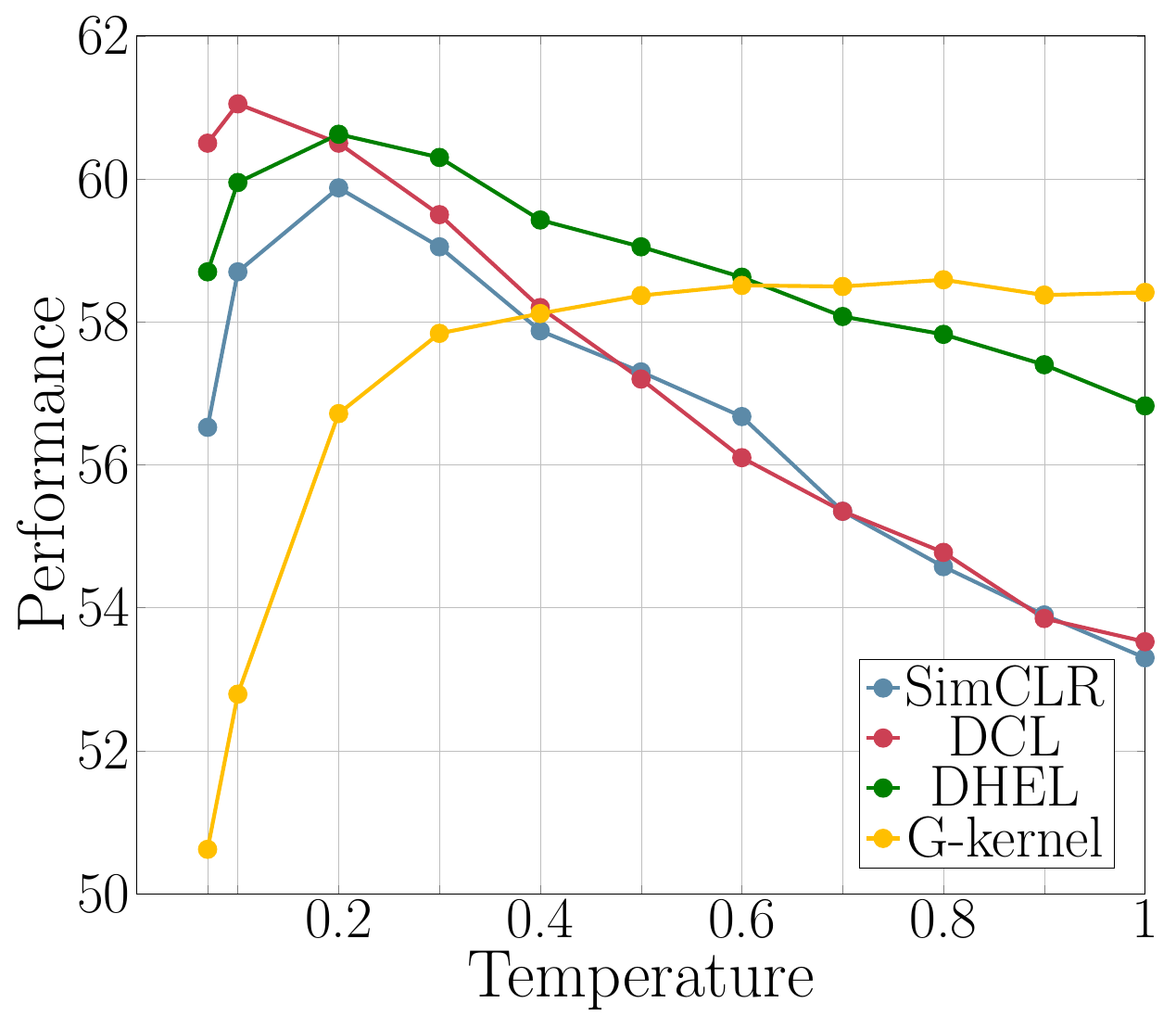}}
    \caption*{(d) Performance}
    \end{minipage}
    \caption{Mean value of properties vs temperature calculated on CIFAR10 (top) \& CIFAR100 (bottom) dataset}\label{fig:properties_temperature}
\end{figure*}

\subsection{Downstream performance and robustness}

The error diagram in \Cref{fig:performance} illustrates both performance and robustness; for each method and batch size, we include the median and 25\% (lower error) and  75\% (upper error) quantiles calculated on the accuracy for different hyperparameters (the median was preferred to the mean due to the presence of a few outliers across all methods). 

\noindent\textbf{Performance.} First off, \textit{\acronym\ significantly outperforms SimCLR across all datasets and batch sizes}, with the upper performance of the latter being smaller than that of the former. Second,\textit{ the median of \acronym\ is always higher than that of DCL}, while their upper performance are comparative (in CIFAR10 and STL10 DHEL's upper performance is always higher than DCL). Additionally, \textit{kernel methods outperform both SimCLR and DCL} competitors, while in several cases, \textit{kernels improve further upon DHEL}.
Overall, \textit{both DHEL and KCL methods showcase significant improvements in median performance, with their upper quantiles being, in most cases, comparable or better than DCL}.

\noindent\textbf{Performance w.r.t. batch size.} 
DHEL and KCL \textit{largely outperform competitors for small batch sizes} in terms of median performance. It is inferred that \textit{our methods enable high downstream performance Contrastive Learning pretraining for a small number of negative samples}. Note that, typically in the literature much larger batch sizes are used, \eg\ \textit{SimCLR needs batch sizes greater than 512 \cite{chen2020simple, yeh2022decoupled}} and \citet{he2020momentum} use batch sizes as large as 64K.

\noindent\textbf{Robustness w.r.t. hyperparameters.}  In addition to the fact that median accuracy of both DHEL and kernel methods consistently outperform SimCLR and DCL, \textit{their performance deviates in a much smaller range}, thus empirically proving robustness w.r.t temperature and $\gamma$ for KCL. Importantly, observe that G-kernel and Log-kernel, in most cases demonstrate small spread around the median, a property that hints that are easier to optimise, in accordance to our result that kernel mini-batch losses are unbiased estimators of an objective minimised by perfect alignment \& perfect HE.

\subsection{Ablation studies}\label{sec:ablations}
\looseness-1In the following section, we ablate the aforementioned methods w.r.t. various metrics:
\textbf{(1) Alignment:} An estimate of the expected distance between the representations of a positive pair.
\textbf{(2) Uniformity}: The logarithm of an estimate of the expected pairwise Gaussian energy potential of the distribution of the learned representations
\textbf{(3) Rank}: The rank of a matrix of representations sampled from $\pdata$; reflects the number of dimensions utilised and thus the ability to linearly separate our data \cite{cover1965geometrical, garrido2023rankme}.
\textbf{(4) Effective rank}: A smooth rank approximation \cite{roy2007effective, garrido2023rankme}, that is less prone to numerical errors; has been found in practice to correlate well with downstream performance. Please refer to Appendix \ref{sec:app_experiments} for more details.
 
\paragraph{Novel metric: Wasserstein distance between similarity distributions. }
\looseness-1We introduce \textbf{(5)} a novel metric. The motivation is the fact that, although the uniformity metric is minimised when the representations are uniformly distributed on the unit sphere, it relies on a specific kernel (gaussian) and requires selecting a parameter $t$. Here we propose instead a metric that measures the distance between the ideal inner product (or equivalently L2 distance) distribution and the one that our algorithm yields. In particular, we estimate the \textit{1-Wasserstein distance} $W_1(\qsim, \psim)$  where $\psim$ is the p.d.f of the inner products when $\bu, \bu' \sim U(\mathbb{S}^{d-1})$ and $\qsim$ is the corresponding one when  $\bu, \bu' \sim f_\#\pdata$. According to  \cite{cho2009inner} the former has a closed-form expression that we can use to obtain samples and estimate the distance from data. Importantly, in Appendix \ref{sec:app_experiments}, we show that the \textit{uniformity metric underestimates the closeness of $\qsim$ to the ideal distribution of similarities}, and therefore our metric paints a more complete picture of the learned distribution of representations. In \Cref{fig:properties_temperature} we demonstrate the mean across batch sizes (and $\gamma$ for g-kernel) for  3 representative properties (alignment, Wasserstein distance \& rank) and performance for different temperatures (including all methods that use this hyperparameter and are comparable) in the CIFAR10 and CIFAR100 datasets. The traditional uniformity metric as well as the effective rank are presented in the Appendix \ref{sec:further_res}.

\looseness-1\noindent\textbf{Dimensionality collapse.} With the maximum number of available dimensions being 128, \textit{DHEL consistently utilises a greater number of dimensions}, \eg  in CIFAR100 uses more than double the dimensions as compared to competitors. Additionally, \textit{gaussian-KCL demonstrates once again its stability}, without compromising the rank, albeit not reaching the highest values of DHEL. \textbf{Uniformity.} \textit{DHEL manages to learn representations that are consistently more uniformly distributed across temperature values}. It is also verified that in the low-temperature regime, all methods learn uniformly distributed features, a behaviour that is known as the uniformity-tolerance dilemma \cite{wang2021understanding}.
\textbf{Alignment.} SimCLR and DCL learn more aligned representations, which along with the aforementioned findings seem to imply that uniformity is preferential for DHEL (see also \Cref{sec:discussion}). It is still not clear why this happens, but we may speculate that our modification indeed facilitates optimisation of the second term, and therefore a weighting coefficient might alleviate this modest imbalance. \textit{ Nevertheless, the current balancing seems to benefit downstream performance more.} 

\noindent\textbf{Performance.}
Downstream performance is decreased with respect to temperature for all methods, except G-kernel, but with \textit{DHEL enjoying a greater range of effective temperatures and a smaller rate of decrease in accuracy}. Once again, observe the \textit{remarkable stability of the G-kernel across different hyperparameter values}.

\section{Discussion}
\label{sec:discussion}

\noindent\textbf{Non-asymptotic optima.}
As demnonstrated in \Cref{tab:cl_variants} and \Cref{tab:our_variants} all examined loss variants share the same minimisers in both the mini batch and the asymptotic scenarios. However, there's a practical discrepancy: the optimal solutions for the former case are attainable by optimizing each batch separately, while the latter scenario is not feasible due to the finite size of the dataset.

In the only practical scenario, where the non-asymptotic expected loss is optimised, only the optimal solution of the kernel based methods is known. This means that in practice contrastive methods may or may not have the same optima. This, along the difference in the bias of the estimator (\Cref{prop:kernel_expected}), may explain the inconsistency in both performance and properties between DHEL and KCL methods. Of course, assuming that the target of Contrastive Learning is indeed perfect alignment and uniformity, then \textit{the fact that KCL optimises for it in the non-asymptotic regime is favourable}.

\noindent\textbf{Batch size dependence.} \Cref{prop:kernel_expected} suggests that the expectation of the mini-batch KCL loss is independent of the batch size. In other words, \textit{different batch sizes yield the same expected loss}, contrary to the InfoNCE methods, where essentially, when one changes the batch size, the loss that is optimised changes as well. Does this mean that KCL should have stable performance across batch sizes? In practice one does actually compute an estimation of the expected loss which is affected by the batch size, while the same holds for the gradient of the expected loss. That is the batch size affects the InfoNCE losses by changing both the actual value of the expected loss and its estimate, while the optimisation of kernel losses is affected only by the second.

Too small batch sizes might lead to suboptimal solutions or slow optimisation, due to high-variance gradients. This holds for all methods, regardless of the loss they are optimising for, which might in part explain why all methods tend to improve when increasing the batch size. However, except for the gradient variance, one should also consider the gradient bias. This is zero for KCL, but not zero for the other methods \cite{chen2022we}.

Overall, when increasing the batch size over a threshold below which the gradients are too noisy, KCL obtains better gradient estimates than its counterparts, due to the zero gradient bias. This probably explains why KCL achieves better performance in absolute numbers. Further increasing the batch size, improves the gradient estimates even more for KCL, but also for the other methods since both their gradient variance and their gradient bias are reduced, which might explain why performance keeps improving across all methods. 

\noindent\textbf{Optimisation vs downstream performance.}  Both InfoNCE and kernel-based losses seek to optimise for uniformity and alignment. Our methods do not in all cases achieve to better optimise for both these desired properties. Instead they achieve a balance that better reflects on downstream performance. They also tend to favour uniformity more (\Cref{fig:properties_temperature}) which is probably desired since recent works \cite{gupta2023structuring, xie2022should} have argued that perfect alignment might not be ideal for downstream performance, since several downstream tasks might not actually be invariant to the augmentations from which we obtain the positive samples. 

\noindent\textbf{Connection to supervised learning.} When performing supervised training beyond zero error the class means either form a simplex ETF in the non-asymptotic case or follow a uniform distribution asymptotically, with zero in-class variability \cite{LiuYWS23}. Our analysis shows that the same results hold for contrastive learning. However, in this case, the results apply to individual data points rather than classes. By considering contrastive learning as instance discrimination \cite{wu2018unsupervised} —where each data point represents a unique class— we can identify connections for both optimisation and the representation spaces learned through self-supervised and supervised methods.

\noindent\textbf{Limitations.}
\looseness-1 Our analysis of InfoNCE loss variants focuses on the mini-batch and asymptotic optima. However, it does not address the non-asymptotic optima, which is the scenario typically encountered in practice. Adding that InfoNCE loss variants are fundamentally different from most Machine Learning objectives, where there is no influence of the batch size on the expected loss, the practical behavior of such loss functions requires further research in order to enhance our understanding of contrastive learning optimisation. In contrast, our work does provide the non-asymptotic optima of kernel methods. While these methods serve as unbiased estimators of their expected loss, examining the variance of these estimators can offer valuable insights that can guide the design of methods that are even more robust across different batch sizes.

The experiments in this work aimed to provide empirical results comparing InfoNCE and kernel-based methods across various properties, including robustness to batch sizes and hyperparameters, rank, uniformity, and alignment. However, a more comprehensive evaluation is necessary to better understand the applicability of these methods. To thoroughly assess the superiority of kernel methods, they need to be tested on large-scale datasets and examined under practical conditions such as large batch sizes, memory banks, and momentum contrast \cite{He0WXG20}.

\section{Conclusion}
\looseness-1In this paper, we made a step towards bridging  theory and practice in CL by proving InfoNCE variants share fundamental finite sample and asymptotic optimal solutions. To better attain these optima exhibiting alignment and uniformity, we proposed Decoupled Hyperspherical Energy Loss. Furthermore, establishing kernel CL as equivalent to hyperspherical energy minimization provides optimization advantages. Both new methods empirically demonstrate consistent improvements in downstream performance across different hyperparameters and
small batch sizes, as well as mitigation of dimensionality collapse. 

\section*{Impact Statement}

\looseness-1This paper advances the understanding of contrastive learning (CL) optimisation goals, aiming not just to boost model performance but to clarify the underpinnings of CL losses and their relation to hyperspherical energy minimization (HEM). While our focus is on theoretical insights and introducing the novel Decoupled Hyperspherical Energy Loss (DHEL), this work also lays the groundwork for developing state-of-the-art models with improved robustness and reduced dimensionality collapse. We acknowledge the potential dual-use of our findings and advocate for responsible application and the development of safeguards against misuse. To facilitate further research, we make our code plublicly available at \url{https://github.com/pakoromilas/DHEL-KCL.git}.

\section*{Acknowledgements}
Panagiotis Koromilas was supported by the Hellenic Foundation for Research and Innovation (HFRI) under the 4th Call for HFRI PhD Fellowships (Fellowship Number:  10816). Giorgos Bouritsas and Yannis Panagakis were supported by the project MIS 5154714 of the National Recovery and Resilience Plan Greece 2.0 funded by the European Union under the NextGenerationEU Program. This research was partially supported by a grant from The Cyprus Institute on Cyclone.

\bibliography{references}
\bibliographystyle{icml2024}

\newpage
\appendix
\onecolumn

\section{Additional Preliminaries and Notations}\label{sec:addtional_prelims}

\paragraph{Detailed Formulas for InfoNCE Variants.}
It is not hard to see that all the losses considered in the main text are special cases of the above two losses:
\begin{equation*}
\begin{split}
    \Linfoncea(\bU, \bV)  &=\frac{1}{M} \sum_{i=1}^M-\log \left(\frac{e^{\bu_i^{\top} \bv_i{/\tau}}}{\sum_{j=1}^M e^{{\bu_i^{\top} \bv_j}{/\tau}}}\right)
    =\frac{1}{M} \sum_{i=1}^M \log \left(1+\sum_{j=1, j \neq i}^M e^{{\left(\bv_j-\bv_i\right)^{\top} \bu_i}{/\tau}}\right)\\
    \Lours(\bU, \bV)  &=\frac{1}{M} \sum_{i=1}^M-\log \left(\frac{e^{{\bu_i^{\top} \bv_i}{/\tau}}}{\sum_{j=1 \atop i \neq j}^M e^{\bu_i^{\top} \bu_j{/\tau}}}\right) 
    =\frac{1}{M} \sum_{i=1}^M \log \left( 
    \sum_{j=1, j \neq i}^M e^{\left(\bu_j-\bv_i\right)^\top \bu_i{/\tau}}\right)\\
    \Lsimclr(\bU, \bV)  &=\frac{-1}{M} \sum_{i=1}^M\log \left(\frac{e^{{\bu_i^{\top} \bv_i}{/\tau}}}{\sum_{j=1}^M e^{{\bu_i^{\top} \bv_j}{/\tau}} + \sum_{j=1 \atop i \neq j}^M e^{{\bu_i^{\top} \bu_j}{/\tau}}}\right)\\ 
    &=\frac{1}{M} \sum_{i=1}^M \log \left(1+\sum_{j=1, j \neq i}^M \left(e^{{\left(\bv_j-\bv_i \right)^{\top} \bu_i}{/\tau}} + e^{{\left(\bu_j-\bv_i\right)^\top \bu_i}{/\tau}}\right)\right)\\
    \Ldcl(\bU, \bV)  &=\frac{-1}{M} \sum_{i=1}^M\log \left(\frac{e^{{\bu_i^{\top} \bv_i}{/\tau}}}{\sum_{j=1 \atop j \neq i}^M e^{{\bu_i^{\top} \bv_j}{/\tau}} + \sum_{j=1 \atop i \neq j}^M e^{{\bu_i^{\top} \bu_j}{/\tau}}}\right)\\ 
    &=\frac{1}{M} \sum_{i=1}^M \log \left(\sum_{j=1, j \neq i}^M \left(e^{{\left(\bv_j-\bv_i \right)^{\top} \bu_i}{/\tau}} + 
     e^{{\left(\bu_j-\bv_i\right)^\top \bu_i}{/\tau}}\right)\right)
\end{split}
\end{equation*}

Therefore,
\begin{equation}\label{eq:special_losses}
\begin{split}
    \Linfoncea(\bU, \bV) &= \Lgeninfoncea\left(\bU, \bV; \exp(x/\tau) ; \log(1 + x) \right) \\
    \Lours(\bU, \bV) &= \Lgeninfonceb\left(\bU, \bV; \exp(x/\tau) ; \log(x) \right) \\
    \Lsimclr(\bU, \bV)  & = \Lgensimclr\left(\bU, \bV; \exp(x/\tau) ; \log(1 + x) \right) \\
    \Ldcl(\bU, \bV)   & = \Lgensimclr\left(\bU, \bV; \exp(x/\tau) ; \log(x) \right) \\
\end{split}
\end{equation}

\paragraph{Kernels.}
Notable examples of kernels that obey the conditions that we encounter in this paper are the following: 
\begin{itemize}
    \item \textit{Linear}: ${\Linear_t(\bx, \by) = -t \|\bx - \by \|^2}$$= \linear_t (\|\bx-\by\|^2)$, where $\linear_t(x) = -t x$.
    \item \textit{Gaussian}: ${\Gauss_t(\bx, \by) = e^{-t \|\bx - \by \|^2}}$ $= \gauss_t (\|\bx-\by\|^2)$, where $\gauss_t(x;t) = e^{-t x}$.
    \item \textit{Riesz}: ${\Riesz_s(\bx, \by) = \text{sign}(s) \|\bx - \by \|^{-s}}$$= \riesz_s (\|\bx-\by\|^2)$, where $\riesz_s(x) = \text{sign}(s)x^{-s/2}$.
    \item \textit{Inverse Multiquadric (IMQ)}:
    ${K^{imq}_{c}(\bx, \by) = -\frac{1}{2}\log\left(s \|\bx - \by \|^{2} + \beta\right)}$$=\logar_{s, \beta} (\|\bx-\by\|^2)$, where $\logar_{s, \beta}(x) = -\frac{1}{2}\log \left( sx + \beta\right)$.
    \item \textit{Logarithmic}:
    ${\Logar_{s, \beta}(\bx, \by) = -\frac{1}{2}\log\left(s \|\bx - \by \|^{2} + \beta\right)}$$=\logar_{s, \beta} (\|\bx-\by\|^2)$, where $\logar_{s, \beta}(x) = -\frac{1}{2}\log \left( sx + \beta\right)$.
\end{itemize}

The properties of kernel functions that arise in the theoretical results are the below: (1) (strict) \textit{monotonicity}, (2) (strict) \textit{convexity}, (3) (strict) \textit{absolute monotonicity}, \ie\ derivatives of all orders $f^{(n)}$ exist and are non-negative (positive in the strict case) everywhere)and (4) \textit{complete monotonicity}, \ie\ derivatives of all orders exist and $(-1)^n f^{(n)}\geq 0$ everywhere ($>0$ for the strict case).

With elementary derivations, it is easy to see the following for $t > 0$, $\linear_t$ and $\gauss_t$ are strictly decreasing and convex, while the latter is also strictly convex. For $s>-2$, $\riesz_s$ is strictly decreasing and strictly convex, while the same holds for $\logar_{s, \beta}$ when $s,\beta>0$. Additionally, for $t>0$, $\linear_t$ is completely monotone, $\gauss_t$ is strictly completely monotone, while the same holds for their negative first derivatives $-(\linear_t)^{(1)}, -(\gauss_t)^{(1)}$. $\riesz_s$ is strictly completely monotone for $s>0$, while its negative first derivative is strictly completely monotone for $s>-2$. $\logar_{s, \beta}$ is strictly completely monotone for $s,\beta>0$, while the same holds for its negative first derivative.

\section{Deferred Proofs}
\subsection{Minima of Mini-Batch Contrastive Losses}\label{sec:mb_minima}

Now we can proceed in proving our first theorem which encapsulates Theorems \ref{thrm:mb_main_theorem} and \ref{thrm:mb_dhel_main_theorem} of the main paper.

\begin{theorem}\label{thrm:mb_main_theorem_proof}
Consider the following optimisation problem:
\begin{equation}\label{eq: mb_optimisation_proof}
    \underset{\bU, \bV \in (\mathbb{S}^{d-1})^M}{\argmin} \Lgentotal(\bU, \bV),
\end{equation}
with $\Lgentotal(\bU, \bV) = \frac{1}{2} \left(\Lgen(\bU, \bV) + \Lgen(\bV, \bU) \right)$,
where $\mathbb{S}^{d-1} = \{\bu \in \mathbb{R}^{d}\  | \ \|\bu\|_2=1\}$ a unit sphere of $d$ dimensions, $\bU, \bV$ are tuples of $M$ vectors on the unit sphere and $\Lgen(\cdot, \cdot)$ is any of the loss functions $\{\Lgeninfoncea(\cdot, \cdot; \phi, \psi),  \Lgensimclr(\cdot, \cdot; \phi, \psi), \Lgeninfonceb(\cdot, \cdot; \phi, \psi),\}$ as defined in  Eq. \eqref{eq:general_losses} and \eqref{eq:general_dhel_loss}. Further, suppose the following conditions: (1) $\phi: \mathbb{R} \to \mathbb{R}$ is \textbf{increasing \& convex}, (2) $\psi: \mathbb{R} \to \mathbb{R}$ is \textbf{increasing \& $\tilde{\psi}(x; \alpha) = \psi\left(\alpha \phi\left(x\right)\right)$ is convex for $\alpha>0$} and (3) $1<M \leq d + 1$. Then, the optimisation problem of Eq. \eqref{eq: mb_optimisation} obtains its optimal value $(\bU^*, \bV^*)$ when:
\begin{equation}\label{eq:mb_minima_proof}
    \bU^* = \bV^* \quad \text{ and } \quad \bU^*= [\bu^*_1, \dots, \bu^*_M] \  \text{ form a regular $M-1$ simplex centered at the origin}.
\end{equation}
Additionally, (4) if $\psi, \phi$ are \textbf{strictly increasing} and $\tilde{\psi}$ is  \textbf{strictly convex} then all the $(\bU^*, \bV^*)$ that satisfy Eq. \eqref{eq:mb_minima_proof} are the \textbf{unique} optima.
\end{theorem}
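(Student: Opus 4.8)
The plan is to reduce all three losses to a common scalar optimisation problem over the Gram matrix of the representations, then exploit convexity. First I would observe that for each of $\Lgeninfoncea$, $\Lgensimclr$, $\Lgeninfonceb$ the symmetrised mini-batch loss $\Lgentotal(\bU,\bV)$ depends on $\bU,\bV$ only through the inner products $\bu_i^\top\bu_j$, $\bv_i^\top\bv_j$ and $\bu_i^\top\bv_j$; in particular, after expanding $(\bv_j-\bv_i)^\top\bu_i = \bv_j^\top\bu_i - \bv_i^\top\bu_i$ etc., the alignment terms $\bu_i^\top\bv_i$ appear separately (inside $\phi$) from the negative inner products. I would then argue that the loss is \emph{decreasing} in each alignment inner product $\bu_i^\top\bv_i$ (since $\phi$ is increasing in its argument and the argument $\bv_j^\top\bu_i - \bv_i^\top\bu_i$ is decreasing in $\bv_i^\top\bu_i$, and $\psi$ is increasing), so at any optimum we must have $\bu_i^\top\bv_i = 1$, i.e.\ $\bu_i = \bv_i$, hence $\bU^* = \bV^*$. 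This collapses the three losses: once $\bU=\bV$, $\Lgeninfoncea$ and $\Lgeninfonceb$ become literally identical, and $\Lgensimclr$ becomes (up to the constant contributed by the now-duplicated positive term, which is fixed at value $\phi(0)$ per index) a positive multiple plus shift of the same thing. So it suffices to analyse $\frac{1}{M}\sum_i \psi\!\big(\sum_{j\neq i}\phi(\bu_j^\top\bu_i - 1)\big)$ over $\bU\in(\mathbb{S}^{d-1})^M$.

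Next I would set $s_{ij} = \bu_i^\top\bu_j \in [-1,1]$ and treat the problem as minimising a function of the off-diagonal entries of the Gram matrix $G = \bU^\top\bU \succeq 0$, $\operatorname{rank}(G)\le d$, $G_{ii}=1$. The target is $F(\bU) = \frac{1}{M}\sum_i \psi\!\big(\sum_{j\neq i}\phi(s_{ij}-1)\big)$. By condition (2), $\tilde\psi(x;\alpha) = \psi(\alpha\phi(x))$ is convex — but the inner sum is not a single $\phi$ evaluated at one point, so I would instead use a Jensen / convexity-of-composition argument on the sum: since $\psi$ is increasing and convex when composed as stated, and $\phi$ is convex, the map $\bU\mapsto F(\bU)$ is bounded below by evaluating at the ``averaged'' configuration. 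Concretely, I expect the argument from \citet{sreenivasan2023minibatch} generalises: apply Jensen to pull the average over $i$ inside $\psi$, use convexity of $\phi$ to lower-bound $\sum_{j\neq i}\phi(s_{ij}-1) \ge (M-1)\phi\!\big(\frac{1}{M-1}\sum_{j\neq i}s_{ij} - 1\big)$, and then use that $\sum_{i\neq j} s_{ij} = \|\sum_i \bu_i\|^2 - M \ge -M$ with equality iff $\sum_i\bu_i = 0$. Monotonicity of $\psi$ and $\phi$ (increasing) turns the lower bound on the inner argument into a lower bound on $F$, and the bound is achieved exactly when all $s_{ij}$ are equal (equality in Jensen) and $\sum_i\bu_i=0$ — which forces $s_{ij} = -\frac{1}{M-1}$ for all $i\neq j$, i.e.\ a regular $M-1$ simplex. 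The hypothesis $M\le d+1$ is exactly what guarantees such a configuration is realisable in $\mathbb{R}^d$ (the Gram matrix $\frac{M}{M-1}I - \frac{1}{M-1}\mathbf{1}\mathbf{1}^\top$ has rank $M-1 \le d$).

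For part (4), the uniqueness claim, I would track the equality conditions through each inequality used: strict monotonicity of $\phi$ (together with $\bu_i^\top\bv_i\le 1$) forces $\bu_i=\bv_i$ with no slack; strict convexity of $\tilde\psi$ / $\phi$ forces equality in Jensen only when all inner arguments coincide, hence all $s_{ij}$ equal; and $\sum_i\bu_i=0$ is forced by the strict inequality $\|\sum_i\bu_i\|^2\ge 0$ being tight. Combining, every minimiser has $\bU^*=\bV^*$ and Gram matrix exactly $\frac{M}{M-1}I-\frac{1}{M-1}\mathbf{1}\mathbf{1}^\top$, which characterises regular $M-1$ simplices up to orthogonal transformation — so the set described in Eq.~\eqref{eq:mb_minima_proof} is precisely the optimiser set.

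The main obstacle I anticipate is the second step: carefully justifying the chain of convexity/Jensen inequalities for $\Lgensimclr$, whose denominator mixes inner products of the form $\bu_i^\top\bu_j$ and $\bu_i^\top\bv_j$ with $i$ and $j$ ranging over a $2M$-index set with two exclusions, so that after substituting $\bU=\bV$ one must check the combinatorics of which terms survive and confirm the resulting expression still matches the $\Lgeninfoncea$ form up to an affine transformation that preserves argmin. A secondary subtlety is making sure condition (2) — convexity of $\tilde\psi(x;\alpha)=\psi(\alpha\phi(x))$ rather than of $\psi$ itself — is the right hypothesis to push the sum-of-$\phi$'s through $\psi$; this likely needs the observation that $\psi\big(\sum_{j}\phi(x_j)\big) \ge \psi\big((M-1)\phi(\bar x)\big)$ which follows from $\psi$ increasing plus convexity of $\phi$, and then the convexity in $\alpha$-scaled form is what is actually needed for the outer Jensen over $i$. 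I would double-check this reduction is exactly the content of condition (2) as stated.
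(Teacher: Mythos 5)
There is a genuine gap at the very first step, and the rest of your argument inherits it. You claim that because each summand of $\Lgeninfoncea$ contains $\phi(\bv_j^\top\bu_i-\bv_i^\top\bu_i)$ and $\phi,\psi$ are increasing, the loss is decreasing in each alignment inner product $\bu_i^\top\bv_i$, hence every optimum has $\bu_i=\bv_i$. But the inner products are not free variables: forcing $\bu_i^\top\bv_i=1$ means moving $\bv_i$ onto $\bu_i$, and $\bv_i$ also appears as a \emph{negative} for the other anchors --- in $\Lgeninfoncea$ through $\bv_i^\top\bu_k$ in the $k$-th summand ($k\neq i$), in $\Lgensimclr$ likewise, and even for $\Lgeninfonceb$ through the swapped term $\Lgen(\bV,\bU)$ in the symmetrised loss $\Lgentotal$. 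Moving $\bv_i$ to $\bu_i$ can therefore increase those other terms, so ``decreasing in $\bu_i^\top\bv_i$'' is not a valid statement about the constrained problem and does not yield $\bU^*=\bV^*$ a priori. Since your subsequent lower bound (Jensen on $\phi$, then on $\tilde\psi$, then $\sum_{i\neq j}s_{ij}=\|\sum_i\bu_i\|^2-M\geq -M$) is carried out only on the diagonal slice $\bU=\bV$, it does not bound the loss at off-diagonal configurations, so it neither proves that the simplex configurations are global optima (conditions (1)--(3)) nor that all optima lie in that set (condition (4)). A secondary inaccuracy: with $\bU=\bV$, $\Lgensimclr$ is not an affine transform of the common form but $\frac{1}{M}\sum_i\psi\bigl(2\sum_{j\neq i}\phi(\bu_j^\top\bu_i-1)\bigr)$; this is harmless only because condition (2) covers $\tilde\psi(\cdot;\alpha)$ for every $\alpha>0$.

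The paper avoids this by never restricting to the diagonal: it runs the same two Jensen steps on the full $(\bU,\bV)$ problem, which reduces the task to maximising a bilinear form such as $\bv_{\text{stack}}^\top\bigl(\bigl(M\,\mathbb{I}_M-\mathbf{1}_M\mathbf{1}_M^\top\bigr)\otimes\mathbb{I}_d\bigr)\bu_{\text{stack}}$ (and analogues for $\Lgensimclr$ and $\Lgeninfonceb$). A Cauchy--Schwarz/spectral-norm bound on this form then yields the global lower bound, and the \emph{equality conditions} of Cauchy--Schwarz are what deliver both $\bV^*=\bU^*$ and $\sum_i\bu_i^*=\mathbf{0}$, with equality in the $\phi$-Jensen step (under strict convexity) forcing all off-diagonal inner products equal, i.e.\ the regular $M-1$ simplex. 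Your sketch would become correct if you kept both $\bU$ and $\bV$ free through the Jensen chain and then handled the resulting bilinear objective jointly in $(\bU,\bV)$ --- the alignment $\bU=\bV$ must come out of that joint optimisation, not be imposed beforehand by a per-coordinate monotonicity argument.
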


\begin{proof}

Let us start from $\Lgeninfoncea$. Our proof will follow similar steps as in \cite{sreenivasan2023minibatch} but in a more general fashion. \\ \\
\noindent\textbf{Part I: $\Lgeninfoncea$.}

First, we will use the convexity of $\phi$ to lower bound the inner sum.
\begin{align*} \sum_{j=1, j \neq i}^M \phi\big({\left(\bv_j-\bv_i\right)^{\top} \bu_i}\big) 
& \stackrel{(a)}{\geq}(M-1)
\phi \left(\frac{1}{M-1}\sum_{j=1, j \neq i}^M \left(\bv_j^{\top} \bu_i  - \bv_i^{\top} \bu_i \right)\right) \\ 
& =(M-1)
\phi \left(\frac{\bv^{\top} \bu_i - \bv_i^{\top} \bu_i - (M-1)( \bv_i^{\top }\bu_i)}{M-1} \right)  =(M-1)
\phi \left(\frac{\bv^{\top} \bu_i- M\bv_i^{\top} \bu_i}{M-1}   \right),
\end{align*}
where $(a)$ follows from Jensen's inequality (Condition (1)) and $\bv = \sum_{j=1}^M \bv_j$. Then, we will use the convexity of $\tilde{\psi}$ (Condition (2)) to bound the outer sum.

\begin{align*} \Lgeninfoncea(\bU, \bV; \phi, \psi) &  \stackrel{(b)}{\geq}\frac{1}{M} \sum_{i=1}^M \psi \left((M-1) \phi \left(\frac{\bv^{\top} \bu_i - M\bv_i^{\top} \bu_i}{M-1}\right)\right)\\
&= \frac{1}{M} \sum_{i=1}^M \tilde{\psi}  \left(\frac{\bv^{\top} \bu_i - M\bv_i^{\top} \bu_i}{M-1} ; M-1\right) \\
& \stackrel{(c)}{\geq} \tilde{\psi} \left(\frac{1}{M} \sum_{i=1}^M\left(\frac{\bv^{\top} \bu_i - M\bv_i^{\top} \bu_i}{M-1}\right);M-1 \right) \\
& =\psi \left((M-1) \phi \left(\frac{1}{M} \sum_{i=1}^M\left(\frac{\bv^{\top} \bu_i - M\bv_i^{\top} \bu_i}{M-1}\right)\right)\right) \\ 
& = \psi \left((M-1) \phi \left(\frac{1}{M(M-1)}\left({\bv^{\top} \bu}-{M} \sum_{i=1}^M\bv_i^{\top} \bu_i\right)\right)\right) \\
& =\psi \left((M-1) \phi \left(\frac{1}{M} \sum_{i=1}^M\left(\frac{\bv^{\top} \bu_i - M\bv_i^{\top} \bu_i}{M-1}\right)\right)\right) \\ 
& \stackrel{(d)}{\geq} \psi \left((M-1) \phi \left(\frac{1}{M(M-1)}\left({\bv^{*\top} \bu^*}-{M} \sum_{i=1}^M\bv_i^{*\top} \bu^*_i\right)\right)\right),
\end{align*}
where $\bu = \sum_{j=1}^M \bu_j$ and $\bu_i^*, \bv_i^*$ are the minima of the inner argument, $(b)$ follows from the fact that $\psi$ is increasing (Condition (2)), $(c)$ follows from the fact that $M-1>0$ and Jensen's inequality (Condition (2)) and (d) from the fact that $\psi$ and $\phi$ are increasing (Conditions (1) and (2)) and $M-1 > 0$. Thus, it suffices to solve the following optimisation problem:
\begin{equation*}
    \underset{\bU, \bV \in (\mathbb{S}^{d-1})^M}{\argmin}  \frac{1}{M(M-1)}\left({\bv^{\top} \bu}-{M} \sum_{i=1}^M\bv_i^{\top} \bu_i\right)\Leftrightarrow \underset{\bU, \bV \in (\mathbb{S}^{d-1})^M}{\argmax}   M \sum_{i=1}^M \bv_i^{\top} \bu_i-\left(\sum_{i=1}^M \bv_i\right)^{\top}\left(\sum_{i=1}^M \bu_i\right) 
\end{equation*}
or equivalently, as shown in Eq. (12) in the Appendix of \cite{sreenivasan2023minibatch},
\begin{equation}\label{eq:mb_optim_1st}
\underset{\bU, \bV \in (\mathbb{S}^{d-1})^M}{\argmax}  \quad \bv_{\text {stack }}^{\top}\left(\left(M \mathbb{I}_M-\mathbf{1}_M \mathbf{1}_M^{\top}\right) \otimes \mathbb{I}_d\right) \bu_{\text {stack }},
\end{equation}
where $\bv_{\text {stack}} = \left[\bv_1^\top, \dots, \bv_M^\top\right]^\top \in \mathbb{R}^{Md\times 1}$, $\bu_{\text {stack}} = \left[\bu_1^\top, \dots, \bu_M^\top\right]^\top \in \mathbb{R}^{Md\times 1}$, $\mathbb{I}_M \in \mathbb{R}^{M \times M}$ the identity matrix, $\mathbf{1}_M \in \mathbb{R}^{M \times 1}$ a vector whose elements are all equal to 1 and $\otimes$ denotes the Kronecker product. The optimisation problem of Eq. \eqref{eq:mb_optim_1st} has been studied in \cite{LU2022224} and \cite{sreenivasan2023minibatch}. We repeat the result here for completeness. 

It is shown that the eigenvalues of $\left(M \mathbb{I}_M-\mathbf{1}_M \mathbf{1}_M^{\top}\right)$ are equal to $M$ with multiplicity $M-1$ and 0 with multiplicity 1, with corresponding eigenvectors $\bp \in \mathbb{R}^M$ such that $\bp^\top \mathbf{1}_M = 0$ and $\bp = k \mathbf{1}_M$ respectively. Therefore, since the set of eigenvalues of the Kronecker product of two matrices contains all the possible pair-wise products of the eigenvalues of the individual matrices, the eigenvalues of $\left(M \mathbb{I}_M-\mathbf{1}_M \mathbf{1}_M^{\top}\right) \otimes \mathbb{I}_d$ are  $M$ with multiplicity $M(M-1)$ and $0$ with multiplicity $M$. Since the matrix of interest is symmetric, its singular values coincide with the absolute of its eigenvalues, and therefore $\|\left(M \mathbb{I}_M-\mathbf{1}_M \mathbf{1}_M^{\top}\right) \otimes \mathbb{I}_d\|_2 = M$. Concluding:

\begin{align}
\bv_{\text {stack }}^{\top}\left(\left( M \mathbb{I}_M - \mathbf{1}_M \mathbf{1}_M^{\top}\right) \otimes \mathbb{I}_d\right) \bu_{\text {stack }}
&\stackrel{(e)}{\leq} \|\bv_{\text {stack }}\|_2 \|\left(\left(M \mathbb{I}_M-\mathbf{1}_M \mathbf{1}_M^{\top}\right) \otimes \mathbb{I}_d\right)\bu_{\text {stack }}\|_2\notag\\
&\stackrel{(f)}{\leq} \|\bv_{\text {stack }}\|_2 \|\left(M \mathbb{I}_M-\mathbf{1}_M \mathbf{1}_M^{\top}\right) \otimes \mathbb{I}_d \|_2 \|\bu_{\text {stack }}\|_2\notag\\
&= (\sqrt{M}) M \sqrt{M} = M^2 \label{spectral_norm},
\end{align}
where (e) follows from the Cauchy–Schwarz inequality, (f) from the definition of the spectral norm and the last equality from the fact that $\|\bu_i\| = \|\bv_i\| = 1, \forall i \in [M]$. Now, moving backwards for every inequality (a)-(e) we have that:

\begin{itemize}
\item (f) 
holds with equality \textbf{iff} $\bu_{\text{stack}}$ is an eigenvector corresponding to the maximum eigenvalue, \ie\ $\left(\left(M \mathbb{I}_M-\mathbf{1}_M \mathbf{1}_M^{\top}\right) \otimes \mathbb{I}_d \right)\bu_{\text{stack}} = M \bu_{\text{stack}}$. But using a common property of the Kronecker product and the fact that $\text{vec}(\bU) = \bu_{\text{stack}}$, it follows that 
$\left(\left(M \mathbb{I}_M-\mathbf{1}_M \mathbf{1}_M^{\top}\right) \otimes \mathbb{I}_d \right)\bu_{\text{stack}} = \text{vec}\left(\mathbb{I}_d \bU \left(M \mathbb{I}_M-\mathbf{1}_M \mathbf{1}_M^{\top}\right)^\top \right)
= \text{vec}\left([\bu_1 \dots \bu_M] \left(M \mathbb{I}_M-\mathbf{1}_M \mathbf{1}_M^{\top}\right)^\top \right)$ and thus:
\begin{align}
\text{vec}\left([\bu_1 \dots \bu_M] \left(M \mathbb{I}_M-\mathbf{1}_M \mathbf{1}_M^{\top}\right)^\top \right) = M \bu_{\text{stack}}
&\Leftrightarrow
[\bu_1^{(\ell)} \dots \bu_M^{(\ell)}] \left(M \mathbb{I}_M-\mathbf{1}_M \mathbf{1}_M^{\top}\right)^\top = M [\bu_1^{(\ell)} \dots \bu_M^{(\ell)}]\notag\\
&\Leftrightarrow 
\left(M \mathbb{I}_M-\mathbf{1}_M \mathbf{1}_M^{\top}\right)[\bu_1^{(\ell)}, \dots.\bu_M^{\ell}]^\top = M [\bu_1^{(\ell)}, \dots, \bu_M^{(\ell)}]^\top\notag\\
&\Leftrightarrow [\bu_1^{(\ell)}, \dots, \bu_M^{(\ell)}] \mathbf{1}_M = 0, \forall \ell \in [d]\notag\\
&\Leftrightarrow \sum_{i=1}^M \bu^*_i =\bu^* = \mathbf{0}.\label{eq:zero_mean_optimum}
\end{align}
\item (e) 
holds with equality \textbf{iff} $\bv_{\text{stack}} = k\bu_{\text{stack}}$ for any $k>0$. But since $\|\bv_{\text{stack}}\| = \|\bu_{\text{stack}}\| = \sqrt{M}$, then it must be that $k=1$ and therefore:
\begin{equation}\label{eq:alignment_optimum}
    \bv^*_i = \bu^*_i, \ \forall i \in [M].
\end{equation}
\item (d)
holds with equality only if $\bu_i = \bu_i^*$ and $\bv_i = \bv_i^*$ when \textbf{$\phi, \psi$ are strictly increasing} (Condition (4)). If the latter doesn't hold, then we might obtain the optimum for other input values besides the ones that were found in the previous two conditions.
\item (c) holds with equality \textbf{iff} $\frac{\bv^{\top} \bu_i - M\bv_i^{\top} \bu_i}{M-1} = c_1$ constant $\forall i \in [M]$ or if $\tilde{\psi}$ is linear. The former is already satisfied by the  
conditions \eqref{eq:zero_mean_optimum}, \eqref{eq:alignment_optimum} and $c_1 =-\frac{M}{M-1}$. Thus, no extra condition arises here.
\item (b) holds with equality only if $\bu_i, \bv_i$ minimise the arguments of $\psi$ (which happens when the conditions of (a) are satisfied) when \textbf{$\psi$ is strictly increasing} (Condition (4)). Again, if the latter doesn't hold, we might obtain the optimum for other input values besides those found by the rest of the conditions.
\item (a) holds with equality if $(\bv_j - \bv_i)^{\top} \bu_i = \tilde{c}_2$ constant $\forall i,j \in [M]$. If $\phi$ is \textbf{strictly convex} (Condition (4)), then this will be the only condition allowing equality to hold. Given condition \eqref{eq:alignment_optimum}, the former implies that $\bu_j^\top \bu_i = 1+ \tilde{c}_2 = c_2$ and since from condition \eqref{eq:zero_mean_optimum} we have that $\bu = 0$, then:
\begin{align}\label{eq:ETF}
    0 &= \bu^\top \bu = \left(\sum_{i=1}^M\bu_i^\top\right)  \left(\sum_{i=1}^M \bu_i\right) = \sum_{i=1}^M \bu_i^\top \bu_i + \sum_{i,j=1 \atop j\neq i}^M \bu_i^\top \bu_j \Leftrightarrow 0 = M + M(M-1) c_2\nonumber\\
    &\Leftrightarrow \bu_i \bu_j = -\frac{1}{M-1} \forall i \neq j \in [M].
\end{align}

\end{itemize}

Conditions \eqref{eq:zero_mean_optimum}, \eqref{eq:alignment_optimum}, \eqref{eq:ETF} prove that any point configuration that satisfies Eq. \eqref{eq:mb_minima} is an optimum of Eq. \eqref{eq: mb_optimisation} for the $\Lgeninfoncea$ loss and with the additional conditions of strict monotonicity of $\phi, \psi$ and strict convexity of $\phi$ we also obtain that these point configurations are the unique optima. 

It remains to show that these optima can be indeed attained. In particular,
Eq. \eqref{eq:zero_mean_optimum} and Eq. \eqref{eq:alignment_optimum} are easy to attain for any twin (identical) point configurations that are centred at the origin.
Eq. \eqref{eq:ETF}, or more precisely the fact that all angles are equal, is exactly the definition of a \textit{regular $M-1$-dimensional simplex inscribed in the sphere}. However, for the regular $M-1$-dimensional simplex to exist the ambient dimension must be at least as large as $M-1$, \ie $d\geq M-1$, which justifies Condition (3).
\\

\noindent\textbf{Part II: $\Lgensimclr$.} 
Using a similar rationale for $\Lgensimclr$ we obtain:
\begin{equation*}
\sum_{j=1, j \neq i}^M \left(\phi\left(\left(\bv_j-\bv_i\right)^{\top} \bu_i\right) + \phi\left(\left(\bu_j-\bv_i\right)^{\top} \bu_i\right)\right)
 \stackrel{(a')}{\geq} 
2(M-1)\phi \left(\frac{\bv^{\top} \bu_i- (2M-1)\bv_i^{\top} \bu_i + \bu^{\top} \bu_i - 1}{2(M-1)}   \right), 
\end{equation*}
and subsequently:
\begin{align*}    
\Lgensimclr(\bU, \bV) 
&  \stackrel{(b')}{\geq} \frac{1}{M} \sum_{i=1}^M \psi \left(2(M-1) \phi \left(\frac{\bv^{\top} \bu_i- (2M-1)\bv_i^{\top} \bu_i + \bu^{\top} \bu_i - 1}{2(M-1)}\right) \right) \\
& = \frac{1}{M} \sum_{i=1}^M \tilde{\psi}  \left(\frac{\bv^{\top} \bu_i- (2M-1)\bv_i^{\top} \bu_i + \bu^{\top} \bu_i- 1}{2(M-1)} ;2(M-1)\right)\\
& \stackrel{(c')}{\geq} \psi \left(2(M-1) \phi \left(\frac{1}{M} \sum_{i=1}^M\left(\frac{\bv^{\top} \bu_i- (2M-1)\bv_i^{\top} \bu_i + \bu^{\top} \bu_i- 1}{2(M-1)}\right)\right)\right) \\ 
& = \psi \left(2(M-1) \phi \left(\frac{1}{2M(M-1)}\left(\bv^{\top} \bu-(2M-1)\sum_{i=1}^M\left(\bv_i^{\top} \bu_i\right)  + \bu^{\top} \bu- M\right)\right)\right)\\
& \stackrel{(d')}{\geq}\psi \left(2(M-1) \phi \left(\frac{1}{2M(M-1)}\left(\bv^{*\top} \bu^*-(2M-1)\sum_{i=1}^M\left(\bv_i^{*\top} \bu^*_i\right)  + \bu^{*\top} \bu^*- M\right)\right)\right),
\end{align*}
where once again $(a')$ follows from Jensen's inequality, $(b')$ follows from the fact that $\psi$ is increasing, $(c')$ follows from Jensen's inequality and the fact that $M-1>0, 2(M-1)>0$ and (d') from the fact that $\psi$ and $\phi$ are increasing and $M-1 > 0, 2M(M-1)>0$. Again it suffices to solve the following optimisation problem.
\begin{align}
&\underset{\bU, \bV \in (\mathbb{S}^{d-1})^M}{\argmin}  \frac{1}{2M(M-1)}\left({\bv^{\top} \bu}-
(2M -1) \sum_{i=1}^M\bv_i^{\top} \bu_i + {\bu^{\top} \bu} - M\right)\notag\\
&= \underset{\bU, \bV \in (\mathbb{S}^{d-1})^M}{\argmax}   (2M-1) \sum_{i=1}^M \bv_i^{\top} \bu_i-(\bv^{\top} + \bu^{\top})\bu\notag\\
&=\underset{\bU, \bV \in (\mathbb{S}^{d-1})^M}{\argmax}  (2M-1) \sum_{i=1}^M \bv_i^{\top} \bu_i-(\bv^{\top} + \bu^{\top})\bu+ (2M-1)\sum_{i=1}^M \bu_i^{\top} \bu_i - M(2M-1)\notag\\
&= \underset{\bU, \bV \in (\mathbb{S}^{d-1})^M}{\argmax}
(2M-1) \sum_{i=1}^M (\bv_i^\top+ \bu_i^{\top})\bu_i-(\bv^{\top} + \bu^{\top})\bu\notag\\
&= \underset{\bU, \bV \in (\mathbb{S}^{d-1})^M}{\argmax} (\bv_{\text {stack }} + \bu_{\text {stack }})^{\top}\left(\left((2M-1) \mathbb{I}_M-\mathbf{1}_M \mathbf{1}_M^{\top}\right) \otimes \mathbb{I}_d\right) \bu_{\text {stack }}.
\end{align}

Similarly with \cite{LU2022224}, we will find the eigendecomposition of $(2M-1) \mathbb{I}_M-\mathbf{1}_M \mathbf{1}_M^{\top}$.
Select $\bp \in \mathbb{R}^M$ with $\bp^\top \mathbf{1}_M = 0$. Then, $\left((2M-1) \mathbb{I}_M-\mathbf{1}_M \mathbf{1}_M^{\top}\right) \bp = (2M-1)\bp -\mathbf{1}_M \mathbf{1}_M^{\top}\bp = (2M-1)\bp$ and so we proved that $(2M-1)$ is an eigenvalue for all the vectors with $\bp^\top \mathbf{1}_M = 0$, \ie\ with multiplicity $M-1$. Additionally, select $\bp \in \mathbb{R}^M$ with $\bp = k\mathbf{1}_M$. Then, $\left((2M-1) \mathbb{I}_M-\mathbf{1}_M \mathbf{1}_M^{\top}\right) \bp = (2M-1)\bp -\mathbf{1}_M \mathbf{1}_M^{\top}\bp = k(2M-1)\mathbf{1}_M - kM \mathbf{1}_M = k(M-1)\mathbf{1}_M = (M-1)\bp$ and so we proved that $(M-1)$ is an eigenvalue for all the vectors with $\bp = k\mathbf{1}_M$, \ie\ with multiplicity $1$. Thus, as in Part I, 
$\|\left((2M-1) \mathbb{I}_M-\mathbf{1}_M \mathbf{1}_M^{\top}\right) \otimes \mathbb{I}_d\|_2 = 2M-1$. Concluding:

\begin{align}
(\bu_{\text {stack }}^{\top} + \bv_{\text {stack }}^{\top})\left(\left( (2M-1) \mathbb{I}_M - \mathbf{1}_M \mathbf{1}_M^{\top}\right) \otimes \mathbb{I}_d\right) \bu_{\text {stack }}
&\stackrel{(e')}{\leq} \|\bu_{\text {stack }} + \bv_{\text {stack }}\|_2 \|\left(\left((2M-1) \mathbb{I}_M-\mathbf{1}_M \mathbf{1}_M^{\top}\right) \otimes \mathbb{I}_d\right)\bu_{\text {stack }}\|_2\notag\\
&\stackrel{(f')}{\leq} \|\bu_{\text {stack }} + \bv_{\text {stack }}\|_2 
\|\left((2M-1) \mathbb{I}_M-\mathbf{1}_M \mathbf{1}_M^{\top}\right) \otimes \mathbb{I}_d \|_2 \|\bu_{\text {stack }}\|_2\notag\\
&\stackrel{(g')}{\leq} \left(\|\bu_{\text {stack }}\|_2 + \|\bv_{\text {stack }}\|_2\right) 
\|\left((2M-1) \mathbb{I}_M-\mathbf{1}_M \mathbf{1}_M^{\top}\right) \notag\\
& \qquad \qquad \qquad \qquad \qquad \qquad \qquad \qquad \otimes \mathbb{I}_d \|_2 \|\bu_{\text {stack }}\|_2\notag\\
&= (2\sqrt{M}) (2M-1) \sqrt{M} = 2(2M-1)M^2 \label{eq:spectral_norm_2},
\end{align}
where again (e') follows from the Cauchy–Schwarz inequality, (f') from the definition of the spectral norm and (g') from the triangle inequality. Now, moving backwards for every inequality (a')-(g') as in Part I\footnote{(g') holds with equality for the same conditions as with (f')} we will obtain the same conditions as in Part I which will prove the desideratum for $\Lgensimclr$.

\noindent\textbf{Part III: $\Lgeninfonceb$.} 
Following the exact same steps as before we obtain: First for the inner summands, 
\begin{align*}
&\sum_{j=1, j \neq i}^M \phi\left(\left(\bu_j-\bv_i\right)^{\top} \bu_i\right)
 \stackrel{(a'')}{\geq}(M-1)
\phi \left(\frac{\sum_{j=1, j \neq i}^M \bu_j^{\top} \bu_i  - (M-1)\bv_i^{\top} \bu_i}{M-1} \right)\\ 
&= (M-1)\phi \left(\frac{\bu^{\top} \bu_i- (M-1)\bv_i^{\top} \bu_i - 1}{M-1}   \right),
\end{align*}
then for the total loss:
\begin{align*}    
\Lgeninfonceb(\bU, \bV) 
&  \stackrel{(b'')}{\geq} \frac{1}{M} \sum_{i=1}^M \psi \left((M-1) \phi \left(\frac{\bu^{\top} \bu_i- (M-1)\bv_i^{\top} \bu_i  - 1}{M-1}\right) \right) \\
& = \frac{1}{M} \sum_{i=1}^M \tilde{\psi}  \left(\frac{\bu^{\top} \bu_i- (M-1)\bv_i^{\top} \bu_i  - 1}{M-1} ;M-1\right)\\
& \stackrel{(c'')}{\geq} \psi \left((M-1) \phi \left(\frac{1}{M} \sum_{i=1}^M\left(\frac{\bu^{\top} \bu_i- (M-1)\bv_i^{\top} \bu_i- 1}{M-1}\right)\right)\right) \\ 
& = \psi \left((M-1) \phi \left(\frac{1}{M(M-1)}\left(\bu^{\top} \bu-(M-1)\sum_{i=1}^M\left(\bv_i^{\top} \bu_i\right)  - M\right)\right)\right)\\
& \stackrel{(d'')}{\geq} \psi \left((M-1) \phi \left(\frac{1}{M(M-1)}\left(\bu^{*\top} \bu^*-(M-1)\sum_{i=1}^M\left(\bv_i^{*\top} \bu^*_i\right)  - M\right)\right)\right),
\end{align*}
and finally for the inner argument optimisation problem:
\begin{align}
\underset{\bU, \bV \in (\mathbb{S}^{d-1})^M}{\argmin}  \frac{1}{M(M-1)}\left({\bu^{\top} \bu}-
(M -1) \sum_{i=1}^M\bv_i^{\top} \bu_i  - M\right)
&= \underset{\bU, \bV \in (\mathbb{S}^{d-1})^M}{\argmax}   (M-1) \sum_{i=1}^M \bv_i^{\top} \bu_i-\|\bu\|^2.
\end{align}
The last part of the proof here is slightly different. In particular, the two terms in the above equation can be maximised independently. For the first term, we have that each summand in the sum can be optimised independently and that:
\begin{align}
     \bv_i^{\top} \bu_i \stackrel{(e'')}{\leq} \|\bv_i\| \|\bu_i\| = 1,
\end{align}
where again we used Cauchy–Schwarz. Now (e'') holds with equality, as before, iff $\bv_i = \bu_i$. For the second term, we need to minimise $\|\bu\|^2$, which evidently happens iff $\bu = \mathbf{0}$. Therefore, we arrived at the same conditions as in Part I and Part II, while the rest of the equalities in (a'')-(d'') while be satisfied as before. This concludes the proof.

\end{proof}

The below corollary follows directly from Theorem \ref{thrm:mb_main_theorem_proof}.
\begin{corollary} 
Consider the following optimisation problem:
\begin{equation}\label{eq: mb_optimisation_theta}
    \underset{\boldsymbol{\theta} \in \Theta}{\argmin} \Lgentotal\left(\fun(\bX),\fun(\bY)\right),
\end{equation}
where $f_{\boldsymbol{\theta}}: \cX \to \mathbb{S}^{d-1}$ is an encoder function parametrised by a tuple of parameters $\boldsymbol{\theta}$ and $\Lgentotal$ is a contrastive loss function defined as in Theorem \ref{thrm:mb_main_theorem_proof}. Suppose that the conditions (1)-(3) set in Theorem \ref{thrm:mb_main_theorem_proof} hold. Further, suppose that (5) $\exists \, \boldsymbol{\theta}^* \in \Theta$ such that $f_{\boldsymbol{\theta}}$ achieves \textbf{simultaneously perfect alignment and perfect uniformity}, \ie\ that:
\begin{equation}\label{eq:mb_minima_theta}
\funstar(\bX) = \funstar(\bY) \quad \text{ and } \quad \funstar(\bX)\  \text{form a regular $M-1$ simplex}.
\end{equation}
Then, the optimisation problem of Eq. \eqref{eq: mb_optimisation_theta} obtains its optimal value for all $\boldsymbol{\theta}^*$ that satisfy Eq. \eqref{eq:mb_minima_theta}. Additionally, if the condition (4) set in Theorem \ref{thrm:mb_kernels_theorem_proof} holds, then all the $\boldsymbol{\theta}^*$ that satisfy Eq. \eqref{eq:mb_minima_theta} are the \textbf{unique} optima.
\end{corollary}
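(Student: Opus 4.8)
The plan is to read the corollary as an immediate consequence of Theorem~\ref{thrm:mb_main_theorem_proof}, obtained by restricting the domain over which $\Lgentotal$ is minimised. For the fixed batch $(\bX,\bY)$, consider the realisation map $R:\Theta\to(\mathbb{S}^{d-1})^M\times(\mathbb{S}^{d-1})^M$ given by $R(\boldsymbol{\theta})=\bigl(\fun(\bX),\fun(\bY)\bigr)$. Since $\Lgentotal\bigl(\fun(\bX),\fun(\bY)\bigr)=(\Lgentotal\circ R)(\boldsymbol{\theta})$, the problem of Eq.~\eqref{eq: mb_optimisation_theta} is precisely the minimisation of $\Lgentotal$ over the image $R(\Theta)$, which is a subset of the feasible set $(\mathbb{S}^{d-1})^M\times(\mathbb{S}^{d-1})^M$ of the problem of Eq.~\eqref{eq: mb_optimisation}.

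First I would establish the lower bound together with its attainment. Conditions (1)--(3) hold by hypothesis, so Theorem~\ref{thrm:mb_main_theorem_proof} gives $\Lgentotal(\bU,\bV)\ge \Lgentotal(\bU^*,\bV^*)$ for all $(\bU,\bV)\in(\mathbb{S}^{d-1})^M\times(\mathbb{S}^{d-1})^M$, where $(\bU^*,\bV^*)$ is any configuration satisfying Eq.~\eqref{eq:mb_minima_proof}; in particular this holds for every $(\bU,\bV)\in R(\Theta)$, i.e.\ for every value of the objective in Eq.~\eqref{eq: mb_optimisation_theta}. Condition (5) supplies a $\boldsymbol{\theta}^*$ with $R(\boldsymbol{\theta}^*)$ satisfying Eq.~\eqref{eq:mb_minima_theta}, which is exactly a configuration of the form of Eq.~\eqref{eq:mb_minima_proof}; hence the bound $\Lgentotal(\bU^*,\bV^*)$ is achieved at $\boldsymbol{\theta}^*$, and indeed at every $\boldsymbol{\theta}^*$ obeying Eq.~\eqref{eq:mb_minima_theta} (all such parameters produce configurations with the same loss, e.g.\ $\psi\bigl((M-1)\phi(-M/(M-1))\bigr)$ for the $\Lgeninfoncea$-family, as computed in the proof of Theorem~\ref{thrm:mb_main_theorem_proof}). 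Therefore the minimum in Eq.~\eqref{eq: mb_optimisation_theta} equals the minimum in Eq.~\eqref{eq: mb_optimisation} and is attained by every such $\boldsymbol{\theta}^*$.

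Next I would handle the uniqueness claim. Under the additional condition (4), Theorem~\ref{thrm:mb_main_theorem_proof} states that the minimisers of $\Lgentotal$ over $(\mathbb{S}^{d-1})^M\times(\mathbb{S}^{d-1})^M$ are \emph{exactly} the configurations satisfying Eq.~\eqref{eq:mb_minima_proof}. Hence if $\boldsymbol{\theta}'$ minimises Eq.~\eqref{eq: mb_optimisation_theta}, then by the previous paragraph it attains the global minimum value of $\Lgentotal$ over the whole sphere-product, so $R(\boldsymbol{\theta}')=\bigl(f_{\boldsymbol{\theta}'}(\bX),f_{\boldsymbol{\theta}'}(\bY)\bigr)$ must lie in that set, i.e.\ $\boldsymbol{\theta}'$ satisfies Eq.~\eqref{eq:mb_minima_theta}; the reverse inclusion was shown above. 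Thus the set of optima of Eq.~\eqref{eq: mb_optimisation_theta} coincides with the set of $\boldsymbol{\theta}^*$ satisfying Eq.~\eqref{eq:mb_minima_theta}, which is what ``unique'' means here (uniqueness of the characterising geometric conditions, not of a single parameter vector).

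The argument is essentially bookkeeping; the only genuinely load-bearing point is condition (5). The inequality $\min_{\boldsymbol{\theta}}\Lgentotal\ge\min_{(\bU,\bV)}\Lgentotal$ is automatic because one optimises over the subset $R(\Theta)$, but to conclude equality one needs the unconstrained optimal configuration to be \emph{realisable} by some encoder, which is exactly what (5) postulates; without it one could only say that any $\boldsymbol{\theta}^*$ realising a regular simplex is optimal \emph{provided such a $\boldsymbol{\theta}^*$ exists}. I would also note that $f_{\boldsymbol{\theta}'}$ in the uniqueness step is the encoder at parameters $\boldsymbol{\theta}'$, written out explicitly since the paper's macro for $f_{\boldsymbol{\theta}^*}$ is tied to the symbol $\boldsymbol{\theta}^*$.
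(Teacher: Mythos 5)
Your proposal is correct and is exactly the argument the paper intends: the corollary is stated as following directly from Theorem~\ref{thrm:mb_main_theorem_proof}, and your restriction-to-the-image-$R(\Theta)$ bookkeeping, with condition (5) supplying realisability of the optimal configuration and condition (4) supplying uniqueness of the minimising configurations, is precisely that implicit reasoning made explicit. Your reading of ``condition (4) set in Theorem~\ref{thrm:mb_kernels_theorem_proof}'' as the strictness condition (4) of Theorem~\ref{thrm:mb_main_theorem_proof} (an apparent cross-reference slip in the paper) is also the right one.
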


\begin{corollary}  The following mini-batch CL loss functions: $\Linfoncea(\cdot, \cdot)$, 
$\Lours(\cdot, \cdot)$, $\Lsimclr(\cdot, \cdot)$, $\Ldcl(\cdot, \cdot)$ have the \textbf{same unique minima } on the unit sphere when $1<M\leq d+1$, \ie\ all the optimal solutions of Eq. \eqref{eq: mb_optimisation_proof} will satisfy the properties of Eq. \eqref{eq:mb_minima_proof}.
\end{corollary}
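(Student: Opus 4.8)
The plan is to obtain the corollary as an immediate consequence of Theorem~\ref{thrm:mb_main_theorem_proof} by recognising each of the four losses as an instance of one of the three generalised families $\Lgeninfoncea,\Lgensimclr,\Lgeninfonceb$ and checking the analytic hypotheses on $\phi,\psi$. Concretely, Eq.~\eqref{eq:special_losses} already records the identifications $\Linfoncea=\Lgeninfoncea(\cdot,\cdot;\,e^{x/\tau},\log(1+x))$, $\Lsimclr=\Lgensimclr(\cdot,\cdot;\,e^{x/\tau},\log(1+x))$, $\Ldcl=\Lgensimclr(\cdot,\cdot;\,e^{x/\tau},\log x)$ and $\Lours=\Lgeninfonceb(\cdot,\cdot;\,e^{x/\tau},\log x)$, so each of the four symmetrised objectives $\Lgentotal$ is one of those treated by Theorem~\ref{thrm:mb_main_theorem_proof}, and it suffices to verify conditions (1),(2) (and (4) for uniqueness) for the two relevant pairs $(\phi,\psi)\in\{(e^{x/\tau},\log(1+x)),\,(e^{x/\tau},\log x)\}$ with $\tau>0$.

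Next I would carry out the elementary verification. For $\phi(x)=e^{x/\tau}$ one has $\phi'=e^{x/\tau}/\tau>0$ and $\phi''=e^{x/\tau}/\tau^2>0$, so $\phi$ is strictly increasing and strictly convex. For $\psi(x)=\log(1+x)$ (its argument $\sum_{j\neq i}\phi(\cdot)$ is always positive) $\psi$ is strictly increasing, and $\tilde\psi(x;\alpha)=\log(1+\alpha e^{x/\tau})$ satisfies $\tilde\psi''(x;\alpha)=\dfrac{\alpha e^{x/\tau}/\tau^2}{(1+\alpha e^{x/\tau})^2}>0$ for $\alpha>0$, hence is strictly convex; thus conditions (1),(2),(4) all hold for $\Linfoncea$ and $\Lsimclr$. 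For $\psi(x)=\log x$ (again with positive argument) $\psi$ is strictly increasing, but $\tilde\psi(x;\alpha)=\log\alpha+x/\tau$ is affine, so conditions (1),(2) hold for $\Ldcl$ and $\Lours$ while the strict-convexity-of-$\tilde\psi$ part of (4) fails.

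Then I would invoke Theorem~\ref{thrm:mb_main_theorem_proof}: under (1)--(3) with $1<M\le d+1$, each of the four objectives attains its minimum exactly on the configurations of Eq.~\eqref{eq:mb_minima_proof}, namely $\bU^*=\bV^*$ forming a regular $M-1$ simplex centred at the origin, and this minimiser set is literally the same for all four losses. To recover \emph{uniqueness} for $\Ldcl$ and $\Lours$ despite $\tilde\psi$ being only affine, I would reread the equality analysis inside the proof of Theorem~\ref{thrm:mb_main_theorem_proof}: strict convexity of $\tilde\psi$ is used \emph{only} in inequality (c) (resp.\ (c$'$), (c$''$)), and when $\tilde\psi$ is affine that inequality is an identity imposing no constraint; the simplex/ETF structure is instead forced entirely by inequality (a) (resp.\ (a$'$), (a$''$)), which needs only strict convexity of $\phi$ --- satisfied by $e^{x/\tau}$ --- while the remaining tight cases (b),(d) need only strict monotonicity of $\psi$ and $\phi$, which also holds. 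Hence every minimiser of each of the four losses satisfies Eq.~\eqref{eq:mb_minima_proof}, every such configuration is a minimiser, and the four losses share the same unique minima.

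The only point I expect to require care is exactly this last one: the uniqueness clause of Theorem~\ref{thrm:mb_main_theorem_proof} is stated under a strict-convexity hypothesis on $\tilde\psi$ that $\Ldcl$ and $\Lours$ violate, so one must argue --- by tracking which of the equality cases in the proof genuinely depend on strict convexity of $\tilde\psi$ as opposed to strict convexity of $\phi$ --- that uniqueness nevertheless survives; everything else reduces to the routine monotonicity/convexity facts about $e^{x/\tau}$, $\log(1+x)$ and $\log x$ stated above.
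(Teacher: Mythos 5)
Your proposal is correct and follows the same route as the paper's own proof: identify the four losses as special cases of $\Lgeninfoncea,\Lgensimclr,\Lgeninfonceb$ via Eq.~\eqref{eq:special_losses} and check the monotonicity/convexity hypotheses of Theorem~\ref{thrm:mb_main_theorem_proof} for $(\phi,\psi)=(e^{x/\tau},\log(1+x))$ and $(e^{x/\tau},\log x)$.

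The one place you go beyond the paper is the uniqueness clause for $\Ldcl$ and $\Lours$, and your extra care there is warranted. The paper's proof simply asserts that ``all Conditions (1)--(4)'' are satisfied after noting that $\log(\alpha e^{x/\tau})=\log\alpha+x/\tau$ is strictly increasing and convex; taken literally, condition (4) as stated in the theorem demands \emph{strict} convexity of $\tilde\psi$, which this affine composition violates. Your resolution --- tracing the equality analysis in the theorem's proof and observing that strict convexity is only invoked for $\phi$ at inequality (a), that (b) and (d) need only strict monotonicity of $\phi,\psi$, and that the equality condition of (c) is automatically met at the candidate optima so strict convexity of $\tilde\psi$ is never actually needed --- is exactly how the theorem's own proof uses ``Condition (4)'' (it cites strict convexity of $\phi$ there), so uniqueness does survive for the $\psi=\log x$ losses. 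In short, your argument is sound and, on this point, more precise than the paper's two-line verification.
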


\begin{proof}
Recall from Eq. \eqref{eq:special_losses} that the above losses are special cases of $\Lgeninfoncea, \Lgensimclr, \Lgeninfonceb$ with $\phi(x) = \exp (x/\tau)$, and $\psi(x) = \log(x)$ or  $\psi(x) = \log(1 + x)$. We know that for $\tau >0$, $\exp (x/\tau)$ is strictly increasing and strictly convex, while $\log(1 + \alpha\exp(x/\tau))$ and $\log(\alpha\exp(x/\tau)) = \log \alpha + x/\tau$ are strictly increasing and convex for $\alpha, \tau >0$. Therefore, all Conditions (1)-(4) of Theorem \ref{thrm:mb_main_theorem} are satisfied.
\end{proof}

\subsection{Expected (True) Contrastive Losses and Asymptotic Behaviour}\label{sec:expectation_proofs}
\begin{lemma}\label{lemma:exp_cl_losses}
The expectations of the following mini-batch contrastive loss functions: $\Linfoncea, \Lsimclr, \Ldcl, \Lours$ are the ones given in Tables \ref{tab:cl_variants} and \ref{tab:our_variants}.
\end{lemma}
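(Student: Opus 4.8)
The plan is to prove all four identities by the same routine expectation computation, the only difference between them being the bookkeeping of which exponentials sit in the denominator; I would spell out $\Linfoncea$ in detail and then indicate the edits for the others. First I would use that the $M$ positive pairs $(\bx_i,\by_i)$ are i.i.d.\ from $\ppos$: the $i$-th summand of $\Linfoncea$ is a fixed measurable function of $(\bx_i,\by_i)$ and of $\{\by_j\}_{j\neq i}$, so by exchangeability of the sample all $M$ summands have the same expectation, and by linearity of expectation it suffices to evaluate the expectation of the $i=1$ term.

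Next I would split $-\log\!\left(\frac{e^{\bu_1^\top\bv_1/\tau}}{\sum_{j=1}^M e^{\bu_1^\top\bv_j/\tau}}\right) = -\bu_1^\top\bv_1/\tau + \log\!\left(\sum_{j=1}^M e^{\bu_1^\top\bv_j/\tau}\right)$. The first summand has expectation $\E_{(\bu,\bv)\sim f_\#\ppos}[-\bu^\top\bv/\tau]$ by the definition of the pushforward. In the second summand, the $j=1$ index contributes the positive view $e^{\bu_1^\top\bv_1/\tau}$, whereas for $j=2,\dots,M$ the variable $\by_j$ is independent of $(\bx_1,\by_1)$ with marginal $\pdata$ (since $\ppos$ has marginals $\pdata$); hence $(\bv_2,\dots,\bv_M)$ is i.i.d.\ $f_\#\pdata$ and independent of $(\bu_1,\bv_1)$, and renaming it $\bV'$ gives $\E_{(\bu,\bv)\sim f_\#\ppos,\,\bV'\sim f_\#\pdata^{M-1}}\!\left[\log\!\left(e^{\bu^\top\bv/\tau}+\sum_{j=1}^{M-1}e^{\bu^\top\bv'_j/\tau}\right)\right]$. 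Combined with the alignment term this is precisely the InfoNCE entry of Table \ref{tab:cl_variants}, and the stated normalising constant $\log(M-1)$ drops out once the sum inside the log is written as $(M-1)$ times its average.

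For $\Lsimclr$ the denominator of the $i$-th term is $e^{\bu_i^\top\bv_i/\tau}+\sum_{j\neq i}e^{\bu_i^\top\bv_j/\tau}+\sum_{j\neq i}e^{\bu_i^\top\bu_j/\tau}$, so the $2M-2$ extra vectors $\{\bu_j,\bv_j\}_{j\neq i}$ form the $M-1$ i.i.d.\ pairs from $f_\#\ppos$ other than the anchor's; stacking them as $\hat{\bU}\sim f_\#\ppos^{M-1}$ and repeating the two steps yields the SimCLR entry with constant $\log(2M-2)$. For $\Ldcl$ the only change is that $e^{\bu_i^\top\bv_i/\tau}$ is absent from the denominator, so the log term no longer depends on $\bv_i$ and its expectation sees $\bu_i$ only through its marginal $f_\#\pdata$, producing the DCL entry. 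For $\Lours$ the denominator is $\sum_{j\neq i}e^{\bu_i^\top\bu_j/\tau}$ with $\{\bu_j\}_{j\neq i}$ i.i.d.\ $f_\#\pdata$ and independent of $\bu_i$ (marginal $f_\#\pdata$), giving the DHEL entry of Table \ref{tab:our_variants} with constant $\log(M-1)$. Finally, the asymptotic rows follow by absorbing the normalising constant so that the bracketed sum becomes an empirical mean of $M-1$ (or $2M-2$) i.i.d.\ terms, applying the strong law of large numbers to send it to $\E_{\bu'\sim f_\#\pdata}[e^{\bu^\top\bu'/\tau}]$ (the leftover $e^{\bu^\top\bv/\tau}/(M-1)$ term, when present, vanishing), and using boundedness of the exponents on $\mathbb{S}^{d-1}$ and dominated convergence to pass the limit through the outer expectation, reaching Eq.\ \eqref{eq:asymptotic} and its DHEL analogue.

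\textbf{Main obstacle.} I do not expect anything deep here; the one place to be careful is the independence-and-marginal bookkeeping — checking that every factor of the denominator other than the anchor's own positive is independent of the anchor pair and has law $f_\#\pdata$ — which uses exactly the i.i.d.\ sampling of the $M$ positive pairs and the assumption that $\ppos$ has marginals equal to $\pdata$; the $\log/\exp$ rearrangements and the limiting argument are routine.
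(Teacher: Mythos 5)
Your proposal is correct and follows essentially the same route as the paper's proof: linearity of expectation plus exchangeability of the $M$ i.i.d.\ positive pairs reduces each loss to a single anchor term, and the independence/marginal bookkeeping (anchor pair $\sim f_\#\ppos$, remaining negatives i.i.d.\ $f_\#\pdata$ or $f_\#\ppos$ depending on the variant) yields exactly the table entries; the paper merely phrases this first for the generic $\phi,\psi$ losses before specialising. The additional limiting argument you sketch (SLLN plus dominated convergence) is not needed for this lemma — it is the content of the separate asymptotic proposition — but it does no harm.
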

\begin{proof}
It is straightforward to see that the expectations of the mini-batch losses are as follows:
\begin{align}
\underset{(\bX, \bY)\overset{\text{i.i.d}}{\sim} \ppos^M}{\E}\left[\Lgeninfoncea(f_{\boldsymbol{\theta}}(\bX), f_{\boldsymbol{\theta}}(\bY))\right]
&= \underset{(\bX, \bY)\overset{\text{i.i.d}}{\sim} \ppos^M}{\E}\left[\frac{1}{M} \sum_{i=1}^M \psi \left(\sum_{j=1, j \neq i}^M \phi\left(\left(f_{\boldsymbol{\theta}}\left(\by_j\right)-f_{\boldsymbol{\theta}}\left(\by_i\right)\right)^{\top} f_{\boldsymbol{\theta}}\left(\bx_i\right)\right)\right)\right]\notag\\
&=\frac{1}{M} \sum_{i=1}^M  \underset{(\bX, \bY)\overset{\text{i.i.d}}{\sim} \ppos^M}{\E}\left[\psi \left(\sum_{j=1, j \neq i}^M \phi\left(\left(f_{\boldsymbol{\theta}}\left(\by_j\right)-f_{\boldsymbol{\theta}}\left(\by_i\right)\right)^{\top} f_{\boldsymbol{\theta}}\left(\bx_i\right)\right)\right)\right]\notag\\
&=\frac{1}{M} \sum_{i=1}^M  \underset{(\bx_i, \by_i)\sim \ppos \atop \{\by_j\}^{M-1}_{j=1}\overset{\text{i.i.d}}{\sim} \pdata}{\E}\left[\psi \left(\sum_{j=1, j \neq i}^M \phi\left(\left(f_{\boldsymbol{\theta}}\left(\by_j\right)-f_{\boldsymbol{\theta}}\left(\by_i\right)\right)^{\top} f_{\boldsymbol{\theta}}\left(\bx_i\right)\right)\right)\right]\notag\\
&=\underset{(\bx, \by)\sim \ppos \atop \{\by_j\}^{M-1}_{j=1}\overset{\text{i.i.d}}{\sim} \pdata}{\E}\left[\psi \left(\sum_{j=1}^{M-1} \phi\left(\left(f_{\boldsymbol{\theta}}\left(\by_j\right)-f_{\boldsymbol{\theta}}\left(\by\right)\right)^{\top} f_{\boldsymbol{\theta}}\left(\bx\right)\right)\right)\right]\label{eq:emb-nce-equality}\\
\underset{(\bX, \bY)\overset{\text{i.i.d}}{\sim} \ppos^M}{\E}\left[\Lgensimclr(f_{\boldsymbol{\theta}}(\bX), f_{\boldsymbol{\theta}}(\bY))\right]&=\underset{(\bx, \by)\sim \ppos \atop \{(\bx_j, \by_j)\}^{M-1}_{j=1}\overset{\text{i.i.d}}{\sim} \ppos}{\E}\Biggl[\psi \Biggl(\sum_{j=1}^{M-1} \phi\left(\left(\fun\left(\by_j\right)-\fun\left(\by\right)\right)^{\top} \fun\left(\bx\right)\right) \notag\\ 
& \qquad \qquad \qquad \qquad \qquad \qquad \qquad +\phi\left(\left(\fun\left(\bx_j\right)-\fun\left(\by\right)\right)^{\top} \fun\left(\bx\right)\right)\Biggl)\Biggl]\label{eq:emb-simclr-equality}\\
\underset{(\bX, \bY)\overset{\text{i.i.d}}{\sim} \ppos^M}{\E}\left[\Lgeninfonceb(f_{\boldsymbol{\theta}}(\bX), f_{\boldsymbol{\theta}}(\bY))\right]
&=\underset{(\bx, \by)\sim \ppos \atop \{\bx_j\}^{M-1}_{j=1}\overset{\text{i.i.d}}{\sim} \pdata}{\E}\left[\psi \left(\sum_{j=1}^{M-1} \phi\left(\left(f_{\boldsymbol{\theta}}\left(\bx_j\right)-f_{\boldsymbol{\theta}}\left(\by\right)\right)^{\top} f_{\boldsymbol{\theta}}\left(\bx\right)\right)\right)\right]\label{eq:emb-dhel-equality}
\end{align}

Expanding Eq. \eqref{eq:emb-nce-equality}, \eqref{eq:emb-simclr-equality} (twice) and \eqref{eq:emb-dhel-equality}  for $\Linfoncea, \Lsimclr, \Ldcl, \Lours$  respectively, we obtain:
\begin{equation}
    \begin{split}
   &\underset{(\bX, \bY)\overset{\text{i.i.d}}{\sim} \ppos^M}{\E}\left[\Linfoncea(\fun(\bX), f_{\boldsymbol{\theta}}(\bY))\right] =
   \underset{(\bu, \bv)\sim \pushfun\ppos}{\E}\left[-\bv^\top \bu/\tau\right] +
  \underset{(\bu, \bv)\sim \pushfun\ppos \atop \bV^{\prime}\overset{\text{i.i.d}}{\sim} \pushfun\pdata^{M-1}}{\E}\left[\log \left(e^{\bv^{\top} \bu/\tau} + \sum_{j=1}^{M-1} e^{\bu^{\top} \bv^{\prime}_j/\tau}\right)\right]\\
    &\underset{(\bX, \bY)\overset{\text{i.i.d}}{\sim} \ppos^M}{\E}\left[\Lsimclr(f_{\boldsymbol{\theta}}(\bX), f_{\boldsymbol{\theta}}(\bY))\right]
   =\underset{(\bu, \bv)\sim \pushfun\ppos}{\E}\left[-\bv^\top \bu/\tau\right] +
   \underset{(\bu, \bv) \sim \pushfun\ppos \atop \hat{\bU}\overset{\text{i.i.d}}{\sim} \pushfun\ppos^{M-1}}{\E}\left[\log \left(e^{\bv^{\top} \bu/\tau} + \sum\limits_{j=1}^{2M-2} e^{\hat{\bu}_j^{\top} \bu}\right)\right]\\
   &\underset{(\bX, \bY)\overset{\text{i.i.d}}{\sim} \ppos^M}{\E}\left[\Ldcl(f_{\boldsymbol{\theta}}(\bX), f_{\boldsymbol{\theta}}(\bY))\right]
   =\underset{(\bu, \bv)\sim \pushfun\ppos}{\E}\left[-\bv^\top \bu/\tau\right] +
   \underset{\bu \sim \pushfun\pdata \atop \hat{\bU}\overset{\text{i.i.d}}{\sim} \pushfun\ppos^{M-1}}{\E}\left[\log \left( \sum\limits_{j=1}^{2M-2} e^{\hat{\bu}_j^{\top} \bu/\tau}\right)\right]\\
&\underset{(\bX, \bY)\overset{\text{i.i.d}}{\sim} \ppos^M}{\E}\left[\Lours(f_{\boldsymbol{\theta}}(\bX), f_{\boldsymbol{\theta}}(\bY))\right]
   = \underset{(\bu, \bv)\sim \pushfun\ppos}{\E}\left[-\bv^\top \bu/\tau\right] +
    \underset{\bu \sim \pushfun\pdata \atop \bU^{\prime}\overset{\text{i.i.d}}{\sim} \pushfun\pdata^{M-1}}{\E}\left[\log \left(\sum_{j=1}^{M-1} e^{\bu^{\top} \bu^{\prime}_j/\tau}\right)\right]
\end{split}
\end{equation}
    
\end{proof}

\begin{proposition}
The expectations of the following batch-level contrastive loss functions: $\Linfoncea(\cdot, \cdot)$, $\Lours(\cdot, \cdot)$, $\Lsimclr(\cdot, \cdot)$, $\Ldcl(\cdot, \cdot)$ have the \textbf{same asymptotic behaviour} when normalised by appropriate normalising constants, \ie\ when $M \to \infty$ we have that:
\begin{equation}
\begin{split}
      \underset{M \to \infty}{\lim}\underset{(\bX, \bY)\overset{\text{i.i.d}}{\sim} \ppos^M}{\E}\left[\Linfoncea(f_{\boldsymbol{\theta}}(\bX), f_{\boldsymbol{\theta}}(\bY))\right] -\log(M-1) 
      &= \underset{M \to \infty}{\lim}\underset{(\bX, \bY)\overset{\text{i.i.d}}{\sim} \ppos^M}{\E}\left[\Lsimclr(f_{\boldsymbol{\theta}}(\bX), f_{\boldsymbol{\theta}}(\bY))\right]-\log(2M-2) \\
      &= \underset{M \to \infty}{\lim}\underset{(\bX, \bY)\overset{\text{i.i.d}}{\sim} \ppos^M}{\E}\left[\Ldcl(f_{\boldsymbol{\theta}}(\bX), f_{\boldsymbol{\theta}}(\bY))\right] - \log(2M-2) \\  
      &=\underset{(\bX, \bY)\overset{\text{i.i.d}}{\sim} \ppos^M}{\E}\left[\Lours(f_{\boldsymbol{\theta}}(\bX), f_{\boldsymbol{\theta}}(\bY))\right] - \log(M-1)\\  
   &= \underset{(\bx', \by')\sim \ppos}{\E}\left[-\fun\left(\by'\right)^\top \fun\left(\bx'\right)\right]\\ & \qquad \qquad \qquad \qquad \qquad+ \underset{\bx'\sim \pdata}{\E}\left[\log \underset{ \bx{\sim} \pdata}{\E}\left(e^{\fun\left(\bx\right)^\top \fun\left(\bx'\right)} 
   \right)\right] 
\end{split}
\end{equation}
\end{proposition}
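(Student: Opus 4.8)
The plan is to start from the closed-form expectations already derived in Lemma~\ref{lemma:exp_cl_losses} (Tables~\ref{tab:cl_variants}, \ref{tab:our_variants}) and to analyse each term separately, since in all four cases the expected loss splits as a common alignment term $\underset{(\bu,\bv)\sim\pushfun\ppos}{\E}[-\bv^\top\bu/\tau]$ plus a uniformity-type term that is a $\log$ of a sum of exponentials. The alignment term carries no $M$-dependence, so the whole argument reduces to showing that each uniformity term, after subtracting the stated normalising constant, converges to $\underset{\bx'\sim\pdata}{\E}\big[\log\underset{\bx\sim\pdata}{\E}(e^{\fun(\bx)^\top\fun(\bx')/\tau})\big]$ as $M\to\infty$.

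\textbf{Key steps.} First I would treat DHEL. Its uniformity term is $\underset{\bu\sim\pushfun\pdata,\,\bU'\sim\pushfun\pdata^{M-1}}{\E}\big[\log\sum_{j=1}^{M-1}e^{\bu^\top\bu'_j/\tau}\big]$; subtracting $\log(M-1)$ turns the inner sum into the empirical average $\frac{1}{M-1}\sum_{j=1}^{M-1}e^{\bu^\top\bu'_j/\tau}$. Conditioned on $\bu$, this is an i.i.d.\ average of the bounded random variable $e^{\bu^\top\bu'/\tau}$ (bounded since inner products on $\mathbb{S}^{d-1}$ lie in $[-1,1]$), which by the strong law of large numbers converges a.s.\ to $\underset{\bx\sim\pdata}{\E}[e^{\fun(\bx)^\top\fun(\bu\text{-point})/\tau}]$, a quantity bounded in $[e^{-1/\tau},e^{1/\tau}]$. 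Applying $\log$, which is continuous and bounded on this interval, and then dominated convergence to pass the limit through the outer expectation over $\bu$, yields exactly the claimed limit. DCL is handled identically: its uniformity term is $\underset{\bu\sim\pushfun\pdata,\,\hat{\bU}\sim\pushfun\ppos^{M-1}}{\E}\big[\log\sum_{j=1}^{2M-2}e^{\hat{\bu}_j^\top\bu/\tau}\big]$ and subtracting $\log(2M-2)$ produces an empirical average over $2M-2$ i.i.d.\ samples from $\pushfun\pdata$ (each marginal of $\pushfun\ppos$ equals $\pushfun\pdata$), so the same SLLN-plus-DCT argument applies. For InfoNCE and SimCLR the only difference is the extra positive term $e^{\bv^\top\bu/\tau}$ inside the sum; after subtracting the normalising constant this contributes $\frac{1}{M-1}e^{\bv^\top\bu/\tau}$ (resp.\ $\frac{1}{2M-2}e^{\bv^\top\bu/\tau}$), which is bounded and vanishes as $M\to\infty$, so it does not affect the limit, and the remaining $M-1$ (resp.\ $2M-2$) terms are again an i.i.d.\ average converging as before.

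\textbf{Main obstacle.} The genuinely delicate point is justifying the interchange of the limit $M\to\infty$ with the outer expectation, i.e.\ upgrading the conditional a.s.\ convergence of the inner averages to convergence of the full expectation. I would resolve this by the boundedness just noted: since all inner products are in $[-1,1]$, the inner empirical average lies in $[e^{-1/\tau},e^{1/\tau}]$ uniformly in $M$ and in the conditioning variable, so after taking $\log$ the integrand is uniformly bounded (by $1/\tau$ in absolute value), and the dominated convergence theorem applies with a constant dominating function. A secondary subtlety is that convergence ``a.s.'' of the average does not immediately give a pointwise-in-$\bu$ statement for every fixed $\bu$, but it holds for almost every $\bu$ under $\pushfun\pdata$ (applying Fubini to the product measure), which is all that dominated convergence requires. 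Once these measure-theoretic details are in place the proposition follows, and I would remark that this is precisely the asymptotic recovered by \citet{wang2020understanding}, establishing that all four variants share it. \qed
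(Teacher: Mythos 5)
Your proposal is correct and takes essentially the same route as the paper's proof: express the expected losses via Lemma~\ref{lemma:exp_cl_losses}, note the alignment term is $M$-free, absorb the normalising constant into an empirical average, apply the strong law of large numbers, pass the $\log$ through by continuity, use the boundedness of $e^{\bu^\top\bu'/\tau}$ (inner products in $[-1,1]$) to invoke dominated convergence, and observe that the lone positive term $\frac{1}{M-1}e^{\bv^\top\bu/\tau}$ (resp.\ $\frac{1}{2M-2}e^{\bv^\top\bu/\tau}$) vanishes. One cosmetic caveat, shared by the paper's own argument: for SimCLR/DCL the $2M-2$ summands are identically distributed but not mutually independent (each positive pair is dependent), so the SLLN should formally be applied to the two halves of the sum separately, each an i.i.d.\ average over $M-1$ samples from $\funsimple_\#\pdata$, which yields the same limit.
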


\begin{proof}
For a fixed $\bx'$, since $\underset{M \to \infty}{\lim}\frac{1}{M-1}e^{\fun\left(\by'\right)^\top \fun\left(\bx'\right)} \underset{M \to \infty}{\lim}\frac{1}{2M-2}e^{\fun\left(\by'\right)^\top \fun\left(\bx'\right)} = 0$, because $e^{\fun\left(\by'\right)^\top \fun\left(\bx'\right)}$ is bounded and since
\begin{align*}
\underset{\bX\sim \pdata^{M-1}}{\E}\left[\frac{1}{M-1}\sum_{j=1}^{M-1} e^{\fun\left(\by_j\right)^{\top} \fun\left(\bx'\right)}\right] &=\underset{\bX\sim \pdata^{M-1}}{\E}\left[\frac{1}{M-1}\sum_{j=1}^{M-1} e^{\fun\left(\bx_j\right)^{\top} \fun\left(\bx'\right)}\right] \\
=  &\underset{\hat{\bX}\sim \ppos^{M-1}}{\E}\left[\frac{1}{2M-2}\sum_{j=1}^{2M-2} e^{\fun\left(\hat{\bx}_j\right)^{\top} \fun\left(\bx'\right)}\right] =  
\underset{\bx\sim \pdata}{\E}\left[ e^{\fun\left(\bx\right)^{\top} \fun\left(\bx'\right)}\right],
\end{align*}
then with probability 1 over the respective sample spaces, due to the strong law of large numbers, it holds that:
\begin{align*}
\underset{M \to \infty}{\lim}\frac{1}{M}e^{\fun\left(\by'\right)^\top \fun\left(\bx'\right)} + \frac{1}{M}\sum_{j=1}^{M-1} e^{\fun\left(\by_j\right)^{\top} \fun\left(\bx'\right)}&=
\underset{M \to \infty}{\lim}\frac{1}{2M-2}e^{\fun\left(\by'\right)^\top \fun\left(\bx'\right)} + \frac{1}{2M-2}\sum_{j=1}^{2M-2} e^{\fun\left(\hat{\bx}_j\right)^{\top} \fun\left(\bx'\right)} \\
&=
\underset{M \to \infty}{\lim} \frac{1}{2M-2}\sum_{j=1}^{2M-2} e^{\fun\left(\hat{\bx}_j\right)^{\top} \fun\left(\bx'\right)}\\
&=
\underset{M \to \infty}{\lim} \frac{1}{M-1}\sum_{j=1}^{M-1} e^{\fun\left(\bx_j\right)^{\top} \fun\left(\bx'\right)} =
\underset{\bx\sim \pdata}{\E}\left[ e^{\fun\left(\bx\right)^{\top} \fun\left(\bx'\right)}\right]. 
\end{align*}
Now the desideratum follows directly using the same steps as in the proof of Theorem 1 in \cite{wang2020understanding}. Briefly,  the same limit holds for the $\log$ (continuous function) of the above quantities due to the Continuous Mapping Theorem, and therefore when taking the limit of each loss variant (after first subtracting the right normalisation constant $M-1$ or $2M-2$), since the quantities inside the expectation are bounded, we can invoke the Dominated Convergence Theorem and switch the limit with the expectation, thus arriving at the desideratum.
\end{proof}

\subsection{Minima of Mini-Batch Kernel Contrastive Losses}\label{sec:mb_kcl_proof}

\begin{theorem}\label{thrm:mb_kernels_theorem_proof}
Consider the following optimisation problem:
\begin{equation}\label{eq: mb_optimisation_kernels}
    \underset{\bU, \bV \in (\mathbb{S}^{d-1})^M}{\argmin} \Lgenkertotal(\bU, \bV),
\end{equation}
with $\Lgenkertotal(\bU, \bV) = \frac{1}{2} \left(\Lgenker(\bU, \bV) + \Lgenker(\bV, \bU) \right)$,
where $\mathbb{S}^{d-1} = \{\bu \in \mathbb{R}^{d}\  | \ \|\bu\|_2=1\}$ a unit sphere of $d$ dimensions, $\bU, \bV$ are tuples of $M$ vectors on the unit sphere and $\Lgenker(\cdot, \cdot)$ is a kernel loss function of the form:
\begin{equation}
    \Lgenker(\bU, \bV) = -\frac{1}{M}\sum_{i=1}^M K_A(\bu_i, \bv_i) + \gamma \frac{1}{M(M-1)}\sum_{i,j=1 \atop i \neq j}^M K_U(\bu_i, \bu_j),
\end{equation}
 where ${K_A(\bx, \by) = \kappa_A(\|\bx - \by\|^2)}$ and ${K_U(\bx, \by) = \kappa_U(\|\bx - \by\|^2)}$ with $\kappa_A, \kappa_U: (0,4] \to \mathbb{R}$, the limits $\underset{x \to 0^+}{\lim}\kappa_A(x)$,  $\underset{x \to 0^+}{\lim}\kappa_U(x)$ exist and are bounded in both cases and $\gamma>0$. 
Further, for case (a) $1<M \leq d + 1$, suppose the following conditions: (1) $k_A$ is \textbf{decreasing} and (2) $k_U$ is \textbf{decreasing and convex}.
Then, the optimisation problem of Eq. \eqref{eq: mb_optimisation} obtains its optimal value $(\bU^*, \bV^*)$ when:
\begin{equation}\label{eq:mb_minima_kernels}
\begin{split}
    \bU^* = \bV^* \quad \text{ and } \quad \bU^*= [\bu^*_1, \dots, \bu^*_M] \  \text{ form a regular $M-1$ simplex centered at the origin.}
\end{split}
\end{equation}
Additionally, (3) if $\kappa_A$ is \textbf{strictly decreasing} and $\kappa_U$ is  \textbf{strictly decreasing and strictly convex} then all the $(\bU^*, \bV^*)$ that satisfy Eq. \eqref{eq:mb_minima} are the \textbf{unique} optima. For case (b) $M = 2d$, suppose again that $k_A$ is \textbf{decreasing} and that (4) $k_U$ is \textbf{completely monotone}. Then Eq. \eqref{eq: mb_optimisation} obtains its optimal value when:
\begin{equation}\label{eq:mb_minima_kernels_cp}
\begin{split}
    \bU^* = \bV^* \quad \text{ and } \quad \bU^*= [\bu^*_1, \dots, \bu^*_M] \  \text{ form a cross-polytope}.
\end{split}
\end{equation}
\end{theorem}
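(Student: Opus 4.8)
\emph{Approach.} The plan is to decouple the objective into an \emph{alignment} part, involving $K_A$ and the only coupling between $\bU$ and $\bV$, and a \emph{uniformity} part, which is a pure hyperspherical-energy problem for $K_U$, solve each separately, and check that the two minimisers are jointly attainable. Using $\|\bx-\by\|^2 = 2-2\bx^\top\by$ for unit vectors and the symmetry $K_A(\bx,\by)=K_A(\by,\bx)$, the symmetric loss is
\begin{equation*}
\Lgenkertotal(\bU,\bV) = -\frac{1}{M}\sum_{i=1}^{M} K_A(\bu_i,\bv_i) + \frac{\gamma}{2M(M-1)}\Bigl(\sum_{i\neq j} K_U(\bu_i,\bu_j) + \sum_{i\neq j} K_U(\bv_i,\bv_j)\Bigr),
\end{equation*}
so $\bU$ and $\bV$ interact \emph{only} through the first sum. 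Since $\kappa_A$ is decreasing, $t\mapsto\kappa_A(2-2t)$ is increasing, hence each summand $-K_A(\bu_i,\bv_i)$ is minimised exactly at $\bu_i^\top\bv_i=1$, i.e.\ $\bu_i=\bv_i$, with value $-\lim_{x\to0^+}\kappa_A(x)$ (finite by hypothesis); this \emph{forces} $\bU=\bV$ when $\kappa_A$ is strictly decreasing and is at least \emph{attainable} in general. It then remains to lower-bound $\tfrac{1}{M(M-1)}\sum_{i\neq j}K_U(\bu_i,\bu_j)$ over $(\mathbb{S}^{d-1})^M$ and to verify the bound is met by the claimed configuration with $\bV=\bU$.

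\emph{Case (a): $1<M\le d+1$.} Here I would give a self-contained argument. For any $\bU\in(\mathbb{S}^{d-1})^M$ the Gram identity gives $\sum_{i\neq j}\bu_i^\top\bu_j = \bigl\|\sum_i\bu_i\bigr\|^2 - M \ge -M$, so the mean off-diagonal inner product is $\ge -\tfrac{1}{M-1}$. Condition (2) makes $g_U(t):=\kappa_U(2-2t)$ convex (an affine composition of a convex function) and increasing ($\kappa_U$ decreasing), so Jensen's inequality over the $M(M-1)$ ordered pairs followed by monotonicity yields
\begin{equation*}
\frac{1}{M(M-1)}\sum_{i\neq j} K_U(\bu_i,\bu_j) \ge g_U\!\left(\frac{1}{M(M-1)}\sum_{i\neq j}\bu_i^\top\bu_j\right) \ge g_U\!\left(-\tfrac{1}{M-1}\right) = \kappa_U\!\left(\tfrac{2M}{M-1}\right),
\end{equation*}
and identically for $\bV$. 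All these equalities can be met simultaneously by taking $\bu_i^\top\bu_j=-\tfrac{1}{M-1}$ for all $i\neq j$ (which also gives $\sum_i\bu_i=\mathbf{0}$), i.e.\ a regular $M-1$ simplex inscribed in $\mathbb{S}^{d-1}$; such a configuration exists iff $M-1\le d$, which is exactly the case-(a) hypothesis. Hence $\Lgenkertotal \ge -\lim_{x\to0^+}\kappa_A(x) + \gamma\,\kappa_U\!\bigl(\tfrac{2M}{M-1}\bigr)$, attained at $\bU=\bV=$ regular simplex, proving Eq.~\eqref{eq:mb_minima_kernels}. Under condition (3), $t\mapsto\kappa_A(2-2t)$ is strictly increasing and $g_U$ strictly increasing and strictly convex, so the alignment equality holds iff $\bu_i=\bv_i$, Jensen's equality iff all off-diagonal products are equal, and the second inequality is tight iff $\sum_i\bu_i=\mathbf{0}$; together these force a regular simplex, giving uniqueness.

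\emph{Case (b): $M=2d$.} The alignment reduction is unchanged. For the energy $\sum_{i\neq j}K_U(\bu_i,\bu_j)$ I would invoke the known result from the hyperspherical-energy literature \cite{borodachov2019discrete} that, when $\kappa_U$ is completely monotone (condition (4)), the $2d$-point configuration on $\mathbb{S}^{d-1}$ minimising this energy is a rotation/relabelling of the cross-polytope $\{\pm\be_1,\dots,\pm\be_d\}$; applying it to $\bU$ and to $\bV$ and adding the minimal alignment value $-\lim_{x\to0^+}\kappa_A(x)$ gives the claimed optimum $\bU=\bV=$ cross-polytope, i.e.\ Eq.~\eqref{eq:mb_minima_kernels_cp}.

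\emph{Main obstacle.} The alignment step and the Jensen/Gram-matrix estimate are routine; the genuinely non-elementary ingredient is the external energy-minimisation input in case (b) (optimality of the cross-polytope for completely monotone potentials), and to a lesser extent the linear-algebra fact that a regular $M-1$ simplex on the sphere exists exactly for $M\le d+1$. The one subtlety to watch is that the alignment minimiser and the uniformity minimiser be \emph{jointly} realisable; here this is immediate, since minimising the alignment term only pins $\bV=\bU$ and leaves the common configuration free to be the simplex (resp.\ cross-polytope).
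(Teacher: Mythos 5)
Your proof is correct, and it follows the same overall architecture as the paper's: split $\Lgenkertotal$ into the alignment term (the only place $\bU$ and $\bV$ interact) and the two hyperspherical-energy terms, minimise each separately using monotonicity of $\kappa_A$ for alignment, and observe that the two sets of minimisers intersect since the alignment constraint only forces $\bV=\bU$. The difference lies in how the uniformity minimisation is handled in case (a): the paper simply invokes Theorem 1 of \cite{LiuYWS23} for the existence of the simplex minimiser and Theorem 1 of \cite{liu2022generalizing} for uniqueness under strict convexity, whereas you give a self-contained Gram-identity-plus-Jensen argument ($\sum_{i\neq j}\bu_i^\top\bu_j=\|\sum_i\bu_i\|^2-M\geq -M$, then convexity and monotonicity of $t\mapsto\kappa_U(2-2t)$), which also hands you the equality-case analysis for uniqueness directly; this is essentially the same calculation the paper carries out by hand for the InfoNCE-type losses in Theorem B.1, so your route makes case (a) elementary at the cost of a slightly longer write-up, while the paper's citation-based route is shorter but externalises the key lemma. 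For case (b) you and the paper rely on the same external input from \cite{borodachov2019discrete} (optimality of the cross-polytope for completely monotone potentials; the paper phrases it via absolute monotonicity of $\kappa_U(2-2t)$ on $[-1,1)$, which it notes is equivalent). One small technicality that both you and the paper gloss over, worth a sentence if you polish this: $\kappa_A,\kappa_U$ are only defined on $(0,4]$, so the diagonal values $K(\bu,\bu)$ used at the optimum (and the point $t=1$ in your Jensen step when two vectors coincide) must be understood via the extension by $\lim_{x\to 0^+}\kappa(x)$, which exists by hypothesis and preserves monotonicity and convexity.
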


\begin{proof}
Our strategy in this proof will be to analyse the two terms independently and show that it is possible to simultaneously attain their minima, \ie\ for the same input arguments. Let us start with the first term.

\noindent\textbf{Part I: Alignment term $-\frac{1}{M}\sum_{i=1}^M K_A(\bu_i, \bv_i)$.}

Observe that in this term, the summands are independent of each other, so we can optimise each of them independently. Let $(\bu_i^*, \bv_i^*)$ be a minimiser for $-K_A(\bu_i, \bv_i) = - \kappa_A(\|\bu_i - \bv_i\|^2)$. Therefore,
\begin{align*}
    - \kappa_A(\|\bu_i - \bv_i\|^2) \geq - \kappa_A(\|\bu_i^* - \bv_i^*\|^2) \Leftrightarrow
    \kappa_A(\|\bu_i - \bv_i\|^2) {\leq} \kappa_A(\|\bu_i^* - \bv_i^*\|^2)
    \stackrel{(a)}{\Leftrightarrow}
    \|\bu_i - \bv_i\|^2 \geq \|\bu_i^* - \bv_i^*\|^2,
\end{align*}
where \textit{(a) follows from the fact that $\kappa_A$ is decreasing}, and thus:
\begin{equation*}
        \underset{\bu_i, \bv_i \in \mathbb{S}^{d-1}}{\argmin}-K_A(\bu_i, \bv_i) 
    \supseteq
  \underset{\bu_i, \bv_i \in \mathbb{S}^{d-1}}{\argmin}\|\bu_i - \bv_i\|^2 =\{(\bu_i^*, \bv_i^*) \mid \bu^*_i = \bv^*_i\}.
\end{equation*}
So, since the above holds $\forall i \in \{1,\dots, M\}$ we obtain  
$-K_A(\bu_i, \bv_i) \geq -K_A(\bu_i^*, \bv_i^*) \Leftrightarrow -\frac{1}{M}\sum_{i=1}^M K_A(\bu_i, \bv_i) \geq -\frac{1}{M}\sum_{i=1}^M K_A(\bu_i^*,  \bv_i^*)$, $\forall \, \bU, \bV \in (\mathbb{S}^{d-1})^M$ and thus:
\begin{equation*}
\underset{\bU, \bV \in (\mathbb{S}^{d-1})^M}{\argmin} -\frac{1}{M}\sum_{i=1}^M K_A(\bu_i, \bv_i)\supseteq
  \{(\bU^*, \bV^*) \mid \bu^*_i = \bv^*_i, \forall i \in \{1,\dots, M\} \}.
\end{equation*}

\noindent\textbf{Part II: Uniformity term $ \frac{\gamma}{M(M-1)}\sum_{i,j=1 \atop i \neq j}^M K_U(\bu_i, \bu_j)$.}

First, observe that optimising the second term depends only on $\bU$. Further, note that finding its minimiser is a classical hyperspherical energy minimisation problem, which is straightforward when $\kappa_U$ is convex and decreasing and $1<M\leq d+1$. We can invoke Theorem 1 from \cite{LiuYWS23}, which asserts that if the aforementioned conditions hold, then:
\begin{equation*}
\underset{\bU, \bV \in (\mathbb{S}^{d-1})^M}{\argmin}\frac{\gamma}{M(M-1)}\sum_{i,j=1 \atop i \neq j}^M K_U(\bu_i, \bu_j) \supseteq
  \{(\bU^*, \bV^*) \mid \bU^* \text{: regular $M-1$ simplex on $\mathbb{S}^{d-1}$ centered at the origin}\}.
\end{equation*}

Similarly, for the case $M=2d$, we can invoke Theorem 5.7.2 \cite{borodachov2019discrete} or Theorem 2 \cite{LiuYWS23}, which imply that:
\begin{equation*}
\underset{\bU, \bV \in (\mathbb{S}^{d-1})^M}{\argmin}\frac{\gamma}{M(M-1)}\sum_{i,j=1 \atop i \neq j}^M K_U(\bu_i, \bu_j) \supseteq
  \{(\bU^*, \bV^*) \mid \bU^* \text{: cross-polytope}\},
\end{equation*}
when the function $\tilde{k}_U(x) = k_U(2-2x)$, \ie\ the corresponding function expressing the kernel w.r.t. the inner product, is\textit{ absolutely monotone} in $[-1,1)$. But \cite{borodachov2019discrete} show that this equivalent to $k_U$ being completely monotone in $(0,4]$ (Condition (4)).

In the intersection of the two sets of minimisers, both terms will be minimised, \ie\ $-\frac{1}{M}\sum_{i=1}^M K_A(\bu_i, \bv_i) \geq -\frac{1}{M}\sum_{i=1}^M K_A(\bu^*_i, \bv^*_i)$ and  $\frac{\gamma}{M(M-1)}\sum_{i,j=1 \atop i \neq j}^M K_U(\bu_i, \bu_j) \geq \frac{\gamma}{M(M-1)}\sum_{i,j=1 \atop i \neq j}^M K_U(\bu^*_i, \bu^*_j)$,  $\forall \, \bU, \bV \in (\mathbb{S}^{d-1})^M$. Therefore, the intersection is a minimiser of the total objective:
\begin{equation*}
 \{(\bU^*, \bV^*) \mid \bU^*=\bV^* \text{: regular $M-1$ simplex on $\mathbb{S}^{d-1}$ centered at the origin}\} \subseteq \underset{\bU, \bV \in (\mathbb{S}^{d-1})^M}{\argmin} \Lgenker(\bU, \bV),   
\end{equation*}
and similarly for the cross-polytope.
It is easy to see that the same set will be a minimiser of $\Lgenker(\bV, \bU)$ and thus it is also a minimiser of $\Lgenkertotal(\bV, \bU)$.

Finally, if $\kappa_A$ is strictly decreasing, then (a) holds with equality only when $\bu_i = \bv_i$, while if $\kappa_U$ is strictly convex and strictly decreasing then, again by Theorem 1 in \cite{liu2022generalizing}, we know that the regular $M-1$ simplex is the only minimiser of the second term, and thus Eq. \eqref{eq:mb_minima} is the unique minimiser of the kernel contrastive loss for $1<M \leq d+1$.

\end{proof}

\subsection{Expected (True) Kernel Contrastive Losses}

\begin{proposition}
The expectation of the mini-batch kernel contrastive loss functions $\Lgenker(\cdot, \cdot)$ is \textbf{independent of the size of the batch} and therefore equal to the asymptotic expected loss. In other words, mini-batch kernel loss is an \textbf{unbiased estimator} of the asymptotic expected loss and in particular, we have that:
\begin{equation}\label{eq:kernel_expected_loss_proof}
\underset{(\bX, \bY) \sim \ppos^M}{\E} \left[\Lgenkertotal\left(\fun\left(\bX\right), \fun\left(\bY\right)\right)\right] =  -\underset{(\bx, \by) \sim \ppos}{\E} \left[K_A\left(\fun\left(\bx\right), \fun\left(\by\right)\right)\right] + \gamma\underset{\bx,  \sim \pdata \atop \bx' \sim \pdata}{\E} \left[K_U\left(\fun\left(\bx\right), \fun\left(\bx'\right)\right)\right].
\end{equation}
If (1) $\kappa_A$ is (strictly) decreasing and if (2) $\exists \, \boldsymbol{\theta}^*$ such that $\mathbb{P}_{(\bx, \by)\sim\ppos}\left[\fun(\bx) = \fun(\by)\right] = 1$, then the set of $\boldsymbol{\theta}^*$ for which (2) holds are (unique) minimisers of the first term of Eq. \eqref{eq:kernel_expected_loss}. Additionally, if (3) -$\kappa^{(1)}_U$ (first derivative) is \textbf{strictly completely monotone} in $(0,4]$, (4) the expectation defined in the l.h.s. of Eq. \eqref{eq:kernel_expected_loss} is finite and (5) $\exists \, \boldsymbol{\theta}^*$ such that the pushforward measure $\pushfun\pdata = U(\mathbb{S}^{d-1})$, then $\boldsymbol{\theta}^*$ is a unique minimiser of the second term of Eq. \eqref{eq:kernel_expected_loss}. Finally, if (6) $\exists \, \boldsymbol{\theta}^*$ such that conditions (2) and (3) can be satisfied simultaneously, then $\boldsymbol{\theta}^*$ is a unique minimiser of Eq. \eqref{eq:kernel_expected_loss}.
\end{proposition}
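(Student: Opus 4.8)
The plan is to decompose the proof into three largely independent parts, mirroring the structure of the mini-batch result (Theorem~\ref{thrm:mb_kernels_theorem_proof}): first establish that the expectation is batch-size independent, then analyse the alignment term, then the uniformity term, and finally combine.

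\emph{Step 1: Batch-size independence of the expectation.} Starting from $\Lgenkertotal(\fun(\bX), \fun(\bY)) = \frac12(\Lgenker(\fun(\bX), \fun(\bY)) + \Lgenker(\fun(\bY), \fun(\bX)))$, I would take expectations over $(\bX, \bY)\sim\ppos^M$ and push the expectation inside both finite sums by linearity. For the alignment sum, each summand $K_A(\fun(\bx_i), \fun(\by_i))$ has expectation $\E_{(\bx,\by)\sim\ppos}[K_A(\fun(\bx), \fun(\by))]$ since $(\bx_i,\by_i)\sim\ppos$; averaging over $i$ kills the $1/M$ and leaves a single term. For the uniformity sum, each off-diagonal summand $K_U(\fun(\bu_i), \fun(\bu_j))$ with $i\neq j$ has expectation $\E_{\bx,\bx'\sim\pdata}[K_U(\fun(\bx), \fun(\bx'))]$, because the marginals of $\ppos$ equal $\pdata$ and $\bx_i, \bx_j$ are independent for $i\neq j$; there are $M(M-1)$ such terms, cancelling the normaliser. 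The symmetrisation adds nothing new since $K_A$ is symmetric in its arguments (being a function of $\|\bx-\by\|^2$) and the uniformity term is already over $\bU$ alone. This yields Eq.~\eqref{eq:kernel_expected_loss_proof} with no residual $M$-dependence, so the mini-batch loss is an unbiased estimator of this asymptotic limit.

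\emph{Step 2: Minimisers of the two terms.} For the alignment term, $-\E_{\ppos}[\kappa_A(\|\fun(\bx)-\fun(\by)\|^2)]$ is minimised by maximising $\kappa_A$ pointwise; since $\kappa_A$ is decreasing, this means minimising $\|\fun(\bx)-\fun(\by)\|^2$, i.e.\ driving it to $0$ a.s., which is exactly condition (2). If $\kappa_A$ is strictly decreasing the minimiser is unique in the sense that $\mathbb{P}[\fun(\bx)=\fun(\by)]=1$ is forced. For the uniformity term $\gamma\,\E_{\bx,\bx'\sim\pdata}[\kappa_U(\|\fun(\bx)-\fun(\bx')\|^2)]$, I would recognise this as the continuous hyperspherical energy of the pushforward measure $\pushfun\pdata$. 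The key external fact is the classical result (stated in the paper's preliminaries, following \cite{borodachov2019discrete} and used in \cite{LiuYWS23}) that if $-\kappa_U'$ is strictly completely monotone on $(0,4]$ then the uniform distribution $U(\mathbb{S}^{d-1})$ is the \emph{unique} minimiser of the continuous energy functional among all Borel probability measures on the sphere; finiteness (condition (4)) is needed for the functional to be well-defined at the candidate. Hence under conditions (3)--(5), any $\boldsymbol{\theta}^*$ with $\pushfun\pdata = U(\mathbb{S}^{d-1})$ uniquely minimises the second term.

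\emph{Step 3: Combining.} Since the first term depends only on the conditional coupling of positive pairs and the second only on the marginal $\pushfun\pdata$, a $\boldsymbol{\theta}^*$ satisfying (2) and (5) simultaneously (condition (6)) minimises each term separately, hence the sum, and uniqueness of each minimiser transfers to the sum: any minimiser of the total must minimise both terms, so must satisfy perfect alignment and $\pushfun\pdata = U(\mathbb{S}^{d-1})$.

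\emph{Main obstacle.} Steps 1 and 2's alignment part are routine bookkeeping. The genuine difficulty is the uniformity minimisation: invoking the uniqueness of $U(\mathbb{S}^{d-1})$ as minimiser of the continuous energy requires the right potential-theoretic hypothesis (strict complete monotonicity of $-\kappa_U'$, equivalently strict positive-definiteness of the associated kernel in the sense of Schoenberg on the sphere) and care that the energy functional is finite and lower semicontinuous so that a minimiser exists and the variational argument applies. Getting these regularity conditions to line up exactly with what the cited HEM results require — and confirming that "strictly completely monotone first derivative" is the correct translation of the sphere-energy uniqueness theorem to the $\|\cdot\|^2$ parametrisation on $(0,4]$ — is where the proof needs to be careful.
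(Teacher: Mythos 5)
Your proposal is correct and follows essentially the same route as the paper's proof: linearity of expectation with the i.i.d.\ structure of $\ppos^M$ for batch-size independence, pointwise monotonicity of $\kappa_A$ for the alignment minimiser, and the classical hyperspherical-energy result (Theorem 6.2.1 of \cite{borodachov2019discrete}, restated as Lemma 2 in \cite{wang2020understanding}) under strict complete monotonicity of $-\kappa_U'$ for uniqueness of $U(\mathbb{S}^{d-1})$, with attainability supplied by the existence assumptions. The only nitpick is that the uniformity term of $\Lgenker(\bV,\bU)$ in the symmetrised loss is a sum over $\bV$, not $\bU$, but since the marginals of $\ppos$ equal $\pdata$ its expectation is identical, so your conclusion stands.
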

\begin{proof}
The first part of the proposition is obvious since:
\begin{align*}
\underset{(\bX, \bY) \sim \ppos^M}{\E} \left[\Lgenker\left(\fun\left(\bX\right), \fun\left(\bY\right)\right)\right] 
&=\underset{(\bX, \bY) \sim \ppos^M}{\E} \Biggl[-\frac{1}{M}\sum_{i=1}^M K_A(\fun(\bx_i), \fun(\by_i))\\ 
& \qquad \qquad \qquad \qquad \qquad \qquad \qquad + \frac{\gamma}{M(M-1)}\sum_{i,j=1 \atop i \neq j}^M K_U(\fun(\bx_i), \fun(\by_j))\Biggl]\\
& =  \underset{(\bX, \bY) \sim \ppos^M}{\E}\left[-\frac{1}{M}\sum_{i=1}^M K_A(\fun(\bx_i), \fun(\by_i))\right]\\
& \qquad \qquad \qquad \qquad \qquad + \underset{(\bX, \bY) \sim \ppos^M}{\E} \left[\frac{\gamma}{M(M-1)}\sum_{i,j=1 \atop i \neq j}^M K_U(\fun(\bx_i), \fun(\by_j))\right]\\
& =  -\frac{1}{M} \sum_{i=1}^M \underset{(\bx_i, \by_i) \sim \ppos}{\E}\left[ K_A(\fun(\bx), \fun(\by))\right]\\
& \qquad \qquad \qquad \qquad \qquad + \frac{\gamma}{M(M-1)}\sum_{i,j=1 \atop i \neq j}^M\underset{\bx_i \sim \pdata \atop \bx_j \sim \pdata}{\E} \left[K_U(\fun(\bx_i), \fun(\bx_j))\right]\\
& =  - \underset{(\bx, \by) \sim \ppos}{\E}\left[ K_A(\fun(\bx), \fun(\by))\right] + \gamma\underset{\bx \sim \pdata \atop \bx' \sim \pdata}{\E} \left[K_U(\fun(\bx), \fun(\bx'))\right]\\
\end{align*}
Regarding the minimiser of the alignment term, we have that for every $(\bx, \by)$:
\begin{align*}
\|\fun(\bx) - \fun(\by)\|^2 \geq 0 = \Leftrightarrow
-\kappa_A\left(\|\fun(\bx) - \fun(\by)\|^2\right) \geq  
-\kappa_A(0),
\end{align*}
 since $\kappa_A$ is decreasing.
It follows that $-\kappa_A(0) \leq  \underset{(\bx, \by) \sim \ppos}{\E}\left[K_A\left(\fun\left(\bx\right), \fun\left(\by\right)\right)\right]$. But, we know that $\|\funstar(\bx) - \funstar(\by)\|^2 = 0$ almost surely and thus also $K_A\left(\funstar\left(\bx\right), \funstar\left(\by\right)\right) = \kappa_A\left(\|\funstar(\bx) - \funstar(\by)\|^2\right) = \kappa_A(0)$ almost surely. Therefore, $ -\underset{(\bx, \by) \sim \ppos}{\E}\left[K_A\left(\funstar\left(\bx\right), \funstar\left(\by\right)\right)\right] \leq  \underset{(\bx, \by) \sim \ppos}{\E}\left[K_A\left(\fun\left(\bx\right), \fun\left(\by\right)\right)\right]$, which proves that $\boldsymbol{\theta}^*$ is a minimiser. If $\kappa_A$ is strictly decreasing (Condition (1)), then equality holds only if the set $(\bx, \by)$ for which $\|\fun(\bx) - \fun(\by)\|^2 > 0$ has measure zero under $\ppos$, \ie\ for $\boldsymbol{\theta} = \boldsymbol{\theta}^*$ (unique minimiser).

Regarding the minimiser of the uniformity term, we can invoke Theorem 6.2.1 from \cite{borodachov2019discrete} (restated as Lemma 2 in \cite{wang2020understanding}), that under conditions (3) and (4) asserts that the unique measure $\mu \in \mathcal{M}(\mathbb{S}^{d-1})$ minimising $\underset{\bu  \sim \mu \atop \bv \sim \mu}{\E} \left[K_U\left(\bu, \bv\right)\right]$ is the uniform measure on the sphere $U(\mathbb{S}^{d-1})$. Then, condition (5) simply guarantees the existence of a parameter $\boldsymbol{\theta}$ so that this measure is attainable. 
\end{proof}

\section{Experimental Details}\label{sec:app_experiments}
In the following section, we provide a detailed description of the experimental setup.

\subsection{Detailed Sampling Process.}\label{sec:sampling}
The distributions of interest $\pdata, \ppos$ throughout all the experiments are formed from the following two-step sampling process. First, we sample a datapoint $\bx_{\text{init}} \in \cX$ from an (unknown) \textit{initial distribution} $p_{\text{init}}$ on $\cX$ (\ie\ the one from which we sample the datapoints in our dataset) and subsequently we independently sample a \textit{transformation operator} ${T: \cX \to \cX}$ from a (usually known) distribution $p_{T}$ on a space of available transformations $\cT$. Then, $\pdata$ is the distribution of the datapoint $T(x_{\text{init}})$ and the p.d.f. is given by $\pdata(\bx) = \int_{T \in \cT}p_{\text{init}}(x) p_T(T) dT$.\footnote{The transformations are usually parameterised by parameters residing in a measurable space. Here we slightly abuse notation and the integration over $T$ implies an integration over the transformation parameters.} Additionally, we sample positive pairs by first sampling a datapoint $\bx_{\text{init}}$ and then transforming it by two independently sampled operators $T_1, T_2$. We define the distribution of positive pairs as the distribution of the tuples $\left(T_1\left(\bx_{\text{init}}\right), T_2\left(\bx_{\text{init}}\right)\right)$ and the p.d.f. is given by $\ppos(\bx, \by) = \int_{T_1, T_2 \in \cT, x_{\text{init}}\in \cX \atop y = T_2(x_{\text{init}}), x = T_1(x_{\text{init}})}p_{\text{init}}(x_{\text{init}}) p_T(T_1)p_T(T_2) dx_{\text{init}}dT_1 dT_2 $. The transformation operators encode the symmetries of the data, \ie\ it is expected that the downstream tasks will be invariant to them.

In practice, batches of data are sampled from a fixed finite dataset of $N>M$ samples $\cD \sim p_{\text{init}}^N$ as follows: $M$ samples are obtained by sampling uniformly at random from the dataset, \ie\ $(\bx_{\text{init},i})_{i=1}^M \sim U(\cD) = \tilde{p}_\text{init}$ and $2M$ transformations $T_{1,i}, T_{2,i}$ are independently sampled from $p_T$, resulting in a batch of $M$ positive pairs $\big((\bx_i, \by_i)\big)_{i=1}^M \sim \ppos^M $, where $\bx_i = T_{1,i}(\bx_{\text{init},i})$, $\by_i = T_{2,i}(\bx_{\text{init},i})$. 

\subsection{Performance Metrics.} \label{sec:metrics}
Below we provide more details on the metrics used in \Cref{fig:properties_temperature}, \Cref{sec:ablations} of the main paper.
\begin{itemize}
    \item \textbf{Alignment}. It estimates the expected L2 distance between a pair of positive samples:
    \begin{equation}
        L_{\text{alignment}}(f_\#\ppos) = \underset{(\bu, \bv) \sim f_\#\ppos}{\E}\left[\|\bu -\bv\|_2^2\right] \approx
        \frac{1}{M}{\sum_{i=1}^M}\|f(\bx_i) -f(\by_i)\|_2^2 = \hat{L}_{\text{alignment}}(f, \bX, \bY)
    \end{equation}
    \item \textbf{Uniformity}. The logarithm of an estimation of the expected pairwise Gaussian potential as in \cite{wang2020understanding}:
    \begin{equation}
    \begin{split}
          E_{\text{uniformity}}(f_\#\pdata;t) &=  \underset{\bu \sim f_\#\pdata \atop \bu' \sim f_\#\pdata}{\E}\left[e^{-t\|\bu -\bu'\|_2^2}\right] \approx
        \frac{1}{M(M-1)}{\sum_{i,j=1 \atop j \neq i }^M}e^{-t\|f(\bx_i) -f(\bx_j)\|_2^2} = \hat{E}_{\text{uniformity}}(f, \bX;t)  \\
        L_{\text{uniformity}}(f_\#\pdata;t) &= \log E_{\text{uniformity}}(f_\#\pdata;t) \approx \hat{L}_{\text{uniformity}}(f, \bX;t) = \log \hat{E}_{\text{uniformity}}(f, \bX;t), 
    \end{split}
    \end{equation}
where the last approximation holds for large $M$ (strong law of 
large numbers and continuous mapping theorem) and $t=2$ as in \cite{wang2020understanding}.
    \item \textbf{Wasserstein distance between similarity distributions}. 
    Our novel metric estimates the \textit{1-Wasserstein distance} $W_1(\qsim, \psim)$,  where
    $\psim$ is the p.d.f of the inner products when $\bu, \bu' \sim U(\mathbb{S}^{d-1})$ and $\qsim$ is the corresponding one when  $\bu, \bu' \sim f_\#\pdata$.
    According to \citet{cho2009inner}, $\psim$ is equal to
$\frac{\Gamma(\frac{d}{2})}{\Gamma(\frac{d-1}{2}) \sqrt{\pi}}\left(\sqrt{1-s^{2}}\right)^{d-3}$ for $s \in (-1,1)$ and $0$ elsewhere.
We chose 1-Wasserstein distance because it can be easily calculated by the following formula
\begin{equation}
    W_1(\qsim, \psim) = \int_{-\infty}^{+\infty}|F_{\qsim}(s) - F_{\psim}(s)|ds,
\end{equation} where $F_{\psim}, F_{\qsim}$ are the c.d.fs corresponding to $\psim, \qsim$ respectively. Therefore, one can estimate the latter from samples and approximate the integral numerically. In our implementation, we use the method
\texttt{scipy.stats.wasserstein\_distance} from the SciPy library \cite{2020SciPy}, which implements precisely the aforementioned process.

To understand the connection between this metric and the uniformity metric used in \cite{wang2020understanding}, we will use an equivalent definition of 1-Wasserstein, using the Kantorovich-Rubinstein dual (see \cite{peyre2019computational} for more details): $W_1(\qsim, \psim) = \frac{1}{K}\underset{\|g\|_{\text{Lip}}\leq K}{\sup} \underset{s \sim \qsim}{\E}\left[g\left(s\right)\right] - \underset{s \sim \psim}{\E}\left[g\left(s\right)\right]$, where $g$ continuous, $g: \cS \to \mathbb{R}$ and $\|g\|_{\text{Lip}}\leq K$ means that the Lipschitz constant of $g$ is at most $K$. The domain $\cS$ in our case is the interval $[-1,1]$ Now revisiting the definition of the uniformity metric (ignoring the logarithm) we get:

\begin{align*}
    E_{\text{uniformity}}(f_\#\pdata;t) =
\underset{\bu \sim f_\#\pdata \atop \bu' \sim f_\#\pdata}{\E}\left[e^{-t(2 -2\bu^\top \bu')}\right] 
= 
\underset{\bu \sim f_\#\pdata \atop \bu' \sim f_\#\pdata}{\E}\left[e^{-2t + 2t\bu^\top \bu'}\right] 
=
\underset{s \sim \qsim}{\E}\left[e^{-2t + 2ts}\right] 
=
\underset{s \sim \qsim}{\E}\left[g_1(s;t)\right], 
\end{align*}
where $g_1(s;t) = e^{-2t + 2ts}$ and its Lipschitz constant in $[-1,1]$ is at most equal to the maximum value of its derivative, \ie\ $\text{Lip}(g_1(\cdot;t)) = 2te^{-2t + 2t} = 2t$. Therefore:
\begin{align}
E_{\text{uniformity}}(f_\#\pdata;t) - E_{\text{uniformity}}(U(\mathbb{S}^{d-1});t) 
&= 
\underset{s \sim \qsim}{\E}\left[g_1(s;t)\right]  - \underset{s \sim \psim}{\E}\left[g_1(s;t)\right] \notag\\&\leq
\underset{\|g\|_{\text{L}}\leq 2t}{\sup} \underset{s \sim \qsim}{\E}\left[g\left(s\right)\right] - \underset{s \sim \qsim}{\E}\left[g\left(s\right)\right]
=2t W_1(\qsim, \psim) \Leftrightarrow \notag\\
L_{\text{uniformity}}(f_\#\pdata;t)
& \leq
\log\left(2t W_1(\qsim, \psim) + E_{\text{uniformity}}\left(U\left(\mathbb{S}^{d-1}\right);t\right)\right).
\end{align}
Given that $E_{\text{uniformity}}\left(U\left(\mathbb{S}^{d-1}\right);t\right)$ is fixed for a given $t$, the above
implies that $L_{\text{uniformity}}$ underestimates the closeness of $f_\# \pdata$ to a uniform distribution.
 \item \textbf{Rank}. The rank of a given matrix of representations $\text{rank}(\bU) \leq \min(M,d)$, where $\bU \in \mathbb{R}^{M \times d}$. This gives a measurement of the dimensions that are utilised. To account for numerical errors it is computed as follows:
 \begin{equation}
     \widehat{\text{rank}}(\bU) = |\{\sigma_i(\bU) > \epsilon \mid \sigma_i(\bU): i-\text{th singular value of } \bU\}|,
 \end{equation}
 where $\epsilon$ was chosen to $1e-5$.
 \item \textbf{Effective rank}. A smooth approximation of the rank \cite{roy2007effective}, that is less prone to numerical errors and has been found in practice to correlate well with downstream performance \cite{garrido2023rankme}. It is equal to the entropy of the normalised singular values:
  \begin{equation}
  \begin{split}
   \hat{\sigma}_i(\bU)& = \frac{\sigma_i(\bU)}{\sum_{i=1}^{\min(M,d)}|\sigma_i(\bU)|} + \epsilon\\
{\text{eff-rank}}(\bU)& =  -\sum_{i=1}^{\min(M,d)} 
 \hat{\sigma}_i(\bU)\log\hat{\sigma}_i(\bU),    
  \end{split}
 \end{equation}
 where we chose $\epsilon = 1e-7$ as in \cite{garrido2023rankme}.
 
\end{itemize}

\begin{figure*}[!b]
    \centering
   \begin{minipage}{0.24\textwidth}
    \resizebox{\textwidth}{!}{\includegraphics{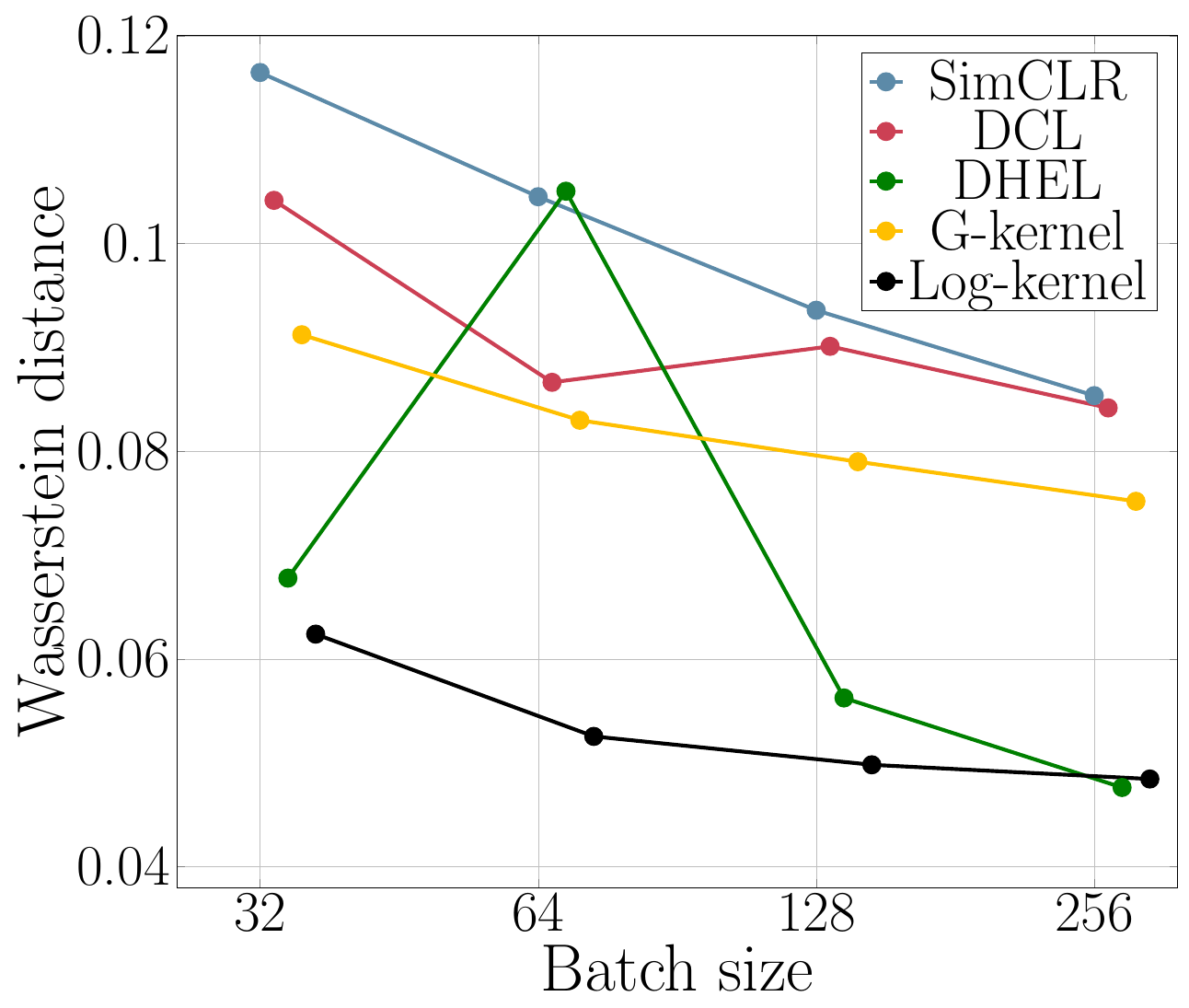}}
    \end{minipage}
    \begin{minipage}{0.24\textwidth}
    \resizebox{\textwidth}{!}{\includegraphics{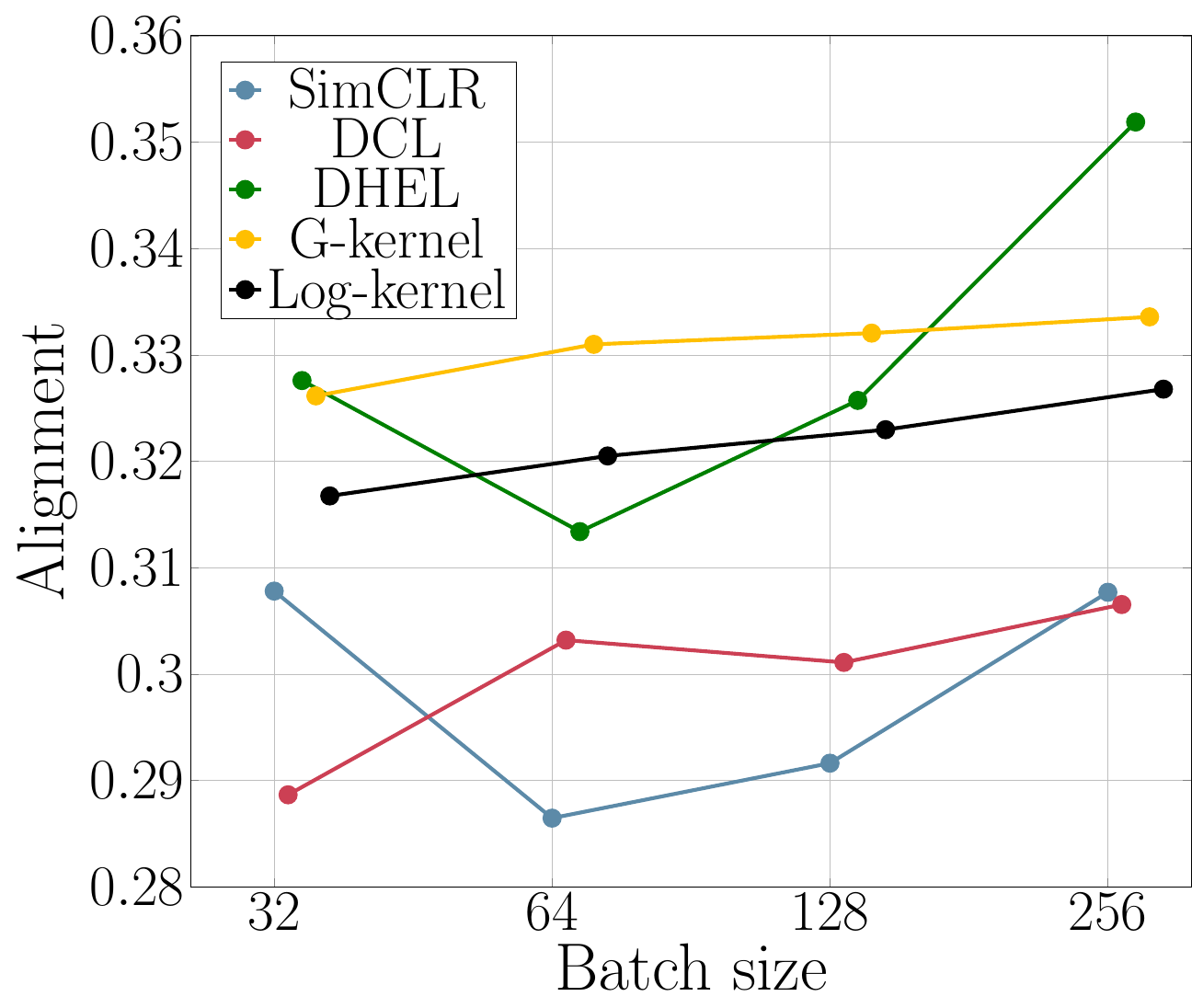}}
    \end{minipage}
    \begin{minipage}{0.24\textwidth}
    \resizebox{\textwidth}{!}{\includegraphics{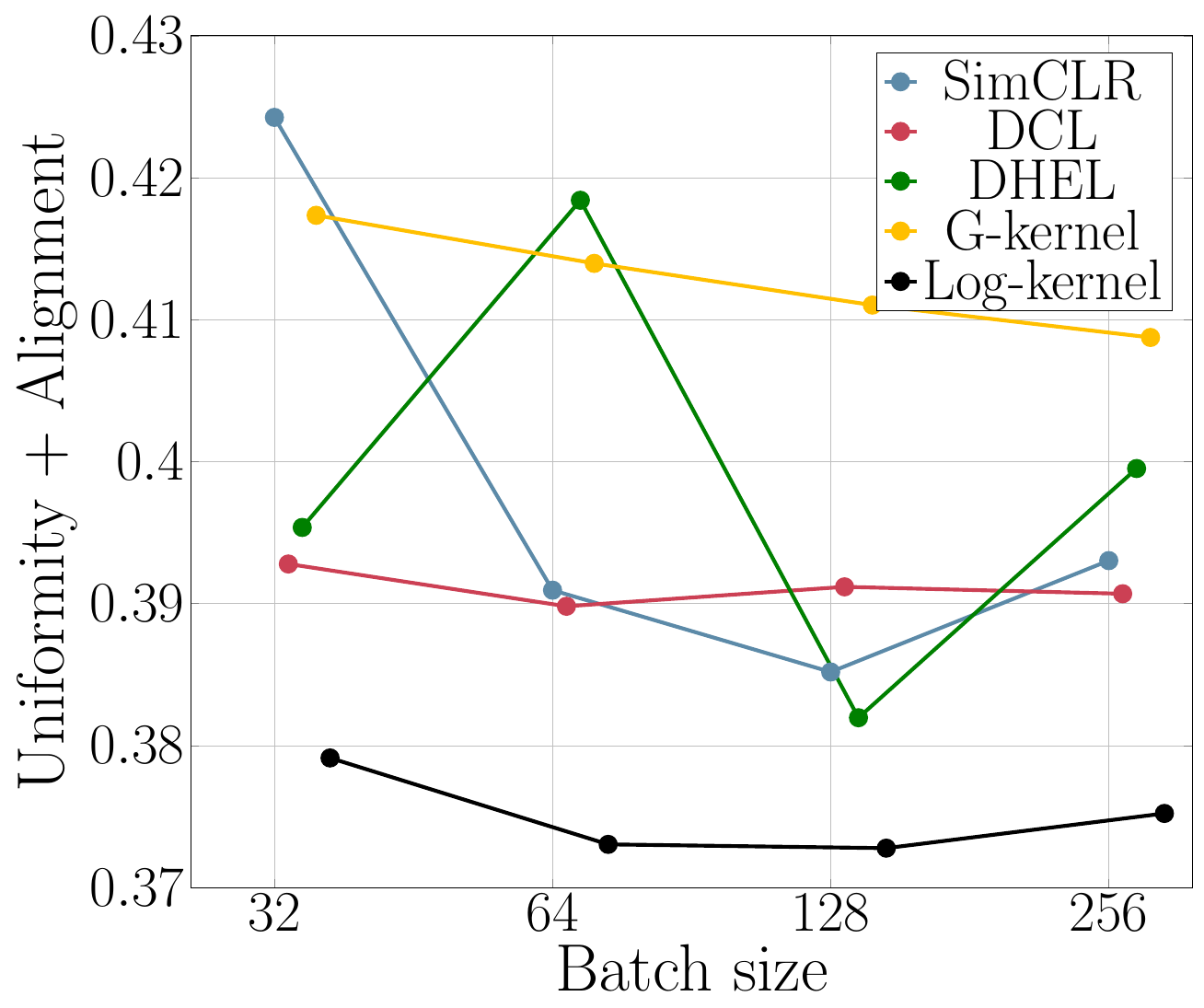}}
    \end{minipage}
    \begin{minipage}{0.24\textwidth}
    \resizebox{\textwidth}{!}{\includegraphics{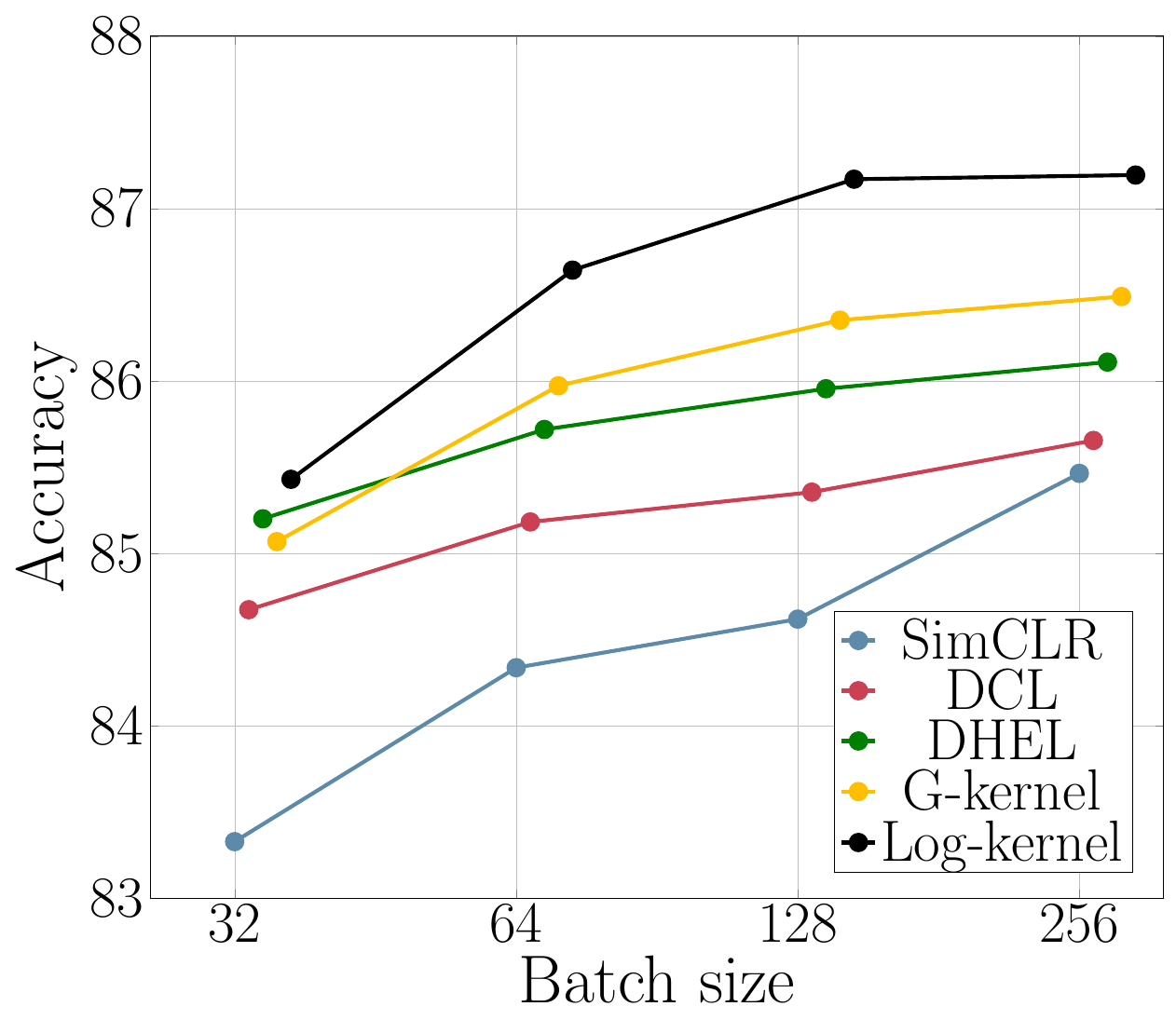}} 
    \end{minipage}
    \begin{minipage}{0.24\textwidth}
    \resizebox{\textwidth}{!}{\includegraphics{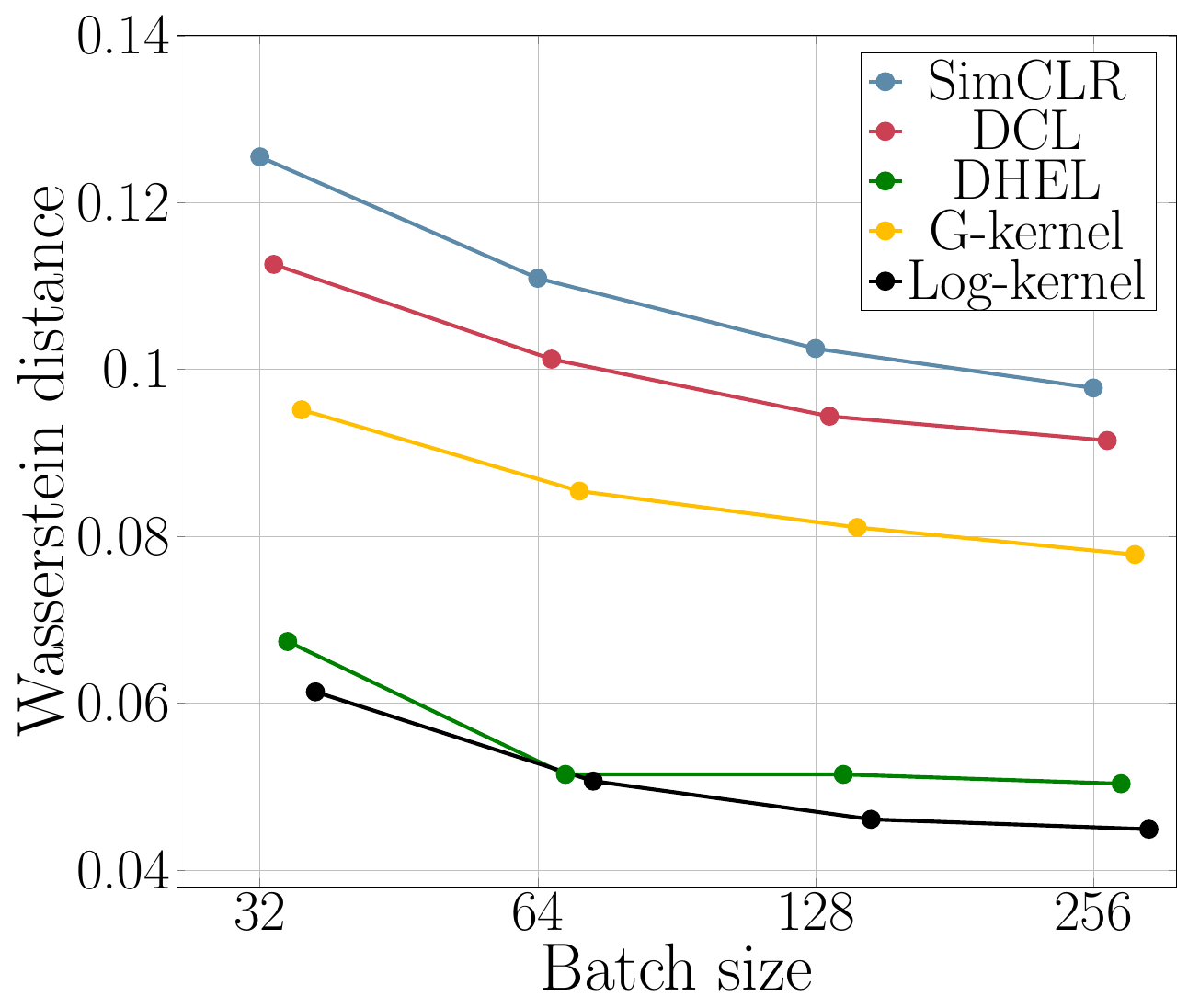}}
    \caption*{(a) Alignment}
    \end{minipage}
    \begin{minipage}{0.24\textwidth}
    \resizebox{\textwidth}{!}{\includegraphics{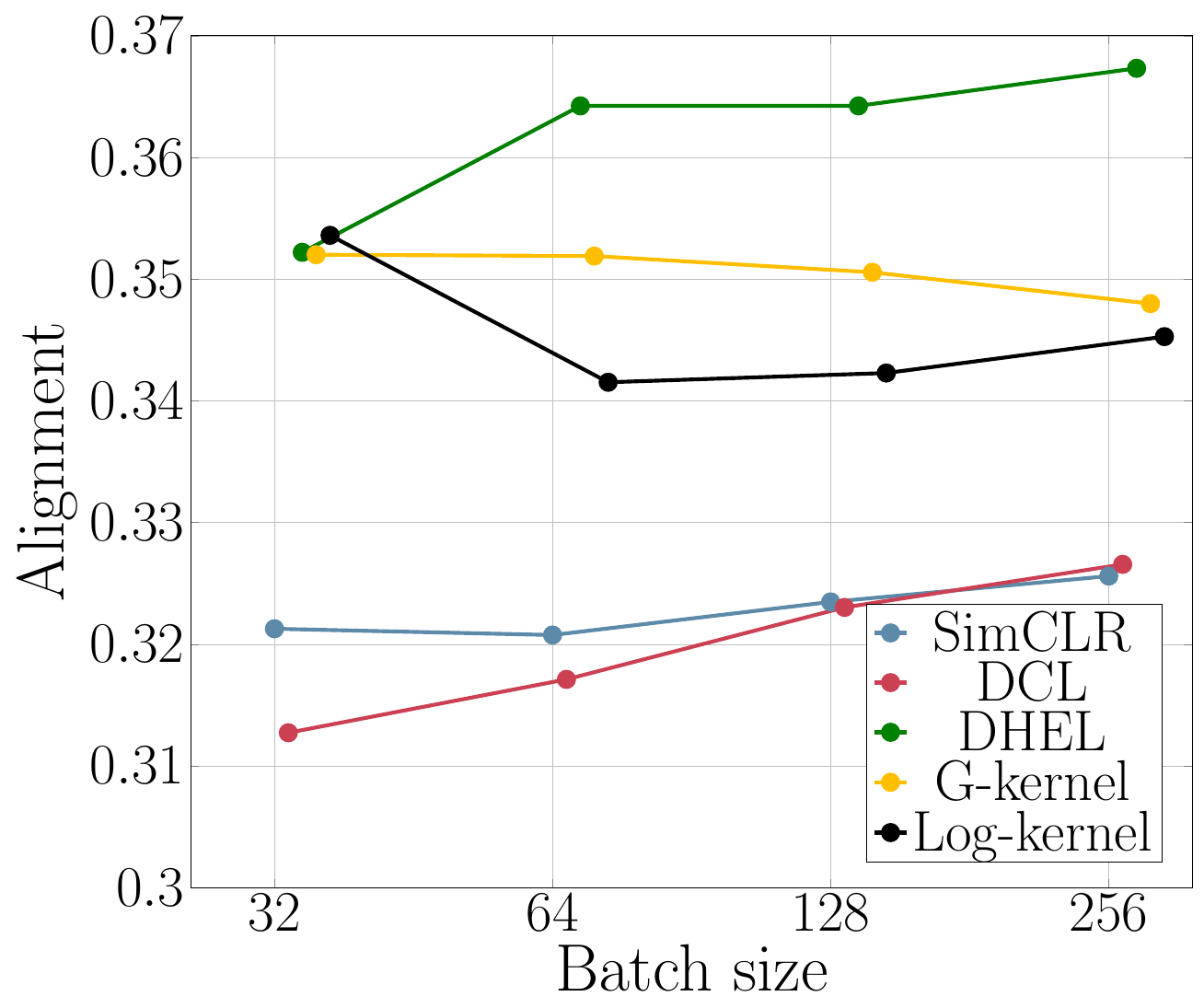}}
    \caption*{(b) Wasserstein distance}
    \end{minipage}
    \begin{minipage}{0.24\textwidth}
    \resizebox{\textwidth}{!}{\includegraphics{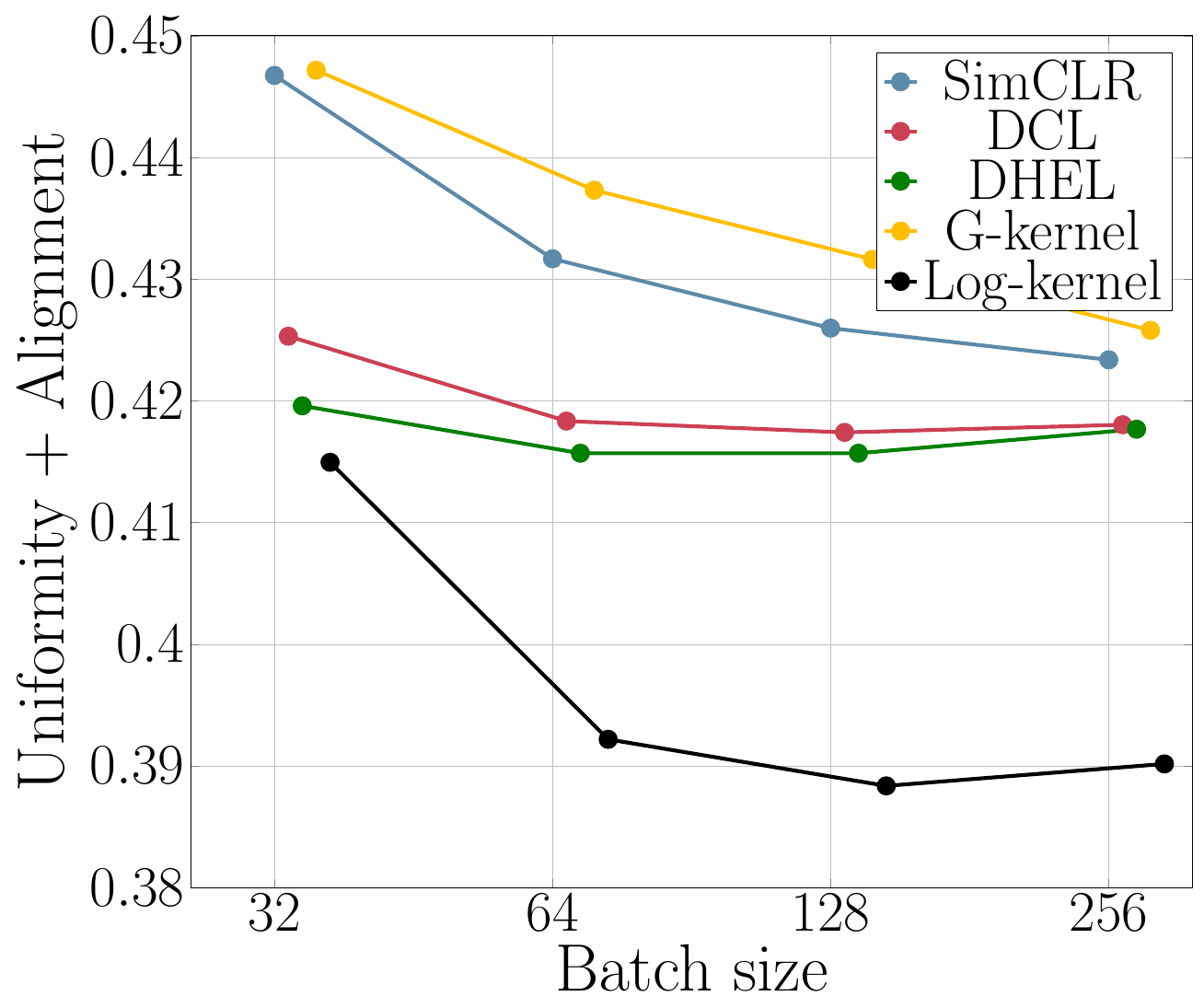}}
    \caption*{(c) Alignment + Wasserstein}
    \end{minipage}
    \begin{minipage}{0.24\textwidth}
    \resizebox{\textwidth}{!}{\includegraphics{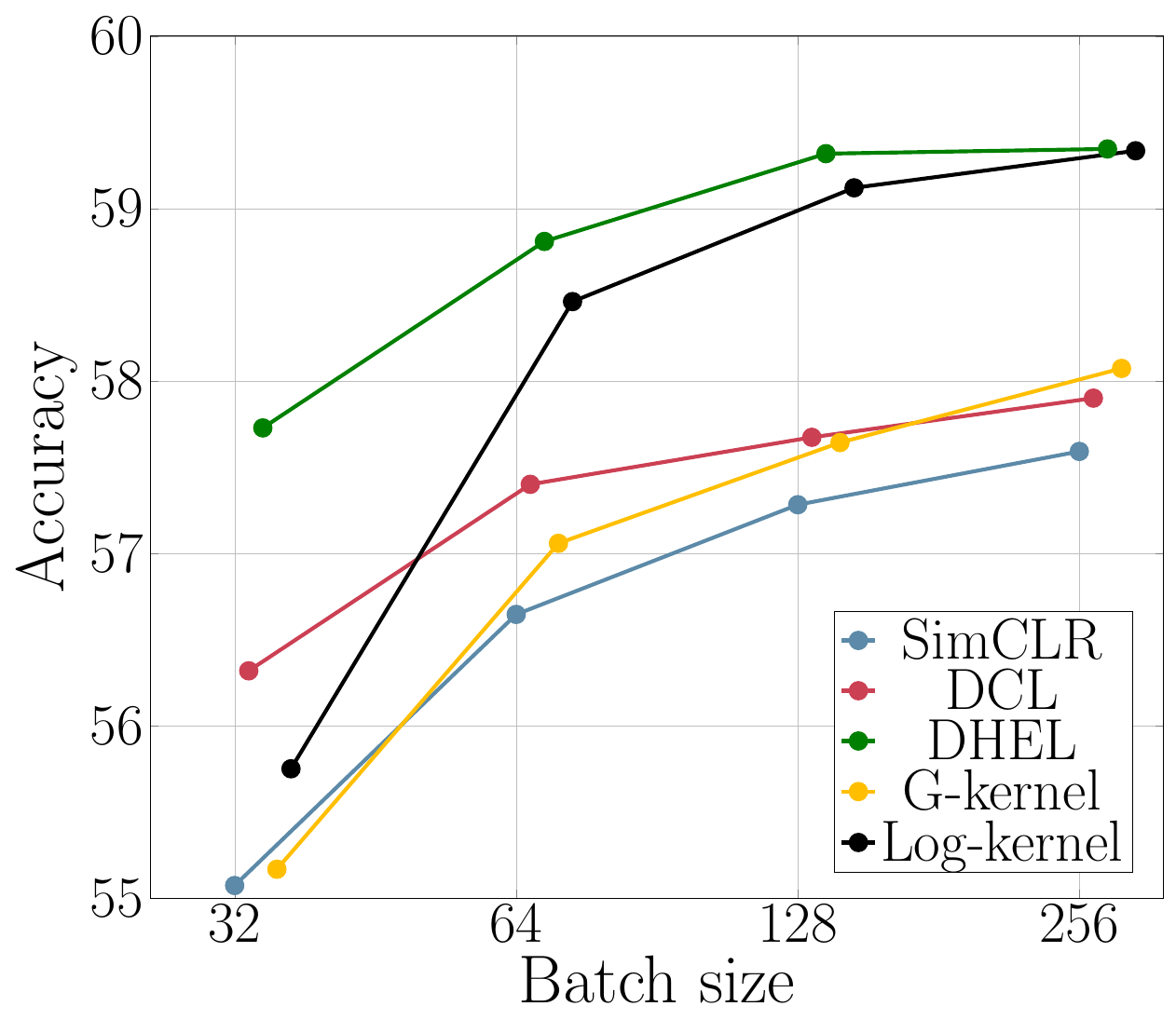}}
    \caption*{(d) Performance}
    \end{minipage}
    \caption{Mean value of properties vs batch size calculated on CIFAR10 (top) \& CIFAR100 (bottom) datasets.}\label{fig:properties_batch}
    \centering
   \begin{minipage}{0.24\textwidth}
    \resizebox{\textwidth}{!}{\includegraphics{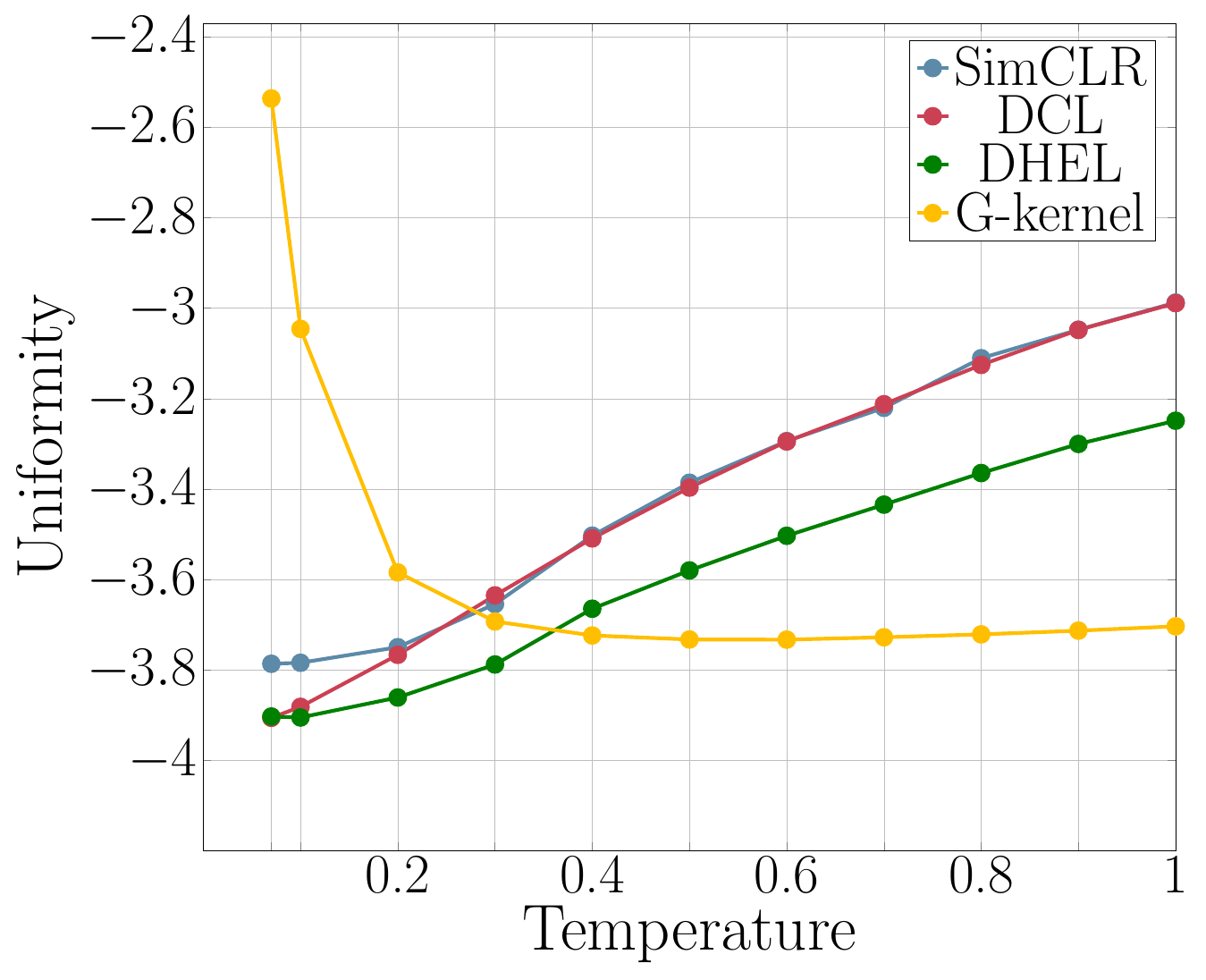}}
    \end{minipage}
    \begin{minipage}{0.24\textwidth}
    \resizebox{\textwidth}{!}{\includegraphics{figures/temperature_properties/cifar10/wasserstein_cifar10.pdf}}
    \end{minipage}
    \begin{minipage}{0.24\textwidth}
    \resizebox{\textwidth}{!}{\includegraphics{figures/temperature_properties/cifar10/rank_cifar10.pdf}}
    \end{minipage}
    \begin{minipage}{0.24\textwidth}
    \resizebox{\textwidth}{!}{\includegraphics{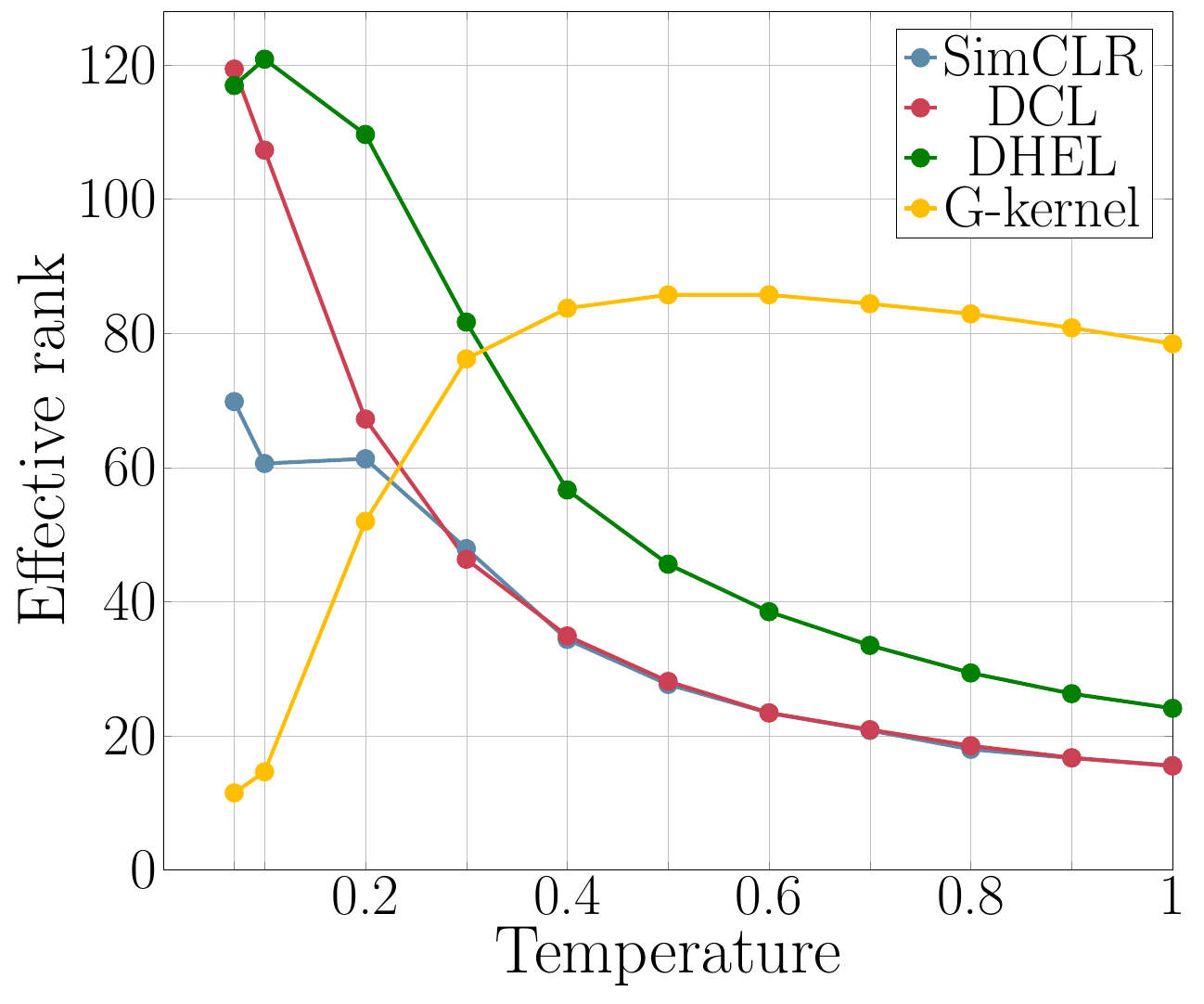}} 
    \end{minipage}
    \begin{minipage}{0.24\textwidth}
    \resizebox{\textwidth}{!}{\includegraphics{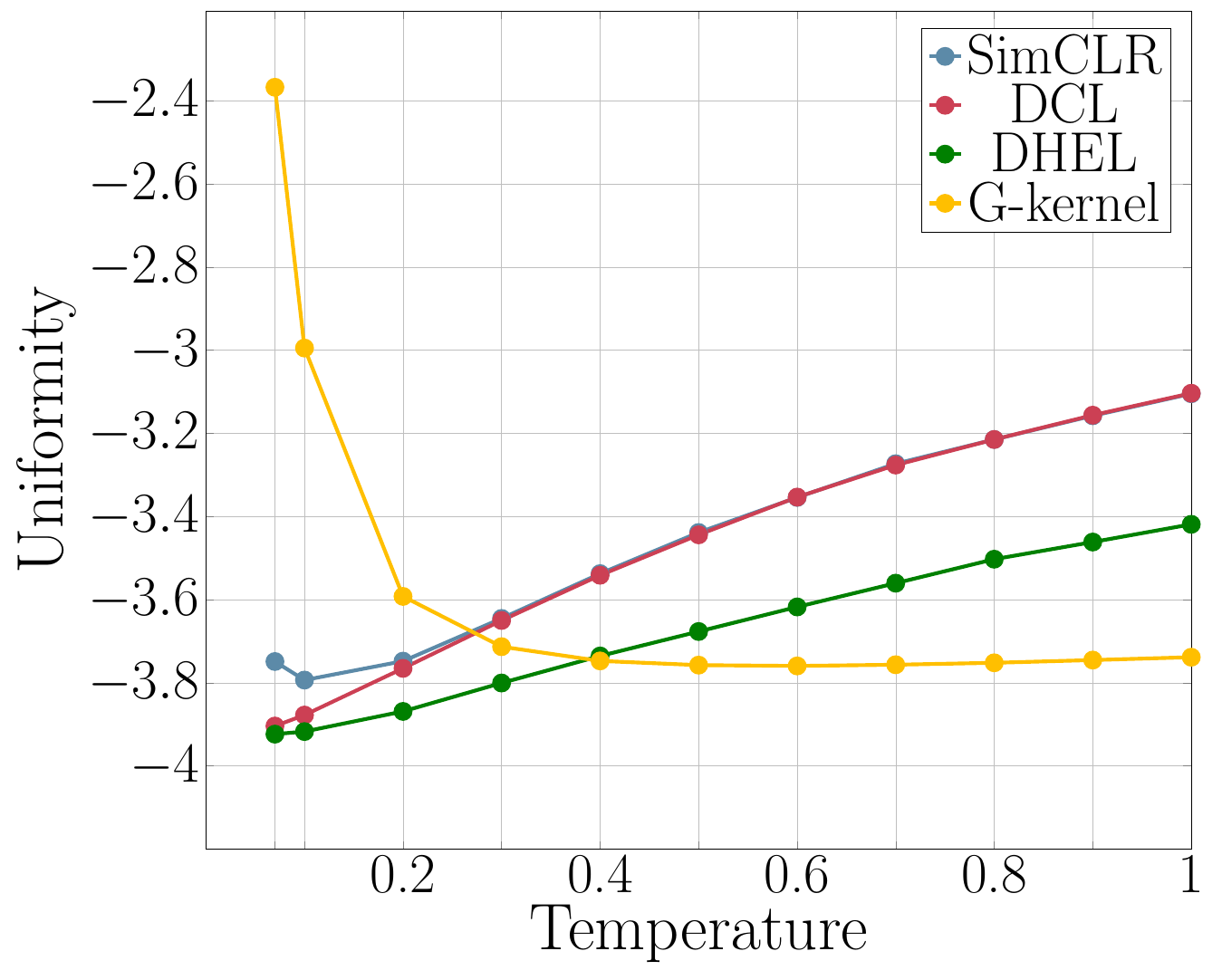}}
    \caption*{(a) Uniformity}
    \end{minipage}
    \begin{minipage}{0.24\textwidth}
    \resizebox{\textwidth}{!}{\includegraphics{figures/temperature_properties/cifar100/wasserstein_cifar100.pdf}}
    \caption*{(b) Wasserstein distance}
    \end{minipage}
    \begin{minipage}{0.24\textwidth}
    \resizebox{\textwidth}{!}{\includegraphics{figures/temperature_properties/cifar100/rank_cifar100.pdf}}
    \caption*{(c) Rank}
    \end{minipage}
    \begin{minipage}{0.24\textwidth}
    \resizebox{\textwidth}{!}{\includegraphics{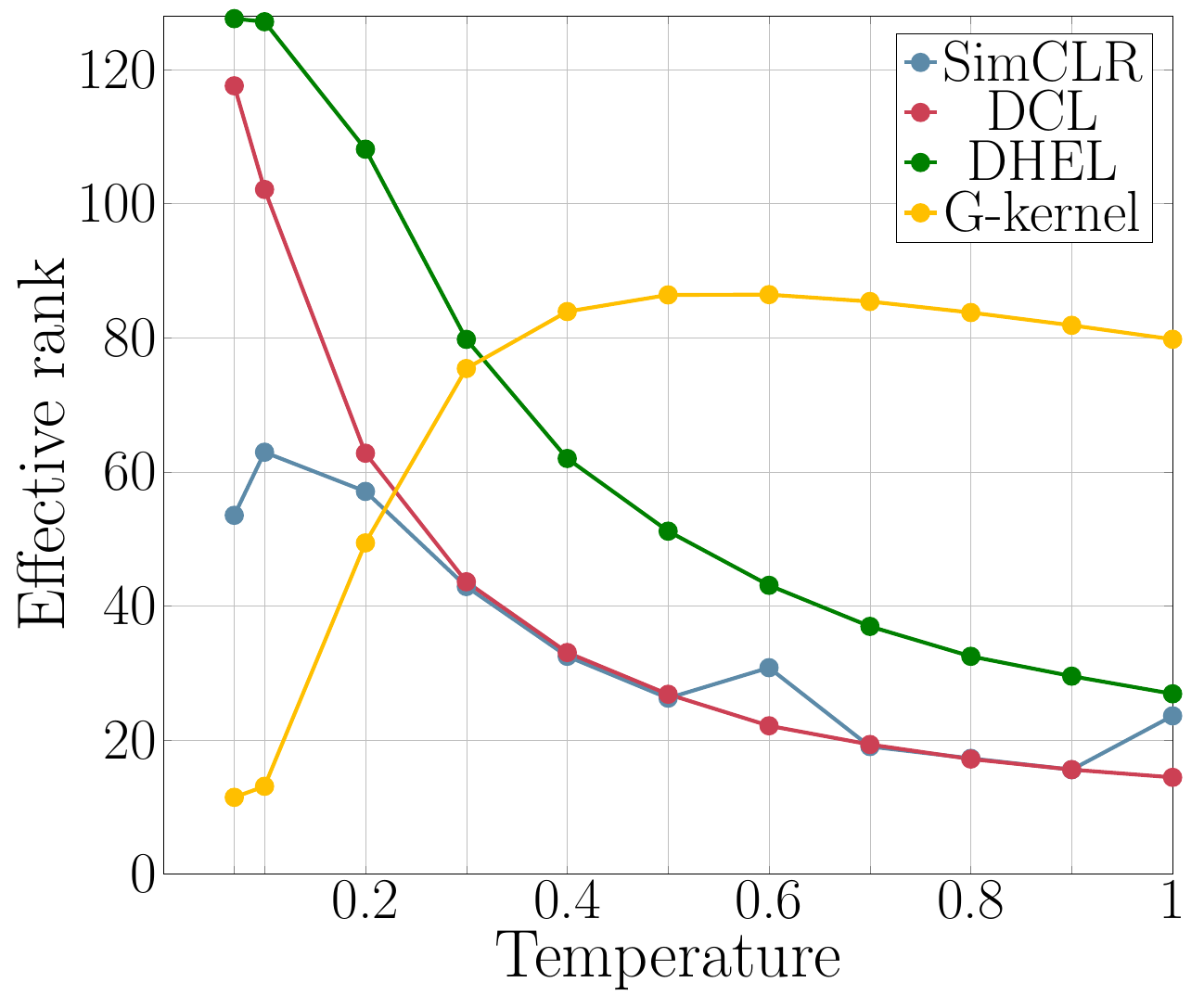}}
    \caption*{(d) Effective rank}
    \end{minipage}
    \caption{Comparison of two uniformity and rank metrics calculated on CIFAR-10 (top) \& CIFAR-100 (bottom) dataset}\label{fig:metrics_comparison}
\end{figure*}

\subsection{Implementation Details}\label{sec:impl_details}
\paragraph{Code} The implementation of the experimental pipeline (networks, augmentations, training, evaluation functions etc) were based on \url{https://github.com/AndrewAtanov/simclr-pytorch.git}, while our implementation of the proposed loss functions and metrics can be found at \url{https://github.com/pakoromilas/DHEL-KCL.git}
\paragraph{CIFAR10, CIFAR100 and STL10} ResNet-18 is employed as the encoder architecture for CIFAR10, CIFAR100, and STL10 datasets. Training spans 200 epochs with the SGD optimizer and the cosine annealing learning rate schedule, using a base learning rate of (batch size) / 256. It's worth mentioning that STL10 includes both the train and unlabeled sets for pre-training the model. Augmentations include resizing, cropping, horizontal flipping, color jittering, and random grayscale conversion.  Linear evaluation is conducted by training a single linear layer on the learned embeddings, with an additional 200 epochs using SGD and a learning rate of 0.1.

\paragraph{ImageNet-100} ResNet-50 is employed as the encoder architecture for ImageNet-100. Training spans 200 epochs with the SGD optimizer and the cosine annealing learning rate schedule, using a base learning rate of 1.4 * (batch size) / 256. We use the same augmentations as in the above datasets and extend them to include gaussian blur. Linear evaluation is conducted by training a single linear layer on the learned embeddings, with an additional 200 epochs using SGD and a learning rate of 0.5.

\paragraph{Hyperparameters}
For the InfoNCE methods we run experiments for temperatures $[0.07, 0.1, 0.2, 0.3, 0.4, 0.5, 0.6, 0.7, 0.8, 0.9, 1]$. For the G-kernel we use the same temperature parameters along  $w = [8, 16, 32]$. For the Log-kernel we use $s = [1, 1.5, 2, 3]$ and  $w = [16, 32, 64]$.

\subsection{Further Results}\label{sec:further_res}

In this section we present further experimental results on (i) the optimisation capabilities of the examined loss functions, as well as (ii) the comparison between different measures of uniformity and rank.

\paragraph{Optimisation capabilities} As demonstrated in \Cref{sec:expectation_proofs}, CL objectives are optimised for both perfect alignment and uniformity. In \Cref{fig:properties_batch} we compare the mean value of such properties for all the examined methods across different batch sizes for the CIFAR10 and CIFAR100 datasets. The methods that decouple uniformity from alignment (DHEL and KCL) achieve superior optimization of uniformity, although they are inferior to baseline methods in alignment optimization. This shortfall may not be problematic, as recent studies \cite{gupta2023structuring, xie2022should} suggest that perfect alignment might not be ideal for downstream performance, since many downstream tasks may not be invariant to the augmentations used to generate positive samples, implying that perfect alignment could be less critical in these scenarios.

Despite alignment and Wasserstein distance observed values being on different scales, both metrics have the same range [0, 1] and monotonicity.Therefore, we examine the balance of the overall metric (alignment + uniformity) to gain insights into how these properties interact. Methods that optimize this combination tend to perform better in downstream tasks. For example, the Log-kernel method performs well for both CIFAR10 and CIFAR100, and the DHEL method excels in CIFAR100. However, the G-kernel achieves the second-best performance in CIFAR10 despite having the poorest optimization. This discrepancy can be attributed to the possibly undesired optimal alignment \cite{gupta2023structuring, xie2022should}, and the tolerance-uniformity dilemma \cite{wang2021understanding}. The pretraining stage can be benefited by designing a proper weighting function that correlates these two properties to downstream task performance.

\paragraph{Uniformity metric vs Wasserstein distance}

In \Cref{fig:metrics_comparison}, we elucidate the difference between the conventional uniformity metric and the Wasserstein distance introduced in our study. Although both metrics are generally consistent, discernible differences emerge at temperatures 0.07, 0.1, and 0.2. The Wasserstein distance more effectively highlights the superior uniformity of DCL and the compromised uniformity of SimCLR. This distinction is also observed in downstream task performance. Despite DCL and SimCLR having the same alignment and uniformity for all temperatures, DCL achieves superior performance at these specific temperatures ([0.07, 0.1, 0.2]) and matches SimCLR's performance at other temperatures, as shown in \Cref{fig:properties_temperature}
. Our metric captures the actual uniformity difference, which reflects to different downstream task performance. Notably, this behavior is observed at optimal uniformity levels, underscoring the discerning power of the introduced metric.

\paragraph{Rank vs Effective rank}
In \Cref{fig:metrics_comparison} we can see that both the rank and the effective rank metrics demonstrate the same trend. In most cases, the rank does correlate with downstream performance as \cite{garrido2023rankme} mention, but note that for small temperatures, this correlation seems to die out, which probably comes from the fact that alignment is quite poor, since the latter is not captured by either rank metrics.


\end{document}